\def\1{\bm{1}}
\newtheorem{theorem}{Theorem}
\newtheorem{assumption}{Assumption}
\newtheorem{lemma}{Lemma}
\newtheorem{definition}{Definition}{}
\newtheorem{remark}{Remark}
\newtheorem{proposition}{Proposition}
\newtheorem{example}{Example}
\title{Efficient Gradient Approximation Method for Constrained Bilevel Optimization}
\author {
    Siyuan Xu\textsuperscript{\rm 1},
    Minghui Zhu\textsuperscript{\rm 1}\thanks{Corresponding author.}
}
\begin{document}

\maketitle

\begin{abstract} 
Bilevel optimization has been developed for many machine learning tasks with large-scale and high-dimensional data. This paper considers a constrained bilevel optimization problem, where the lower-level optimization problem is convex with equality and inequality constraints and the upper-level optimization problem is non-convex. The overall objective function is non-convex and non-differentiable. To solve the problem, we develop a gradient-based approach, called gradient approximation method, which determines the descent direction by computing several representative gradients of the objective function inside a neighborhood of the current estimate. We show that the algorithm asymptotically converges to the set of Clarke stationary points, and demonstrate the efficacy of the algorithm by the experiments on hyperparameter optimization and meta-learning. 

\end{abstract}

\section{Introduction}

A general constrained bilevel optimization problem is formulated as follows:
\begin{align}
\label{opt_intr}
&\min_{x \in \mathbb{R}^{d_x}}  \   \Phi(x)=f\left(x, y^{*}(x)\right) \nonumber \\
& \text { s.t. } \  r\left(x, y^{*}(x)\right) \leq 0; \ s\left(x, y^{*}(x)\right) = 0; \\
&y^{*}(x)=\underset{y \in \mathbb{R}^{d_y}}{\arg\min } \{ g(x, y):  p\left(x, y\right) \leq 0; q\left(x, y\right) = 0\}. \nonumber
\end{align}
The bilevel optimization minimizes the overall objective function $\Phi(x)$ with respect to (w.r.t.) $x$, where $y^{*}(x)$ is the optimal solution of the lower-level optimization problem and parametric in the upper-level decision variable $x$. In this paper, we assume that $y^{*}(x)$ is unique for any $x \in \mathbb{R}^{d_x}$.

Existing methods to solve problem \eqref{opt_intr} can be categorized into two classes: single-level reduction methods \cite{BARD198277, Bard1990, shi2005extended} and descent methods \cite{SAVARD1994265, dempe1998implicit}.
Single-level reduction methods use the KKT conditions to replace the lower-level optimization problem when it is convex.
Then, they reformulate the bilevel optimization problem \eqref{opt_intr} as a single-level constrained optimization problem.
Descent methods aim to find descent directions in which the new point is feasible and meanwhile reduces the objective function.
Paper \cite{SAVARD1994265} computes a descent direction of the objective function by solving a quadratic program.
Paper \cite{dempe1998implicit} applies the gradient of the objective function computed in \cite{kolstad1990derivative, fiacco1990nonlinear} to compute a generalized Clarke Jacobian, and uses a bundle method \cite{schramm1992version} for the optimization. 
When applied to machine learning, bilevel optimization faces additional challenges as the dimensions of decision variables in the upper-level and lower-level problems are high \cite{liu2021investigating}.

Gradient-based methods have been shown to be effective in handling large-scale and high-dimensional data in a variety of machine learning tasks \cite{bottou2008}. 
They have been extended to solve the bilevel optimization problem where there is no constraint in the lower-level optimization.
The methods can be categorized into
the approximate implicit differentiation (AID) based approaches \cite{pedregosa2016hyperparameter,gould2016differentiating,ghadimi2018approximation,grazzi2020iteration} and the iterative differentiation (ITD) approaches \cite{grazzi2020iteration,franceschi2017forward,franceschi2018bilevel,shaban2019truncated,ji2021bilevel}. 
The AID based approaches evaluate the gradients of $y^{*}(x)$ and $\Phi (x)$ based on implicit differentiation \cite{bengio2000gradient}.
The ITD based approaches treat the iterative optimization steps in the lower-level optimization 
as a dynamical system, impose $y^{*}(x)$ as its stationary point, and compute
$\nabla y^{*}(x)$ at each iterative step. 
The gradient-based algorithms have been applied to solve several machine learning tasks,
including meta-learning \cite{franceschi2018bilevel,rajeswaran2019meta,JiLLP20}, 
hyperparameter optimization \cite{pedregosa2016hyperparameter,franceschi2017forward, franceschi2018bilevel}, reinforcement learning \cite{hong2020two,konda2000actor}, and network architecture search \cite{liu2018darts}.
The above methods are limited to unconstrained bilevel optimization and require the objective function to be differentiable.
They cannot be directly applied when constraints are present in the lower-level optimization, as the objective function is non-differentiable.

\textbf{Contributions.  }
In this paper, we consider a special case of problem \eqref{opt_intr} where the upper-level constraints $r$ and $s$ are not included.
In general, the objective function $\Phi$ is nonconvex and non-differentiable, even if the upper-level and lower-level problems are convex and functions $f$, $g$, $p$, $q$ are differentiable \cite{10.1137.0913069,liu2021investigating}. 
Most methods for this bilevel optimization problem are highly complicated and computationally expensive, especially when the dimension of the problem is large \cite{liu2021investigating,10.1007/s10589-015-9795-8}. 
Addressing the challenge, we determine the descent direction by computing several gradients which can represent the gradients of the objective function of all points in a ball, and develop a computationally efficient algorithm with convergence guarantee for the constrained bilevel optimization problem.
The overall contributions are summarized as follows.
(i) Firstly, we derive the conditions under which the lower-level optimal solution $y^{*}(x)$ is continuously differentiable or directional differentiable. In addition, we provide analytical expressions for the gradient of $y^*(x)$ when it is continuously differentiable and the directional derivative of $y^{*}(x)$ when it is directional differentiable.
(ii) Secondly, we propose the gradient approximation method, which applies the Clarke subdifferential approximation of the non-convex and non-differentiable objective function $\Phi$ to the line search method. In particular, a set of derivatives is used to approximate the gradients or directional derivatives on all points in a neighborhood of the current estimate.  Then, the Clarke subdifferential is approximated by the derivatives, and the approximate Clarke subdifferential is employed as the descent direction for line search.
(iii) It is shown that, the Clarke subdifferential approximation errors are small, the line search is always feasible, and the algorithm asymptotically converges to the set of Clarke stationary points. 
(iv) We empirically verify the efficacy of the proposed algorithm by conducting experiments on hyperparameter optimization and meta-learning.

\textbf{Related Works. }
Differentiation of the optimal solution of a constrained optimization problem has been studied for a long time.
Sensitivity analysis of constrained optimization \cite{lemke1985introduction,fiacco1990sensitivity,fiacco1990nonlinear} shows the optimal solution $y^*(x)$ of a convex optimization problem is directional differentiable but may not differentiable at all points.
It implies that the objective function $\Phi(x)$ in problem \eqref{opt_intr} may not be differentiable.
Based on the implicit differentiation of the KKT conditions, the papers also compute $\nabla y^*(x)$ when $y^*$ is differentiable at $x$.
Optnet \cite{amos2017input,amos2017optnet,agrawal2019differentiable} applies the gradient computation to the constrained bilevel optimization, where a deep neural network is included in the upper-level optimization problem.
In particular, the optimal solution $y^*(x)$ serves as a layer in the deep neural network and $\nabla y^*(x)$ is used as the backpropagation gradients to optimize the neural network parameters. 
However, all the above methods do not explicitly consider the non-differentiability of $y^{*}(x)$ and $\Phi(x)$, and cannot guarantee convergence.
Recently, papers \cite{liu2021towards,sow2022constrained} consider that the lower-level optimization problem has simple constraints, such that projection onto the constraint set can be easily computed, and require that the constraint set is bounded. In this paper, we consider inequality and equality constraints, which are more general than those in \cite{liu2021towards,sow2022constrained}.

\textbf{Notations. }
Denote $a>b$ for vectors $a, b \in \mathbb{R}^{n}$, when $a_i>b_i$ for all $1 \leq i \leq n$. Notations $a \geq b$, $a=b$, $a \leq b$, and $a<b$ are defined in an analogous way. Denote the $l_2$ norm of vectors by $\| \cdot \|$.
The directional derivative of a function $f$ at $x$ on the direction $d$ with $\|d\|=1$ is defined as $\nabla_{d} f({x}) \triangleq \lim _{h \rightarrow 0^{+}} \frac{f(x+h d)-f(x)}{h}$. 
A ball centered at $x$ with radius $\epsilon$ is denoted as $\mathcal{B}(x,\epsilon)$.
The complementary set of a set $S$ is denoted as $S^C$.
The distance between the point $x$ and the set $S$ is defined as $d(x, S)  \triangleq \inf \{\|x-a\| \mid a \in S\}$.
The convex hull of $S$ is denoted by $\operatorname{conv} S$. 
For set $S$ and function $f$, we define the image set $f(S) \triangleq \{f(x)\mid x \in S \}$.
For a finite positive integer set $I$ and a vector function $p$, we denote the subvector function $p_I \triangleq [p_{k_1}, \cdots , p_{k_j}, \cdots]^{\top} $ where $k_j \in I$. 

\section{Problem Statement}
\label{section_1}
Consider the constrained bilevel optimization problem: 
\begin{align}
\label{biopt}
&\min_{x \in \mathbb{R}^{d_x}}   \   \Phi(x)=f\left(x, y^{*}(x)\right) \\
&\text{s.t. }
y^{*}(x)=\underset{y \in \mathbb{R}^{d_y}}{\arg\min } \{ g(x, y):  p\left(x, y\right) \leq 0; q\left(x, y\right) = 0\}, \nonumber
\end{align}
where $f, g : \mathbb{R}^{d_x} \times \mathbb{R}^{d_y} \rightarrow \mathbb{R}$; $p : \mathbb{R}^{d_x} \times \mathbb{R}^{d_y} \rightarrow \mathbb{R}^{m}$; $q : \mathbb{R}^{d_x} \times \mathbb{R}^{d_y} \rightarrow \mathbb{R}^{n}$. 
Given $x \in \mathbb{R}^{d_x}$, we denote the lower-level optimization problem in \eqref{biopt}
as $P(x)$.
The feasible set of $P(x)$ is defined as $K\left(x\right) \triangleq \{y \in \mathbb{R}^{d_y}: p\left(x, y\right) \leq 0, q\left(x, y\right) = 0\}$.
Suppose the following assumptions hold. 

\begin{assumption}
\label{a1} The functions $f$, $g$, $p$ and $q$ are twice continuously differentiable.
\end{assumption}

\begin{assumption}
\label{a2}
For all ${x} \in \mathbb{R}^{d_x}$, the function $g(x,y)$ is $\mu$-strongly-convex w.r.t. $y$; $p_j(x,y)$ is convex w.r.t. $y$ for each $j$; $q_i(x,y)$ is affine w.r.t. $y$ for each $i$. 
\end{assumption}

Note that the upper-level objective function $f(x,y)$ and the overall objective function $\Phi(x)$ are non-convex.
The lower-level problem $P(x)$ is convex and its Lagrangian is 
$
\mathcal{L}(y, \lambda, \nu, x) \triangleq g(x, y)+ \lambda^{\top} p(x, y)+ \nu^{\top} q(x, y)
$,
where $(\lambda, \nu)$ are Lagrange multipliers and $\lambda \geq 0$.

\begin{definition}
\label{def1}
Suppose that the KKT conditions hold at ${y}$ for $P(x)$ with the Lagrangian multipliers ${\lambda}$ and ${\nu}$. 
The set of {active inequality constraints} at ${y}$ for $P(x)$ is defined as:
$J(x,{y}) \triangleq \{j: 1 \leq j \leq m, \  p_{j}(x,{y})=0\}$.
An inequality constraint is called inactive if it is not included in $J(x,{y})$ and the set of inactive constraints is denoted as $J(x,y)^C$.
The set of {strictly active inequality constraints} at ${y}$ is defined as:
$J^{+}(x,{y},{\lambda}) \triangleq \{j: j \in J\left(x,{y}\right), \  {\lambda}_{j}>0\}$.
The set of {non-strictly active inequality constraints} at ${y}$ is defined as:
$J^{0}(x,{y},{\lambda}) \triangleq J(x,{y}) \setminus J^{+}(x,{y},{\lambda})$.
Notice that ${\lambda}_j \geq 0$ for $j \in J(x,{y})$ and ${\lambda}_j = 0$ for $j \in J^{0}(x,{y},{\lambda})$.
\end{definition} 

\begin{definition} 
The Linear Independence Constraint Qualification (LICQ) holds at ${y}$ for $P(x)$ if the vectors 
$
\left\{\nabla_y p_{j}\left(x,{y}\right), j \in J\left(x, {y}\right) ; \nabla_y q_{i}\left(x, {y}\right),  1 \leq i \leq n \right\} 
$
are linearly independent. 
\end{definition} 

\begin{assumption}
\label{a3}
Suppose that for all ${x} \in \mathbb{R}^{d_x}$, the solution $y^{*}(x)$ exists for $P\left(x\right)$, and the LICQ holds at $y^{*}(x)$ for $P(x)$.
\end{assumption}

\section{Differentiability and Gradient of $y^{*}(x)$}
\label{section3}
In this section, we provide sufficient conditions under which the lower-level optimal solution $y^{*}(x)$ is continuously differentiable or directional differentiable. We compute the gradient of $y^*(x)$ when it is continuously differentiable and the directional derivative of $y^{*}(x)$ when it is directional differentiable. 
Moreover, we give a necessary condition that $y^{*}(x)$ is not differentiable and illustrate it by a numerical example.

In problem \eqref{biopt},
if the upper-level objective function $f$ and the solution of lower-level problem $y^{*}$ are continuously differentiable, so is $\Phi$, and by the gradient computation of composite functions, we have
\begin{equation}
    \label{compositiond}
    \nabla \Phi(x)=\nabla_{x} f(x, y^{*}(x))+\nabla  y^{*}(x)^{\top} \nabla_{y} f(x, y^{*}(x)).
\end{equation}
It is shown in \cite{domke2012generic} that, when $p$ and $q$ are absent, $y^*$ and $\Phi$ are differentiable under certain assumptions.
The differentiability of $y^*$ and $\Phi$ is used by the AID based approaches in \cite{pedregosa2016hyperparameter,gould2016differentiating,ghadimi2018approximation,grazzi2020iteration,domke2012generic} to approximate $\nabla y^*$  and minimize $\Phi$ by gradient descent.
However, it is not the case as the lower-level problem \eqref{biopt} is constrained. 

Theorem \ref{th0} states the conditions under which $y^{*}(x)$ is directional differentiable.
\begin{theorem}
\label{th0}
Suppose Assumptions \ref{a1}, \ref{a2}, \ref{a3} hold. The following properties hold for any $x$.
\begin{itemize}
\item[(\romannumeral1)] The global minimum $y^{*}(x)$ of $P\left(x\right)$ exists and is unique. The KKT conditions hold at $y^{*}(x)$ with unique Lagrangian multipliers $\lambda(x)$ and $\nu(x)$.
\item[(\romannumeral2)] 
The vector function $z(x) \triangleq [y^{*}(x)^{\top}, \lambda(x)^{\top}, \nu(x)^{\top}]^{\top}$ is continuous and locally Lipschitz. 
The directional derivative of $z(x)$ on any direction exists. 
\end{itemize}
\end{theorem}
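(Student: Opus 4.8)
The plan is to handle the two parts separately, establishing the structural facts in~(\romannumeral1) by convex analysis and then leveraging them, through the stability theory of parametric optimization, to obtain the regularity and directional differentiability in~(\romannumeral2).

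For part~(\romannumeral1), existence of $y^{*}(x)$ is granted directly by Assumption~\ref{a3}. Uniqueness follows because, under Assumption~\ref{a2}, the feasible set $K(x)$ is convex (an intersection of the convex sublevel sets $\{y: p_j(x,y)\le 0\}$ with the affine sets $\{y: q_i(x,y)=0\}$) while $g(x,\cdot)$ is $\mu$-strongly convex, and a strongly convex function has at most one minimizer over a convex set. Since LICQ holds at $y^{*}(x)$ by Assumption~\ref{a3}, it serves as a constraint qualification making the KKT conditions necessary; convexity of $P(x)$ makes them sufficient as well. Finally, LICQ renders the gradients of the active constraints linearly independent, so the stationarity equation $\nabla_y g + \sum_j \lambda_j \nabla_y p_j + \sum_i \nu_i \nabla_y q_i = 0$ (together with complementarity, which forces the multipliers of inactive inequalities to vanish) pins down $(\lambda,\nu)$ uniquely.

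For part~(\romannumeral2), I would first record the key consequence of Assumption~\ref{a2}: because $\nabla^2_{yy} g \succeq \mu I$, each $\nabla^2_{yy} p_j \succeq 0$ with $\lambda_j \ge 0$, and each $q_i$ is affine, the Hessian of the Lagrangian satisfies $\nabla^2_{yy}\mathcal{L} \succeq \mu I \succ 0$ on the whole space. This positive definiteness is a strong form of the second-order sufficient condition. Combined with LICQ and the $C^2$ smoothness of Assumption~\ref{a1}, it yields the strong regularity of the KKT system when the latter is cast as a generalized equation in $z=(y,\lambda,\nu)$ parameterized by $x$. The implicit-function theorem for strongly regular generalized equations then furnishes a single-valued, locally Lipschitz localization of the solution map, which must coincide with $z(x)$ by the uniqueness from part~(\romannumeral1); this gives continuity and local Lipschitz continuity.

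The existence of directional derivatives is the delicate step and the main obstacle. Where strict complementarity holds (empty $J^0$), the active set is locally constant and the implicit function theorem applied to the smooth reduced KKT system gives genuine differentiability. The difficulty is precisely at points with $J^0 \ne \emptyset$: there the KKT system is nonsmooth, the active set can change with the direction, and $z(x)$ may fail to be differentiable. To resolve this I would invoke the B-differentiability available under strong regularity, which guarantees that the Lipschitzian solution map is directionally differentiable and that its derivative along any direction $d$ is the unique solution of the generalized equation obtained by linearizing the KKT system at $z(x)$. Showing that this linearized problem is solvable and has a unique solution for every $d$ is where $\nabla^2_{yy}\mathcal{L}\succeq \mu I$ and LICQ re-enter, and it is the technical heart of the argument.
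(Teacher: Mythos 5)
Your proposal is correct and follows essentially the same route as the paper: part~(\romannumeral1) by strong convexity of $g(x,\cdot)$ over the convex feasible set plus LICQ for the KKT conditions and multiplier uniqueness, and part~(\romannumeral2) by observing that $\nabla_y^2\mathcal{L}\succeq\mu I$ (using $\lambda\ge 0$, convexity of $p_j$, affinity of $q$) so that the strong second-order condition holds together with LICQ, and then invoking the classical stability theory of the parametric KKT system for local Lipschitz continuity and directional differentiability. The only difference is the citation you lean on --- Robinson-style strong regularity and B-differentiability of the solution map versus the paper's use of Kojima's strong stability theorem --- which are equivalent under LICQ, and the paper likewise defers the existence of directional derivatives to that cited theorem rather than proving it from scratch.
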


As shown in part (i) of Theorem \ref{th0}, 
$y^{*}(x)$, ${\lambda}(x)$ and ${\nu}(x)$ are uniquely determined by $x$. 
So we simplify the notations of Definition \ref{def1} in the rest of this paper: $J(x,y^{*}(x))$ is denoted as $J(x)$, $J^{+}(x,y^{*}(x),{\lambda}(x))$ is denoted as $J^{+}(x)$, and $J^{0}(x,y^*(x),\lambda(x))$ is denoted as $J^{0}(x)$. 
In part (ii), the computation of the directional derivative of $z(x)$ is given in Theorem \ref{th2} in Appendix \ref{proof_app}.

\begin{definition}
\label{def3}
Suppose that the KKT conditions hold at ${y}$ for $P(x)$ with the Lagrangian multipliers ${\lambda}$ and ${\nu}$.
The Strict Complementarity Slackness Condition (SCSC) holds at ${y}$ w.r.t. ${\lambda}$ for $P(x)$, if ${\lambda}_{j}>0$ for all ${j} \in J(x,{y})$. 
\end{definition}

\begin{remark}
\label{remark_scsc}
The KKT conditions include the Complementarity Slackness Condition (CSC). The SCSC is stronger than the CSC, which only requires that ${\lambda}_{j} \geq 0$ for all ${j} \in J(x,{y})$. 
\end{remark}

Theorem \ref{th1} states the conditions under which $y^{*}(x)$ is continuously differentiable and derives $\nabla y^{*}(x)$.

\begin{theorem}
\label{th1}
Suppose Assumptions \ref{a1}, \ref{a2}, \ref{a3} hold. 
If the SCSC holds at $y^{*}(x)$ w.r.t. $\lambda(x)$, then $z(x)$ is continuously differentiable at $x$ and the gradient is computed as
\begin{equation}
\label{eq8}
\left[
\nabla_{x} y^{*}(x)^{\top},
\nabla_{x} \lambda_{J(x)}^{\top}(x),
\nabla_{x} \nu(x)^{\top}
\right]^{\top}
= -M_{+}^{-1}(x) N_{+}(x)
\end{equation}
and $\nabla_{x} \lambda_{{J(x)}^C}(x)=0$,
where $M_{+}(x) \triangleq $
$$
\left[\begin{array}{ccccccc}
\nabla_{y}^{2} \mathcal{L}  & \nabla_{y} p_{J^{+}(x)}^{\top} & \nabla_{y} q^{\top} \\
\nabla_{y} p_{J^{+}(x)}  & 0 & 0 \\
\nabla_y q & 0 & 0 
\end{array}\right](x,y^*(x),\lambda(x),\nu(x))
$$ 
is nonsingular and $N_{+}(x)\triangleq$
$$
[\nabla_{x y}^{2} \mathcal{L}^{\top}, \nabla_{x} p_{J^{+}(x)}^{\top}, \nabla_{x} q^{\top}]^{\top}(x,y^*(x),\lambda(x),\nu(x)).
$$

\end{theorem}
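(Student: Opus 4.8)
The plan is to treat this as a classical nonlinear-programming sensitivity result and prove it by applying the Implicit Function Theorem to the KKT system, using strict complementarity to freeze the active set. Under SCSC we have $J^{0}(x)=\emptyset$ and hence $J^{+}(x)=J(x)$, so the multipliers of inactive constraints vanish and the KKT conditions at $y^{*}(x)$ collapse to the square system
\begin{equation*}
F(y,\lambda_{J},\nu,x)\triangleq
\begin{bmatrix}
\nabla_{y}\mathcal{L}(y,\lambda,\nu,x)\\
p_{J(x)}(x,y)\\
q(x,y)
\end{bmatrix}=0,
\end{equation*}
in which the inactive multipliers are held at zero. Formula \eqref{eq8} is then exactly the implicit differentiation of $F=0$, with $M_{+}(x)=\partial F/\partial(y,\lambda_{J},\nu)$ and $N_{+}(x)=\partial F/\partial x$.

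First I would show the active set is locally constant. SCSC gives $\lambda_{j}(x)>0$ for every $j\in J(x)$, and inactiveness gives $p_{j}(x,y^{*}(x))<0$ for every $j\notin J(x)$. Since $z(x)$ is continuous by part (ii) of Theorem~\ref{th0}, both strict inequalities persist on a neighborhood of $x$: inactive constraints stay inactive with $\lambda_{j}\equiv 0$, and active constraints stay active with $\lambda_{j}>0$. Thus on this neighborhood the full KKT system is equivalent to $F=0$ with $J(\cdot)\equiv J(x)$, and differentiating the identity $\lambda_{J(x)^{C}}(\cdot)\equiv 0$ immediately gives $\nabla_{x}\lambda_{J(x)^{C}}(x)=0$.

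Next I would apply the Implicit Function Theorem to $F=0$. Assumption~\ref{a1} makes $F$ continuously differentiable, and a direct computation identifies its Jacobian in $(y,\lambda_{J},\nu)$ with $M_{+}(x)$. The crux --- and the main obstacle --- is the nonsingularity of $M_{+}(x)$, which I would obtain from its saddle-point structure $\left[\begin{smallmatrix} H & A^{\top}\\ A & 0\end{smallmatrix}\right]$ where $H\triangleq\nabla_{y}^{2}\mathcal{L}$ and $A\triangleq[\nabla_{y}p_{J(x)}^{\top},\nabla_{y}q^{\top}]^{\top}$. Assumption~\ref{a2} forces $H\succeq\mu I$ to be positive definite, because the affine $q_{i}$ contribute zero Hessian and the convex $p_{j}$ with $\lambda_{j}\ge 0$ contribute positive-semidefinite terms to $\nabla_{y}^{2}\mathcal{L}=\nabla_{y}^{2}g+\sum_{j}\lambda_{j}\nabla_{y}^{2}p_{j}+\sum_{i}\nu_{i}\nabla_{y}^{2}q_{i}$; and LICQ (Assumption~\ref{a3}) makes $A$ have full row rank. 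The Schur complement $-AH^{-1}A^{\top}$ is then negative definite and hence invertible, which yields the invertibility of $M_{+}(x)$.

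With $M_{+}(x)$ nonsingular, the Implicit Function Theorem produces unique $C^{1}$ functions $y^{*}(\cdot),\lambda_{J}(\cdot),\nu(\cdot)$ solving $F=0$ near $x$; by the uniqueness in part (i) of Theorem~\ref{th0} these coincide with the KKT solution, so $z(x)$ is continuously differentiable. Implicit differentiation of $F=0$ gives $M_{+}(x)[\nabla_{x}y^{*}(x)^{\top},\nabla_{x}\lambda_{J(x)}^{\top}(x),\nabla_{x}\nu(x)^{\top}]^{\top}+N_{+}(x)=0$, which is precisely \eqref{eq8} after left-multiplying by $-M_{+}^{-1}(x)$. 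The remaining care is only bookkeeping: matching the transpose conventions in the stated definitions of $M_{+}$ and $N_{+}$, and checking that the positive-semidefinite constraint contributions do not disturb the Schur-complement step --- both routine given the assumptions.
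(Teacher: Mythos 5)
Your proof is correct. It rests on the same core idea as the paper's --- the implicit function theorem applied to the KKT system once strict complementarity freezes the active set --- but the execution is genuinely different. The paper quotes the classical Fiacco-type sensitivity result (Theorem~\ref{thm1}), which applies the implicit function theorem to the \emph{full} complementarity system $\nabla_y\mathcal{L}=0$, $\lambda_j p_j=0$, $q=0$, producing a matrix $M$ with rows $\lambda_j\nabla_y p_j$ and diagonal entries $p_j$ for \emph{all} constraints; it then introduces the auxiliary problem $\hat P(x)$ with inactive constraints deleted (Lemma~\ref{thm4}, built on Lemma~\ref{lemma0}) to show the local minimizers and multipliers coincide, and finally cancels the positive factors $\lambda_j$ to arrive at $M_+$ and $N_+$. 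You instead freeze the active set up front (correctly using continuity of $z$ from Theorem~\ref{th0} plus complementarity to show it is locally constant under the SCSC, which also yields $\nabla_x\lambda_{J(x)^C}=0$ directly) and apply the implicit function theorem to the reduced square system, proving nonsingularity of $M_+$ from scratch via the saddle-point structure: $\nabla_y^2\mathcal{L}\succeq\mu I$ by strong convexity of $g$, convexity of $p_j$ with $\lambda_j\ge 0$, and affineness of $q$, together with full row rank of the constraint Jacobian from the LICQ, so the Schur complement is negative definite. That positive-definiteness computation is exactly the SSOSC verification the paper performs inside the proof of Theorem~\ref{th2}. Your route buys self-containedness --- no black-box sensitivity theorem, no auxiliary problem $\hat P$, and an explicit invertibility proof for $M_+$ that the paper only inherits implicitly from the nonsingularity of the full matrix $M$. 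The paper's route buys reuse: the same machinery (Theorem~\ref{thm1+}, Lemma~\ref{thm3}) carries over to the non-strict-complementarity case needed for the directional derivatives in Theorem~\ref{th2}(iii), which your reduced-system argument would not cover without modification.
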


Theorem \ref{th1} shows that, if $z(x)$ is not continuously differentiable, then the SCSC does not hold at $y^*(x)$ w.r.t. $\lambda(x)$. 
Definition \ref{def3} implies that the SCSC holds at ${y}$ w.r.t. ${\lambda}$ for $P(x)$ if and only if $J^{0}(x)  =\emptyset$. 
It concludes that if $y^*(x)$ is not continuously differentiable at $x$, $J^0(x) \neq \emptyset$, i.e., the non-differentiability of $y^{*}(x)$  occurs at points with non-strictly active constraints. 
Example \ref{example1} illustrates such claim. 

\begin{example}
\label{example1}
Consider a bilevel optimization problem $\Phi(x)=y^*(x)$ and the lower-level problem $P(x)$: $y^*(x)= {\arg\min}_y \{(y-x^2)^2 : p_1(x,y)=-x-y \leq 0\}$, where $x$, $y \in \mathbb{R}$.
The analytical solution of $z(x)=[y^*(x), \lambda(x)]$ is given by: $y^*(x)=x^2$, $\lambda(x)=0$ when $x \in (-\infty,-1] \cup [0,+\infty)$; $y^*(x)=-x$, $\lambda(x)=-2x(1+x)$ when $x \in [-1,0]$. 
Correspondingly, when $x \in (-1,0)$, $J(x)=\{1\}$, $J^+(x)=\{1\}$, $J^0(x)=\emptyset$; when $x \in (-\infty,-1) \cup (0,+\infty)$, $J(x)=\emptyset$, $J^+(x)=\emptyset$, $J^0(x)=\emptyset$; when $x \in \{ -1,0 \}$, $J(x)=\{1\}$, $J^+(x)=\emptyset$, $J^0(x)=\{1\}$.
As shown in Fig. \ref{ssss1}, $y^*(x)$ is continuously differentiable everywhere except when $J^0(x) \neq \emptyset$.
\end{example}

\begin{figure}[hbt] 
\centering 
\includegraphics[width=0.38\textwidth]{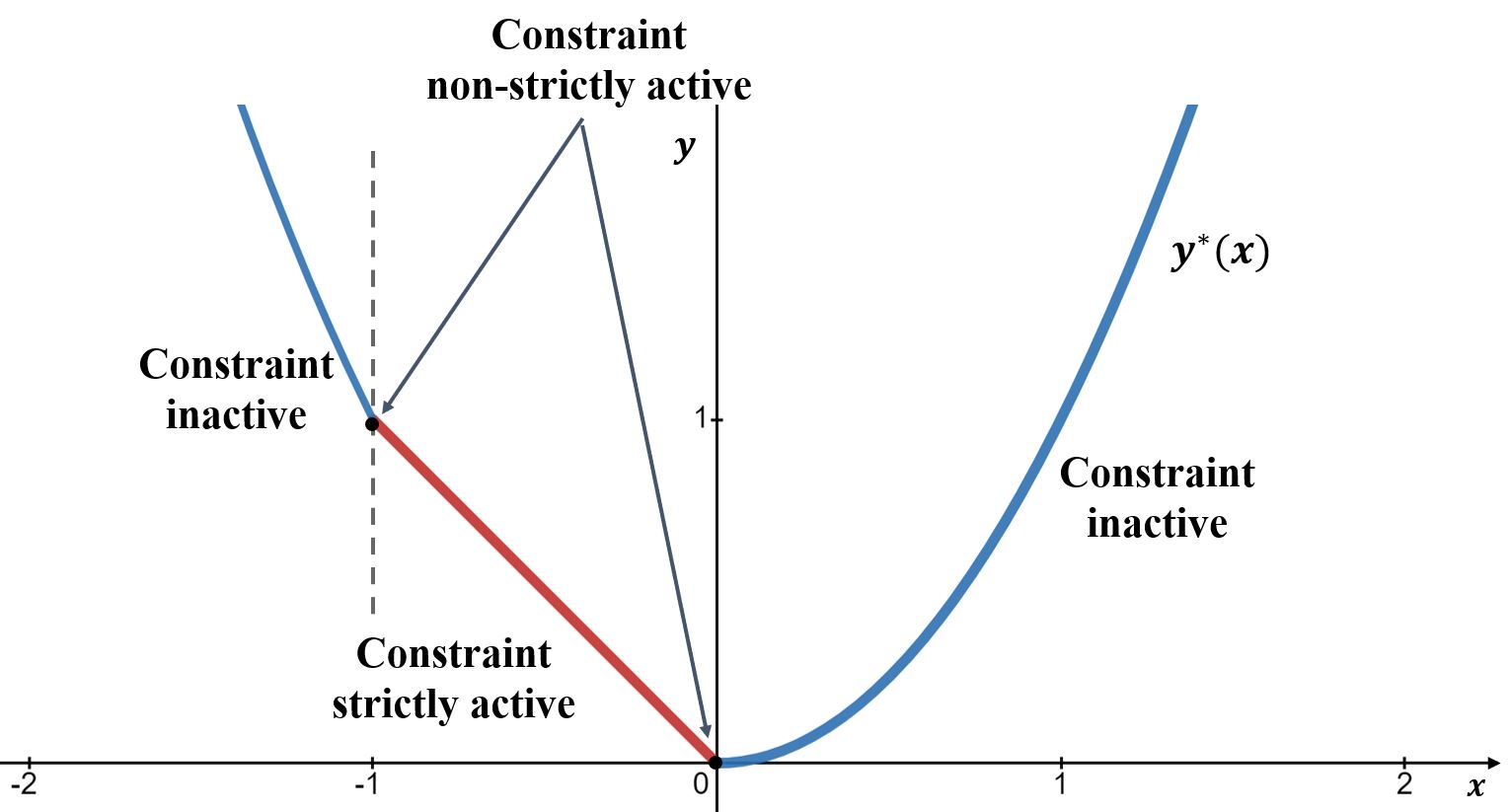} 
\caption{Occurrence of non-differentiability. }
\label{ssss1} 
\end{figure}

The computation of the gradient of $z(x)$ in \eqref{eq8} is derived from the implicit differentiation of the KKT conditions of problem $P(x)$, which is also used in \cite{fiacco1990sensitivity, amos2017optnet, agrawal2019differentiable}. 
Compared with these papers, Theorem \ref{th1} directly determines $\nabla_{x} \lambda_{{J(x)}^C}(x)=0$ and excludes $\lambda_{{J(x)}^C}(x)$ from the computation of the inverse matrix in \eqref{eq8}, when $z(x)$ is continuously differentiable. Theorem \ref{th2} in Appendix \ref{proof_app} derives the directional derivative of $z(x)$ when it is not differentiable.

Consider a special case where the lower-level optimization problem $P(x)$ is unconstrained. 
Since the SCSC is not needed anymore, the assumptions in Theorem \ref{th1} reduce to that $g$ is twice continuously differentiable and $g(x,y)$ is $\mu$-strongly-convex w.r.t. $y$ for ${x} \in \mathbb{R}^{d_x}$. 
By Theorem \ref{th1}, the optimal solution $y^{*}(x)$ is continuously differentiable, the matrix $\nabla_{y}^{2} g({x}, {y})$ is non-singular, and the gradient is computed as
$\nabla  y^{*}(x) =-[\nabla_{y}^{2} g({x}, {y})]^{-1} \nabla_{x y}^{2} g({x}, {y})$.
These results are well-known and widely used in unconstrained bilevel optimization analysis and applications \cite{pedregosa2016hyperparameter,franceschi2017forward, franceschi2018bilevel, ji2021bilevel}.

\section{The Gradient Approximation Method}
In this section, we develop the gradient approximation method to efficiently solve problem \eqref{biopt}, whose objective function is
non-differentiable and non-convex.
First, we define the Clarke subdifferential (Section \ref{sectionA}) and efficiently approximate the Clarke subdifferential of the objective function
$\Phi(x)$ (Section \ref{sectionB}). Next, we propose the gradient approximation algorithm, provide its convergence guarantee (Section \ref{sectionC}), and present its implementation details (Section \ref{sectionD}).

\subsection{Clarke Subdifferential of $\Phi$}
\label{sectionA}
As shown in Section \ref{section_1} and also shown in \cite{10.1007/s10589-015-9795-8,liu2021investigating}, the objective function $\Phi\left(x\right)$ of problem (\ref{biopt}) is usually non-differentiable and non-convex.
To deal with the non-smoothness and non-convexity,
we introduce Clarke subdifferential and Clarke stationary point.

\begin{definition}[Clarke subdifferential and Clarke stationary point \cite{clarke1975generalized}]
\label{defc}
For a locally Lipschitz function $f: \mathbb{R}^{n} \rightarrow \mathbb{R}$, the Clarke subdifferential of $f$ at $x$ is defined by the convex hull of the limits of gradients of $f$ on sequences converging to $x$, i.e.,
$\bar{\partial} f(x) \triangleq \operatorname{conv}\left\{\lim _{j \rightarrow \infty} \nabla f\left(y^{j}\right):\left\{y^{j}\right\} \rightarrow x\right.$ where $f$ is differentiable at $y^{j}$ for all $\left.j \in \mathbb{N}\right\}$.
The Clarke $\epsilon \text {-subdifferential}$ of $f$ at $x$ is defined by $
\bar{\partial}_{\epsilon} f(x)\triangleq\operatorname{conv} \{ \bar{\partial} f(x^{\prime}): x^{\prime} \in \mathcal{B}(x,\epsilon) \}$.
A point $x$ is Clarke stationary for $f$ if $0 \in \bar{\partial} f(x)$.

\end{definition}
If $y^*$ is differentiable at $x$, we have $\bar{\partial} y^*(x)=\left\{ \nabla y^*(x) \right\}$
and $\bar{\partial} \Phi(x)=\{ \nabla_{x} f(x, y^{*}(x))+\nabla  y^{*}(x)^{\top} \nabla_{y} f(x, y^{*}(x)) \}$;
otherwise, 
$\bar{\partial} \Phi(x)= \{\nabla_{x} f(x, y^{*}(x))+w^{\top} \nabla_{y} f(x, y^{*}(x)) : w \in  \bar{\partial}  y^{*}(x) \}$.
Take the functions shown in Example \ref{example1} and Fig. \ref{ssss1} as an example, $\bar{\partial}_{\epsilon} \Phi(-1)=\bar{\partial}_{\epsilon} y^{*}(-1)= \operatorname{conv} \{[-2-2\epsilon,-2] \cup \{-1\}\}=[-2-2\epsilon,-1]$, and $\bar{\partial}_{\epsilon} \Phi(0)=\bar{\partial}_{\epsilon} y^{*}(0)= \operatorname{conv} \{[0,2\epsilon] \cup \{-1\}\}=[-1,2\epsilon]$.

\subsection{Clarke Subdifferential Approximation}
\label{sectionB}
Gradient-based methods have been applied to convex and non-convex optimization problems \cite{hardt2016train,nemirovski2009robust}. The convergence requires that the objective function is differentiable.
If there exist points where the objective function is not differentiable, the probability for the algorithms to visit these points is non-zero and the gradients at these points are not defined \cite{bagirov2020numerical}.
Moreover, oscillation may occur even if the objective function is differentiable at all visited points \cite{bagirov2014introduction}.

To handle the non-differentiability, the gradient sampling method \cite{burke2005robust,kiwiel2007convergence,bagirov2020numerical} uses gradients in a neighborhood of the current estimate to approximate the Clarke subdifferential and determine the descent direction. 
Specifically, the method samples a set of points inside the neighborhood $\mathcal{B}(x^0,\epsilon)$, select the points where the objective function is differentiable, and then compute the convex hull of the gradients on the sampled points.

However, in problem (\ref{biopt}), the point sampling is highly computationally expensive. 
For each sampled point $x^j$, to check the differentiability of $\Phi$, we need to solve the lower-level optimization $P(x^j)$ to obtain $y^*(x^j)$, $\lambda(x^j)$ and $\nu(x^j)$, and check the SCSC. 
Moreover, after the points are sampled, the gradient on each point is computed by \eqref{eq8}. 
As the dimension $d_x$ increases, the sampling number increases to ensure the accuracy of the approximation.
More specifically, as shown in \cite{kiwiel2007convergence}, the algorithm is convergent if the sampling number is large than $d_x+1$. 
The above procedure is executed in each optimization iteration.

Addressing the computational challenge, we approximate the Clarke $\epsilon \text {-subdifferential}$ by a small number of gradients, which can represent the gradients on all points in the neighborhood.
The following propositions distinguish two cases: $\Phi$ is continuously differentiable on $\mathcal{B}(x^0,\epsilon)$ (Proposition \ref{prop2_0}) and it is not (Proposition \ref{prop3}).
\begin{proposition}
\label{prop2_0}
Suppose Assumptions \ref{a1}, \ref{a2}, \ref{a3} hold. 
Consider $x^0 \in R^{d_x}$. There is sufficiently small $\epsilon>0$ such that,
if the SCSC holds at $y^{*}(x)$ w.r.t. $\lambda(x)$ for any $x \in \mathcal{B}(x^0,\epsilon)$, then $\nabla \Phi(x^0) \in \bar{\partial}_{\epsilon} \Phi(x^0)$ and
$$|\|\nabla \Phi(x^0)\| - d(0, \bar{\partial}_{\epsilon} \Phi(x^0))|< o(\epsilon).$$
\end{proposition}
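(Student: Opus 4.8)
The plan is to reduce everything to the case where $\Phi$ is continuously differentiable on the entire ball, and then read off both claims from elementary convex geometry of the image of the gradient map. First I would argue that, for sufficiently small $\epsilon$, the hypothesis forces $\Phi$ to be $C^1$ on all of $\mathcal{B}(x^0,\epsilon)$. Since the SCSC holds at $y^{*}(x)$ w.r.t. $\lambda(x)$ for every $x \in \mathcal{B}(x^0,\epsilon)$, Theorem~\ref{th1} applies at each such $x$ and yields that $z(x)$—in particular $y^{*}(x)$—is continuously differentiable there, with $\nabla y^{*}(x)=-M_{+}^{-1}(x)N_{+}(x)$. A short stability argument shows the active set cannot change inside the ball: a transition from active to inactive would force some $\lambda_j(x)\to 0$ while $p_j(x,y^{*}(x))=0$, i.e.\ a non-strictly active constraint, contradicting the SCSC; hence $J(x)=J^{+}(x)=J(x^0)$ throughout (shrinking $\epsilon$ to keep $M_{+}(x)$ nonsingular). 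With the active set fixed, $M_{+}(x)$ and $N_{+}(x)$ are continuous compositions of the $C^2$ data of Assumption~\ref{a1} evaluated at the continuous map $z(x)$, so $\nabla y^{*}$ is continuous on the ball, and by \eqref{compositiond} the gradient $\nabla\Phi$ is continuous on $\mathcal{B}(x^0,\epsilon)$.

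Next I would use this to rewrite the $\epsilon$-subdifferential. Since $\Phi$ is differentiable at every $x'\in\mathcal{B}(x^0,\epsilon)$, Definition~\ref{defc} gives $\bar{\partial}\Phi(x')=\{\nabla\Phi(x')\}$, so that
\begin{equation*}
\bar{\partial}_\epsilon\Phi(x^0)=\operatorname{conv}\{\nabla\Phi(x'):x'\in\mathcal{B}(x^0,\epsilon)\}.
\end{equation*}
The membership claim is then immediate: $x^0\in\mathcal{B}(x^0,\epsilon)$, so $\nabla\Phi(x^0)$ is one of the generators and therefore lies in the convex hull $\bar{\partial}_\epsilon\Phi(x^0)$. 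In particular $d(0,\bar{\partial}_\epsilon\Phi(x^0))\le\|\nabla\Phi(x^0)\|$, which already removes the absolute value and reduces the second claim to an upper bound on $\|\nabla\Phi(x^0)\|-d(0,\bar{\partial}_\epsilon\Phi(x^0))$.

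For that bound I would enclose the whole set in a small ball about $\nabla\Phi(x^0)$. Setting $r_\epsilon:=\sup_{x'\in\mathcal{B}(x^0,\epsilon)}\|\nabla\Phi(x')-\nabla\Phi(x^0)\|$, every generator lies in $\mathcal{B}(\nabla\Phi(x^0),r_\epsilon)$; as this ball is convex it also contains $\bar{\partial}_\epsilon\Phi(x^0)$. Hence any $a\in\bar{\partial}_\epsilon\Phi(x^0)$ satisfies $\|a\|\ge\|\nabla\Phi(x^0)\|-r_\epsilon$ by the triangle inequality, giving $d(0,\bar{\partial}_\epsilon\Phi(x^0))\ge\|\nabla\Phi(x^0)\|-r_\epsilon$ and therefore $\|\nabla\Phi(x^0)\|-d(0,\bar{\partial}_\epsilon\Phi(x^0))\le r_\epsilon$. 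It then remains to control the oscillation $r_\epsilon$ of $\nabla\Phi$ over the ball and show it is of order $o(\epsilon)$ as $\epsilon\to 0$.

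This last estimate is where I expect the real work to lie. That $r_\epsilon$ vanishes is immediate from the continuity of $\nabla\Phi$ established in the first step; extracting the claimed rate requires a quantitative modulus-of-continuity analysis of the closed-form gradient through $\nabla y^{*}(x)=-M_{+}^{-1}(x)N_{+}(x)$. The main obstacle is thus propagating the $C^2$-regularity of $f,g,p,q$ and the local Lipschitz continuity of $z(x)$ from Theorem~\ref{th0}(ii) through the matrix inverse $M_{+}^{-1}(x)$—using that $M_{+}(x)$ remains uniformly nonsingular on the small ball—so as to bound $\|\nabla\Phi(x')-\nabla\Phi(x^0)\|$ sharply in terms of $\|x'-x^0\|$, and then taking the supremum over $\mathcal{B}(x^0,\epsilon)$ to obtain the stated $o(\epsilon)$ control.
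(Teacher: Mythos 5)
Your proposal follows essentially the same route as the paper's proof: use the SCSC on the whole ball together with Theorem~\ref{th1} (equivalently part (iv) of Theorem~\ref{th2}) to get continuous differentiability of $y^{*}$ and $\Phi$ on $\mathcal{B}(x^0,\epsilon)$, identify $\bar{\partial}_{\epsilon}\Phi(x^0)$ with $\operatorname{conv}\{\nabla\Phi(x'):x'\in\mathcal{B}(x^0,\epsilon)\}$, and bound the error by the oscillation of $\nabla\Phi$ over the ball. The one step you defer---quantifying that oscillation---is closed in the paper in a single line by noting that $\nabla y^{*}=-M_{+}^{-1}N_{+}$ is itself continuously differentiable, so $\Phi$ is twice differentiable and $\nabla\Phi$ is locally Lipschitz; this in fact yields an $O(\epsilon)$ rather than a literal $o(\epsilon)$ bound, a looseness shared by the paper's own statement and proof.
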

Proposition \ref{prop2_0} shows that the gradient $\nabla \Phi$ at a single point $x^0$ can be used to approximate the Clarke $\epsilon \text {-subdifferential}$ $\bar{\partial}_{\epsilon} \Phi(x^0)$ and the approximation error is in the order of $\epsilon$.
Recall that the gradient $\nabla \Phi(x^0)$ can be computed by \eqref{compositiond} and \eqref{eq8}. Fig. \ref{ssss1.5} illustrates the approximation on the problem in Example \ref{example1}. The SCSC holds at $y^{*}(x)$ and $\Phi(x)$ is continuously differentiable on $\mathcal{B}(x^0,\epsilon)$, then $\bar{\partial}_{\epsilon} \Phi(x^0)=[2x^0-2\epsilon, 2x^0+2\epsilon]$ can be approximated by $\nabla \Phi(x^0)= 2x^0$, and the approximation error is $2\epsilon$.
The approximations of $\bar{\partial}_{\epsilon}\Phi(x^1)$ and $\bar{\partial}_{\epsilon}\Phi(x^2)$ can be done in an analogous way.

\begin{figure*}[hbt] 
\begin{minipage}[t]{0.33\textwidth}
\centering 
\includegraphics[width=1.0\textwidth]{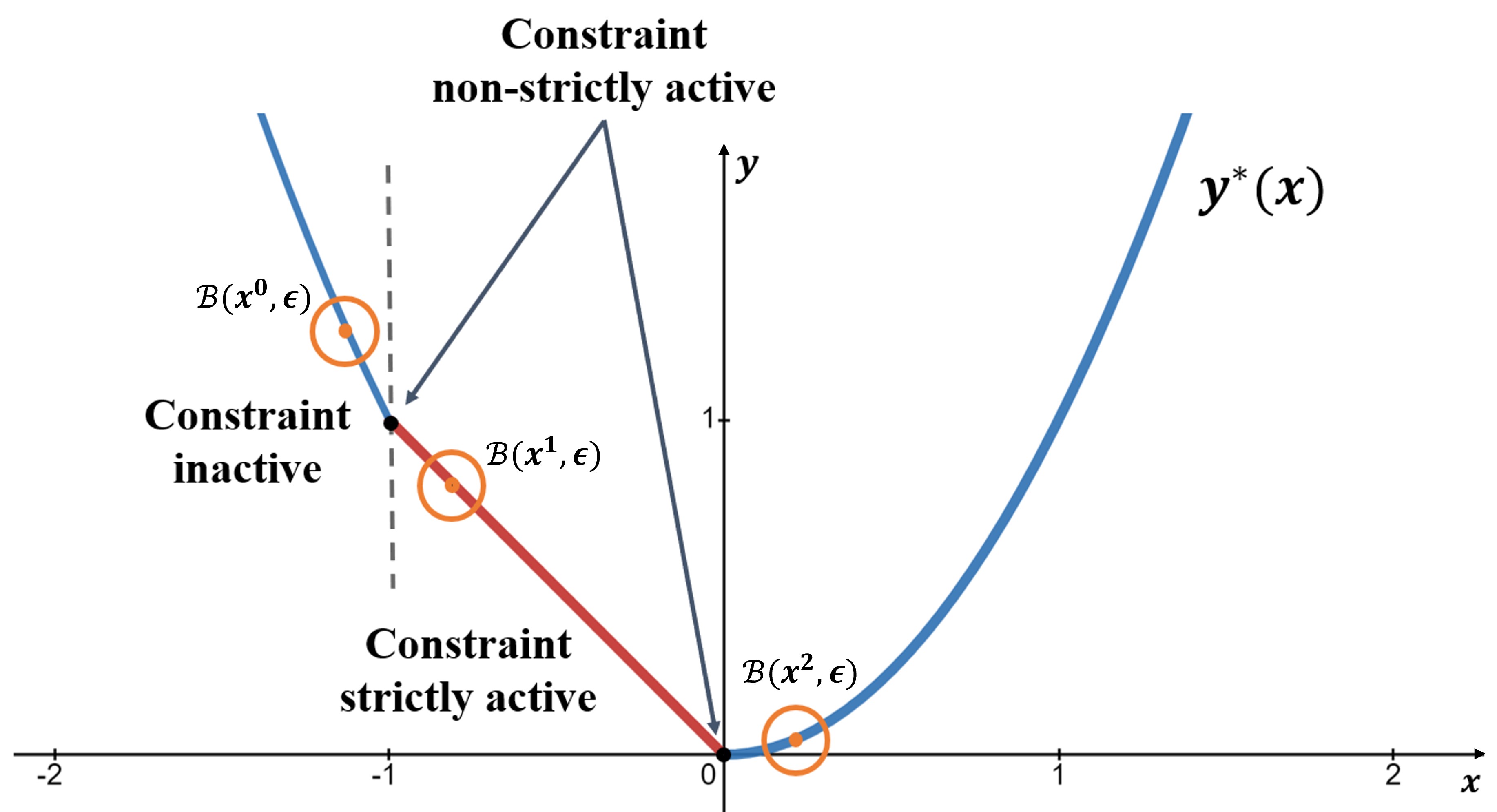} 
\caption{The SCSC holds on $\mathcal{B}(x^0,\epsilon)$} 
\label{ssss1.5} 
\end{minipage}
\begin{minipage}[t]{0.33\textwidth}
\centering 
\includegraphics[width=1.0\textwidth]{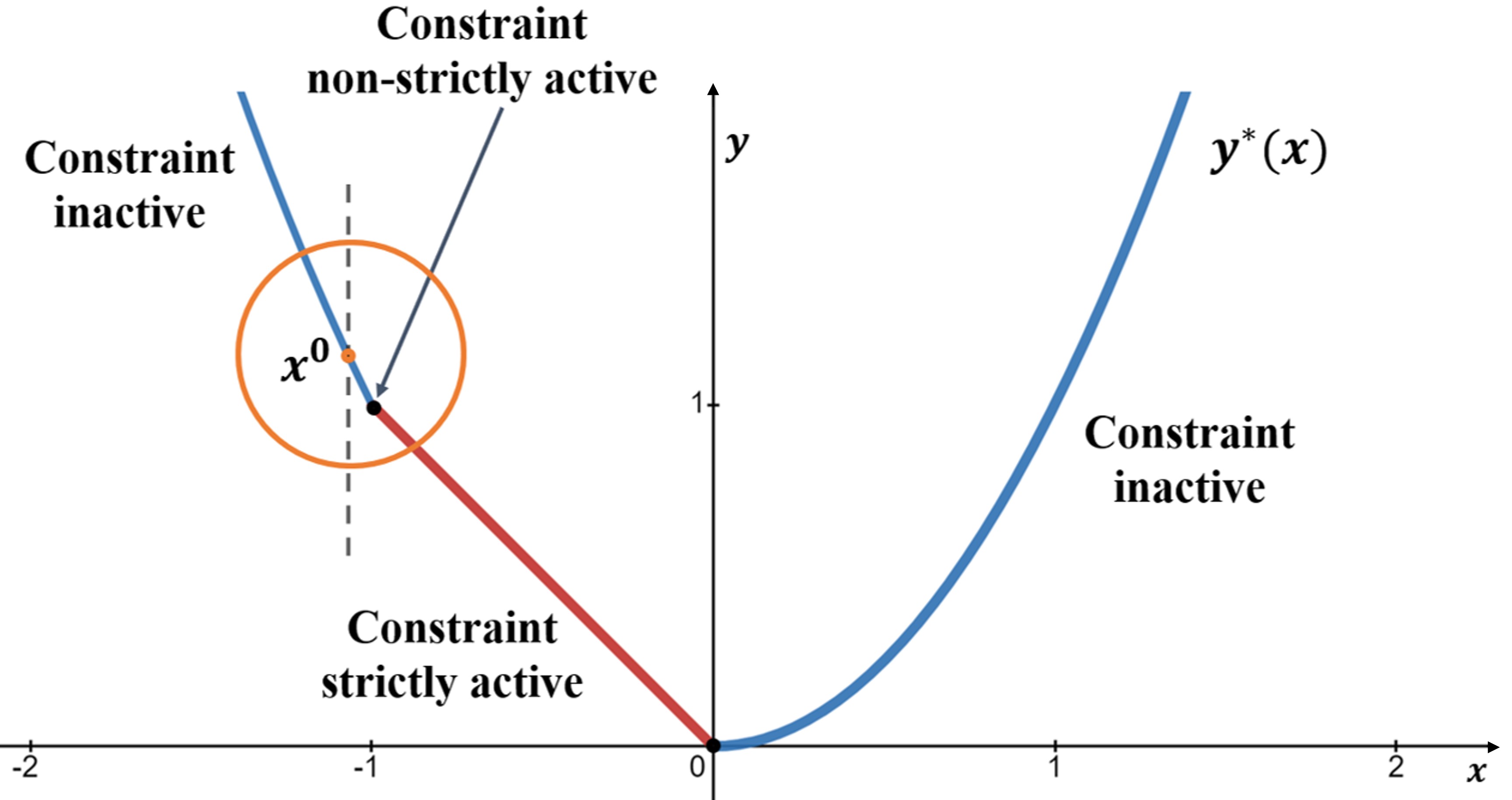} 
\caption{Subsets inside $\mathcal{B}(x^0,\epsilon)$} 
\label{ssss2} 
\end{minipage}
\begin{minipage}[t]{0.32\textwidth}
\centering 
\includegraphics[width=0.9\textwidth]{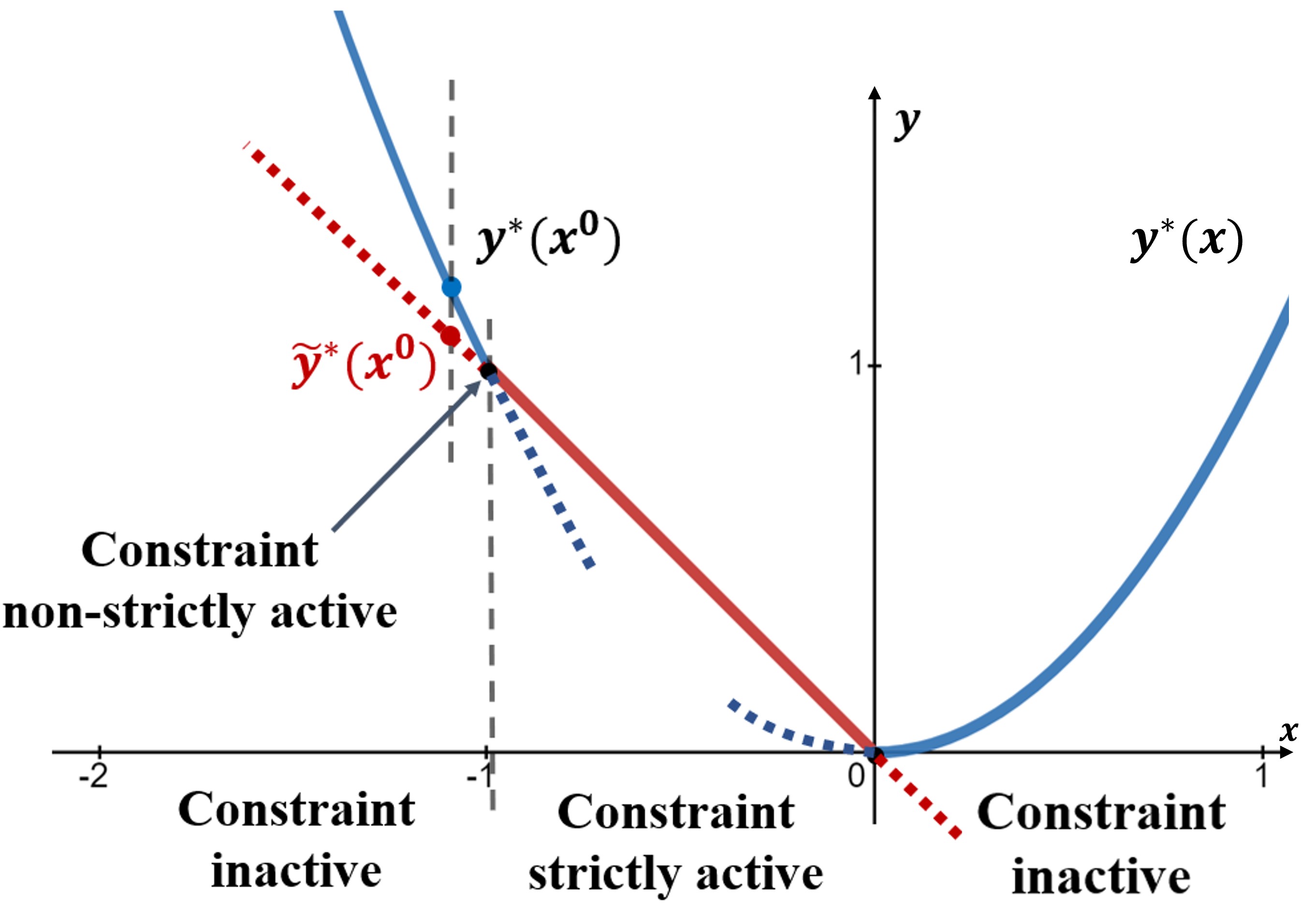} 
\caption{Approximate $\bar{\partial}_{\epsilon} y^*(x^0)$} 
\label{ssss3} 
\end{minipage}
\end{figure*}

Consider $\Phi(x)$ is not continuously differentiable at some points in $\mathcal{B}(x^0,\epsilon)$.
Define the set $I^\epsilon(x^0)$ which contains all $j$ such that there exist $x^{\prime}$, $x^{\prime\prime} \in \mathcal{B}(x^0,\epsilon)$ with 
$
j\in J^+(x^{\prime})^C \text{  and  } j\in J^+(x^{\prime\prime})
$.
Define the set $I_+^{\epsilon}(x^0)$ which contains all $j$ such that $j\in J^+(x)$ for any $x \in \mathcal{B}(x^0,\epsilon)$.
If $I^{\epsilon}(x^0)$ is not empty, there exists a point $x \in \mathcal{B}(x^0,\epsilon)$ such that the SCSC does not hold at $y^*(x)$.
The power set of $I^\epsilon(x^0)$ partitions $\mathcal{B}(x^0,\epsilon)$ into a number of subsets, where $\Phi(x)$ and $y^*(x)$ are continuously differentiable in each subset. 
An illustration on the problem in Example \ref{example1} is shown in Fig. \ref{ssss2}. 
The point $x^{\prime}=-1 $ belongs to $(x^0-\epsilon, x^0+\epsilon)$ and the SCSC does not hold at $y^*(x^{\prime})$. Notice that $I^{\epsilon}_{+}(x^0)=\emptyset$ and $I^\epsilon(x^0)=\{1\}$. Then, $I^\epsilon(x^0)=\{1\}$ has the power set $\{S_{(1)},S_{(2)}\}$ with $S_{(1)}=\emptyset$ and $S_{(2)}=\{1\}$.
Then, $\mathcal{B}(x^0,\epsilon)$ is partitioned to two subsets: the subset where the
constraint $p_1$ is inactive (blue side in the ball) which corresponds to $S_{(1)}$, and the subset where the constraint $p_1$ is strictly active (red side in the ball) which corresponds to $S_{(2)}$. Their boundary is the point $x^{\prime}$ where the constraint $p_1$ is non-strictly active.
It can be seen that $y^*(x)$ is continuously differentiable on each subset and the gradient variations are small inside the subset when $\epsilon$
is small. In contrast, the gradient variations between two subsets are large.
Inspired by Proposition \ref{prop2_0}, we compute a representative gradient to approximate $\nabla y^*(x)$ inside each subset of $\mathcal{B}(x^0,\epsilon)$.

Now we proceed to generalize the above idea. 
Recall that $\nabla \Phi(x)$ is computed by \eqref{compositiond} and $f$ is twice continuously differentiable.
Define 
\begin{equation}
\begin{aligned}
\label{eq17}
G(x^{0},\epsilon)\triangleq\{ \nabla_{x} f\left(x^0, y^{*}\left(x^0\right)\right)+{w}^S(x^0)^{\top} \\ 
\nabla_{y} f\left(x^0, y^{*}\left(x^0\right)\right): S \subseteq I^\epsilon(x^0)\},
\end{aligned}
\end{equation}
where
${w}^S(x^0)$ is obtained by extracting the first $d_x$ rows from matrix $-M^{S}_{\epsilon}(x^0,y^*(x^0))^{-1} N^{S}_{\epsilon}(x^0,y^*(x^0))$, with
$$M^{S}_{\epsilon} \triangleq
\left[\begin{array}{ccccccc}
\nabla_{y}^{2} \mathcal{L} & \nabla_y p_{I^{\epsilon}_{+}(x^0)}^{\top} & \nabla_{y} q^{\top}  & \nabla_y p^{\top}_{S}\\
\nabla_y p_{I^{\epsilon}_{+}(x^0)} & 0 & 0 &0\\
\nabla_y q & 0 & 0 &0\\
\nabla_y p_{S} & 0 & 0 &0
\end{array}\right],$$
and 
$N^{S}_{\epsilon} \triangleq \left[\nabla_{x y}^{2}\mathcal{L}^{\top}, \nabla_{x} p_{I^{\epsilon}_{+}(x^0)}^{\top}, \nabla_{x} q^{\top}, \nabla_{x} p_{S}^{\top} \right]^{\top}$.
Here, $S$ is a subset of $I^\epsilon(x^0)$, and $w^S(x^0)$ is the representative gradient to approximate $\nabla y^*(x)$ inside the subset of $\mathcal{B}(x^0,\epsilon)$ which corresponds $S$. Proposition \ref{prop3} shows that the Clarke $\epsilon \text {-subdifferential}$ $\bar{\partial}_{\epsilon} y^*(x^0)$ can be approximated by representative gradient set $G(x^{0},\epsilon)$, and the approximation error is in the order of $\epsilon$. 

\begin{proposition}
\label{prop3}
Suppose Assumptions \ref{a1}, \ref{a2}, \ref{a3} hold. 
Consider $x^0 \in \mathbb{R}^{d_x}$, and assume there exists a sufficiently small $\epsilon>0$ such that, there exists $x \in \mathcal{B}(x^0,\epsilon)$ such that $y^{*}(x)$ is not continuously differentiable at $x$. 
Then, the following inequality holds
for any $z \in \mathbb{R}^{d_x}$, 
$$| d(z, \operatorname{conv} G(x^{0},\epsilon)) - d(z, \bar{\partial}_{\epsilon} \Phi(x^0))|< o(\epsilon).$$
\end{proposition}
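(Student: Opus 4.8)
The plan is to show that the two convex sets $\operatorname{conv} G(x^0,\epsilon)$ and $\bar{\partial}_{\epsilon}\Phi(x^0)$ lie within Hausdorff distance of order $\epsilon$, and then invoke the elementary fact that $z\mapsto d(z,\cdot)$ is $1$-Lipschitz with respect to the Hausdorff metric, i.e. $|d(z,A)-d(z,B)|\le d_{\mathrm H}(A,B)$ for every $z$; applying this with $A=\operatorname{conv} G(x^0,\epsilon)$ and $B=\bar{\partial}_{\epsilon}\Phi(x^0)$ gives the claimed inequality. Since the Hausdorff distance between two convex hulls is bounded by that between their generating sets, it suffices to compare generators.

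First I would make the generators explicit. For each configuration $S\subseteq I^\epsilon(x^0)$, write $\Gamma_S(x)$ for the right-hand side of \eqref{eq17} with $w^S$ replaced by the Theorem~\ref{th1} formula evaluated at $x$ for the strictly-active set $I_+^{\epsilon}(x^0)\cup S$, so that $G(x^0,\epsilon)=\{\Gamma_S(x^0):S\subseteq I^\epsilon(x^0)\}$. By the partition discussed before the proposition, every $x\in\mathcal{B}(x^0,\epsilon)$ at which the SCSC holds lies in a unique cell characterized by $J^{+}(x)\cap I^\epsilon(x^0)=S$, where $J^{+}(x)=I_+^{\epsilon}(x^0)\cup S$; there Theorem~\ref{th1} and \eqref{compositiond} give $\nabla\Phi(x)=\Gamma_S(x)$. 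Combining this with Definition~\ref{defc}, the generators of $\bar{\partial}_{\epsilon}\Phi(x^0)$ are exactly the limit gradients $\Gamma_S(x')$ as $x'$ ranges over $\mathcal{B}(x^0,\epsilon)$ and $S$ over the cells whose closure meets $x'$.

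Next I would prove a uniform Lipschitz bound: each map $\Gamma_S$ is Lipschitz on $\mathcal{B}(x^0,\epsilon)$ with a constant independent of $S$ and $\epsilon$, so that $\|\Gamma_S(x')-\Gamma_S(x^0)\|=O(\epsilon)$ whenever $\|x'-x^0\|<\epsilon$. By Theorem~\ref{th0}(ii) the map $z(x)=[y^{*}(x)^{\top},\lambda(x)^{\top},\nu(x)^{\top}]^{\top}$ is locally Lipschitz, and Assumption~\ref{a1} makes every block of $M^{S}_{\epsilon}$ and $N^{S}_{\epsilon}$ continuously differentiable in its arguments; hence, once $M^{S}_{\epsilon}(x^0)$ is nonsingular, its inverse and therefore $\Gamma_S$ are Lipschitz near $x^0$. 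Nonsingularity of $M^{S}_{\epsilon}(x^0)$ for every $S$ is argued exactly as in Theorem~\ref{th1}: it follows from $\mu$-strong convexity (Assumption~\ref{a2}) together with linear independence of $\{\nabla_y p_j:j\in I_+^{\epsilon}(x^0)\cup S\}\cup\{\nabla_y q_i\}$, which for sufficiently small $\epsilon$ is inherited from the LICQ at $y^{*}(x^0)$ (Assumption~\ref{a3}) by continuity, since each index of $I_+^{\epsilon}(x^0)\cup S$ is active within an $O(\epsilon)$ distance of $x^0$.

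With these facts the two containments follow. Any generator $\Gamma_S(x')$ of $\bar{\partial}_{\epsilon}\Phi(x^0)$ with $\|x'-x^0\|<\epsilon$ is within distance $O(\epsilon)$ of $\Gamma_S(x^0)\in G(x^0,\epsilon)$ by the Lipschitz bound, so every generator of $\bar{\partial}_{\epsilon}\Phi(x^0)$ is $O(\epsilon)$-close to $G(x^0,\epsilon)$. The reverse direction is the main obstacle: I must show each $\Gamma_S(x^0)$ lies within $O(\epsilon)$ of $\bar{\partial}_{\epsilon}\Phi(x^0)$, and this can fail unless the configuration $S$ is realized by a nonempty cell inside the ball, since otherwise $G(x^0,\epsilon)$ could carry a spurious vertex far from the true subdifferential. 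The crux is therefore to prove that, for sufficiently small $\epsilon$, every $S\subseteq I^\epsilon(x^0)$ corresponds to a nonempty cell: because each $j\in I^\epsilon(x^0)$ switches strict activity across the ball, the smooth switching surfaces $\{p_j(x,y^{*}(x))=0\}$ all enter $\mathcal{B}(x^0,\epsilon)$, and LICQ renders their $x$-gradients linearly independent, so a transversality/implicit-function argument shows every sign pattern is attained. Choosing any $x'$ in cell $S$ gives $\Gamma_S(x')=\nabla\Phi(x')\in\bar{\partial}_{\epsilon}\Phi(x^0)$, and the Lipschitz bound then places $\Gamma_S(x^0)$ within $O(\epsilon)$ of it. The two containments yield $d_{\mathrm H}(\operatorname{conv} G(x^0,\epsilon),\bar{\partial}_{\epsilon}\Phi(x^0))=O(\epsilon)$, matching the order-$\epsilon$ bound (written $o(\epsilon)$) claimed in the proposition.
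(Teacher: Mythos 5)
Your overall architecture is sound and in places cleaner than the paper's: reducing the claim to a Hausdorff-distance bound between the two convex sets, via $|d(z,A)-d(z,B)|\le d_{\mathrm H}(A,B)$ and $d_{\mathrm H}(\operatorname{conv}A,\operatorname{conv}B)\le d_{\mathrm H}(A,B)$, is a legitimate repackaging of what the paper does by hand with explicit convex combinations in part (ii) of its full version, Proposition~\ref{prop4}. Your first containment — every gradient realized at a differentiable point of the ball has activity pattern $I_+^{\epsilon}(x^0)\cup S$ with $S\subseteq I^{\epsilon}(x^0)$, hence is $O(\epsilon)$-close to the corresponding element of $G(x^0,\epsilon)$ by a uniform Lipschitz bound on the maps $\Gamma_S$ — is also correct and matches the paper's step (a).

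The gap is exactly where you flag ``the crux'': the claim that every $S\subseteq I^{\epsilon}(x^0)$ is realized inside $\mathcal{B}(x^0,\epsilon)$ is asserted rather than proved, and the justification you offer does not go through as stated. The LICQ concerns linear independence of the gradients $\nabla_y p_j$ in the $y$-variable; it says nothing directly about the $x$-gradients of the composite switching functions $x\mapsto p_j(x,y^{*}(x))$ and $x\mapsto\lambda_j(x)$, which are themselves only piecewise smooth (they are built from $y^{*}$, which is merely directionally differentiable). Moreover, even granting that each individual switching surface enters the ball, that alone does not force all $2^{|I^{\epsilon}(x^0)|}$ joint activity patterns to be attained there — two nearly parallel switching surfaces can realize only three of the four sign patterns. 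The paper closes this differently: it uses the ``sufficiently small $\epsilon$'' hypothesis to produce a single point $x^{*}\in\mathcal{B}(x^0,\epsilon)$ at which every $j\in I^{\epsilon}(x^0)$ is simultaneously non-strictly active, so that $I^{\epsilon}(x^0)\subseteq J^{0}(x^{*},y^{*}(x^{*}))$ while $J^{+}(x^{*},y^{*}(x^{*}))\subseteq I_+^{\epsilon}(x^0)$, and then invokes the sensitivity-analysis result behind Lemma~\ref{prop1} (from \cite{dempe1998implicit,malanowski1985differentiability}): for every subset $A\subseteq J^{0}(x^{*},y^{*}(x^{*}))$ there is a direction $d$ with $A=J^{0}_{+}(x^{*},y^{*}(x^{*}),d)$, and the corresponding $w^S$ arises as a limit of true gradients $\nabla y^{*}(x^{j})$ with $x^{j}\to x^{*}$. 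That structural fact about the directional derivative — not a generic transversality argument — is what guarantees every configuration $S$ contributes an actual element of $\bar{\partial}_{\epsilon}\Phi(x^0)$ within $o(\epsilon)$. To complete your proof you would need to either import that result or prove the realizability claim directly.
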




The computation of the representative gradient $w^S(x^0)$ of Example \ref{example1} is demonstrated in Fig. \ref{ssss3}.
Since $x^0$ is near the boundary of two subsets, Proposition \ref{prop3} employs $w^{S_{(1)}}(x^0) = \nabla y^{*}(x^0)$ to approximate the gradients of the subset with the inactive constraint (blue side), and $w^{S_{(2)}}(x^0) =\nabla \Tilde{y}^{*}(x^0)$ to approximate the gradients in the subset with the strictly active constraint (red side). The twice-differentiable function $\Tilde{y}^{*}(x)$ is an extension of $y^{*}(x)$ (refer to the definition of $x^I(\cdot)$ in (12.8) of \cite{dempe1998implicit}).
The gradients $\nabla {y}^{*}(x^0)$ and $\nabla \Tilde{y}^{*}(x^0)$ are computed in the matrices $-{M^{S_{(1)}}_{\epsilon}}^{-1} N^{S_{(1)}}_{\epsilon}$ and $-{M^{S_{(2)}}_{\epsilon}}^{-1} N^{S_{(2)}}_{\epsilon}$, respectively.
Then, the representative gradients $w^{S_{(1)}}(x^0)$ and $w^{S_{(2)}}(x^0)$ are used to approximate $\bar{\partial}_{\epsilon} y^*(x^0)$.
Then, we can compute $G(x^0,\epsilon)=\{2x^0,-1\}$ and $\bar{\partial}_{\epsilon} \Phi(x^0)= [2x^0-2\epsilon,-1] $ with $-1 \in [x^0-\epsilon,x^0+\epsilon]$.
The approximation error $| d(z, \operatorname{conv} G(x^{0},\epsilon)) - d(z, \bar{\partial}_{\epsilon} \Phi(x^0))|$ is smaller than or equal to $2\epsilon$ for any $z$.

\begin{algorithm}[bt]
	\caption{Gradient Approximation Method} 
	\label{alg:framework0}
	\begin{algorithmic}[1] 
	    \REQUIRE Initial point $x^{0}$; 
	    Initial approximation radius $\epsilon_{0} \in(0, \infty)$; Initial stationarity target $\nu_{0} \in(0, \infty)$; Line search parameters $(\beta, \gamma) \in(0,1) \times(0,1)$; Termination tolerances $\left(\epsilon_{\mathrm{opt}}; \nu_{\mathrm{opt}}\right) \in[0, \infty) \times[0, \infty)$; Discount factors $\left(\theta_{\epsilon}, \theta_{\nu}\right) \in(0,1) \times(0,1)$.
	    \FOR{$k \in \mathbb{N}$}
	    \STATE \label{line2} Solve the lower-level optimization problem $P(x^k)$ and obtain $y^{*}(x^k)$, $\lambda(x^k)$, and $\nu(x^k)$
	    \STATE \label{line3} Check the differentiability of $y^{*}$ on $\mathcal{B}\left(x^{k}, \epsilon_{k}\right)$ by \eqref{check_dif} and \eqref{check}
	     \IF{$y^{*}$ is continuously differentiable on $\mathcal{B}\left(x^{k}, \epsilon_{k}\right)$} 
	    \STATE  \label{line5} Compute $g^{k}=\nabla \Phi(x^k)$ by \eqref{compositiond}
	     \ELSE
	    \STATE \label{line7} Compute $G(x^{k},\epsilon_k)$ by \eqref{eq17}
	    \STATE \label{line8} $\bar{\partial}_{\epsilon_k}\Phi(x^k )=\operatorname{conv} G(x^{k},\epsilon_k)$ 
	    \STATE \label{line9} Compute $g^{k}= \min \{ \|g\|: g \in \operatorname{conv} G(x^{k},\epsilon_k) \}$
	     \ENDIF 
	    \IF{$\left\|g^{k}\right\| \leq \nu_{\mathrm{opt}}$ and $\epsilon_{k} \leq \epsilon_{\mathrm{opt}}$}
	    \STATE {\bfseries Output:} $x^{k}$ and \textbf{terminate}
	    \ENDIF 
	    \IF{$\left\|g^{k}\right\| \leq \nu_{k}$}
	    \STATE $\nu_{k+1} \leftarrow \theta_{\nu} \nu_{k}$, $\epsilon_{k+1} \leftarrow \theta_{\epsilon} \epsilon_{k}$, and $t_{k} \leftarrow 0$
	    \ELSE
	    \STATE \label{line17} Compute $t_k$ by the line search:
	    $t_{k} = \max \{t \in\{\gamma, \gamma^{2}, \ldots\}:  \Phi(x^{k}-t g^{k})< \Phi(x^{k}) -\beta t\|g^{k}\|^{2}\}$
	    \STATE $\nu_{k+1} \leftarrow \nu_{k}$ and $\epsilon_{k+1} \leftarrow \epsilon_{k}$
	    \ENDIF 
	    \STATE $x^{k+1} \leftarrow x^{k}-t_{k} g^{k}$
	    \ENDFOR 
	\end{algorithmic}
\end{algorithm} 

\subsection{The Gradient Approximation Algorithm}
\label{sectionC}

Our proposed gradient approximation algorithm, summarized in Algorithm \ref{alg:framework0}, is a line search algorithm. It uses the approximation of the Clarke subdifferential as the descent direction for line search.
In iteration $k$, we firstly solve the lower-level optimization problem $P(x^k)$ and obtain $y^{*}(x^k)$, $\lambda(x^k)$ and $\nu(x^k)$. To reduce the computation complexity, the solution in iteration $k$ serves as the initial point to solve $P(x^{k+1})$ in iteration $k+1$. Secondly, we check the differentiability of $y^{*}$ on $\mathcal{B}\left(x^{k}, \epsilon_{k}\right)$ and its implementation details are shown in Section \ref{sectionD1}. If $y^{*}$ is continuously differentiable on $\mathcal{B}\left(x^{k}, \epsilon_{k}\right)$, we use $\nabla \Phi(x^k)$ to approximate $\bar{\partial}_{\epsilon} \Phi(x^k)$ which corresponds to Proposition \ref{prop2_0}. Otherwise, $G(x^{k},\epsilon_k)$ is used which corresponds to Proposition \ref{prop3}. The  details of computing $G(x^{k},\epsilon_k)$ are shown in \eqref{eq17} and Section \ref{sectionD2}.
Thirdly, the line search direction $g^k$ is determined by a vector which has the smallest norm over all vectors in the convex hull of $G(x^{k},\epsilon_k)$.
During the optimization steps, as the iteration number $k$ increases, the approximation radius $\epsilon_k$ decreases. According to Propositions \ref{prop2_0} and \ref{prop3}, the approximation error of the Clarke subdifferential is diminishing.
We next characterize the convergence of Algorithm \ref{alg:framework0}.

\begin{theorem}
\label{th_converge}
{
Suppose Assumptions \ref{a1}, \ref{a2}, \ref{a3} hold and $\Phi(x)$ is lower bounded on $\mathbb{R}^{d_x}$.
Let $\{x^{k}\}$ be the sequence generated by Algorithm \ref{alg:framework0} with $\nu_{\mathrm{opt}}=\epsilon_{\mathrm{opt}}=0$. Then, 
\begin{itemize}
    \item[(\romannumeral1)] For each $k$, the line search in line \ref{line17} has a solution $t_k$.
    \item[(\romannumeral2)] $\lim_{k\rightarrow\infty} \nu_{k} = 0$, $\lim_{k\rightarrow\infty} \epsilon_{k} = 0$.
    \item[(\romannumeral3)] $\liminf_{k\rightarrow\infty} d(0, \bar{\partial} \Phi(x^k))= 0$.
    \item[(\romannumeral4)] Every limit point of $\{x^{k}\}$ is Clarke stationary for $\Phi$.
\end{itemize}}
\end{theorem}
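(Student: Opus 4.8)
The plan is to mirror the convergence analysis of gradient sampling methods (e.g.\ \cite{kiwiel2007convergence}), proving the four parts in order, with Propositions \ref{prop2_0} and \ref{prop3} playing the role that exact gradients play in the smooth-sampling setting. Note first that $\Phi$ is locally Lipschitz: $y^{*}$ is locally Lipschitz by Theorem \ref{th0}(ii) and $f$ is $C^{2}$, so Clarke calculus applies and the directional derivatives of $\Phi$ exist. Two structural facts drive everything. The first is a \emph{descent property}: because $g^{k}$ is the minimum-norm element of $\operatorname{conv} G(x^{k},\epsilon_{k})$ (line \ref{line9}), the variational inequality $\langle g^{k}, v\rangle \ge \|g^{k}\|^{2}$ holds for every $v \in \operatorname{conv} G(x^{k},\epsilon_{k})$; and since the representative gradients $w^{S}(x^{k})$ for $S\subseteq J^{0}(x^{k})$ are the exact limiting derivatives of the smooth extensions of $y^{*}$ at $x^{k}$, one has $\bar{\partial}\Phi(x^{k}) \subseteq \operatorname{conv} G(x^{k},\epsilon_{k})$ \emph{exactly} (in the differentiable branch, line \ref{line5}, this is immediate since $g^{k}=\nabla\Phi(x^{k})$). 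Hence the Clarke generalized directional derivative satisfies $\Phi^{\circ}(x^{k};-g^{k}) = -\min_{v\in\bar{\partial}\Phi(x^{k})}\langle g^{k},v\rangle \le -\|g^{k}\|^{2}$. The second fact is the bound $|\,\|g^{k}\| - d(0,\bar{\partial}_{\epsilon_{k}}\Phi(x^{k}))\,|< o(\epsilon_{k})$ from Propositions \ref{prop2_0} and \ref{prop3}, together with $\bar{\partial}\Phi(x)\subseteq\bar{\partial}_{\epsilon}\Phi(x)$.

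\textbf{Part (i).} In line \ref{line17} we are in the branch $\|g^{k}\|>\nu_{k}>0$, so $g^{k}\neq 0$. The descent property gives $\limsup_{t\downarrow 0}\frac{\Phi(x^{k}-tg^{k})-\Phi(x^{k})}{t}\le \Phi^{\circ}(x^{k};-g^{k})\le -\|g^{k}\|^{2}< -\beta\|g^{k}\|^{2}$, using $\beta\in(0,1)$. Thus there is $\bar t>0$ with $\Phi(x^{k}-tg^{k})<\Phi(x^{k})-\beta t\|g^{k}\|^{2}$ for all $t\in(0,\bar t\,]$, so any $\gamma^{j}\le\bar t$ passes the Armijo test and the backtracking terminates.

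\textbf{Parts (ii) and (iii).} I would argue by contradiction that the discount branch (the $\|g^{k}\|\le\nu_{k}$ case) fires infinitely often. If not, then past some index $K$ the line-search branch is always taken, $\nu_{k}\equiv\bar\nu>0$, $\epsilon_{k}\equiv\bar\epsilon>0$, and $\|g^{k}\|>\bar\nu$. Telescoping the Armijo decrease and using that $\Phi$ is lower bounded yields $\sum_{k\ge K} t_{k}\|g^{k}\|^{2}<\infty$; with $\|g^{k}\|>\bar\nu$ this forces $t_{k}\to 0$ and $\sum_{k}\|x^{k+1}-x^{k}\|=\sum_{k}t_{k}\|g^{k}\|<\infty$, so $x^{k}\to\bar x$ and, by local Lipschitzness, a subsequence $g^{k}\to\bar g$ with $\|\bar g\|\ge\bar\nu$. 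Since $t_{k}\to 0$, the step $t_{k}/\gamma$ failed the Armijo test, i.e.\ $\frac{\Phi(x^{k}-(t_{k}/\gamma)g^{k})-\Phi(x^{k})}{t_{k}/\gamma}\ge -\beta\|g^{k}\|^{2}$; passing to the limit with the upper semicontinuity of Clarke's directional derivative gives $\Phi^{\circ}(\bar x;-\bar g)\ge -\beta\|\bar g\|^{2}$, which contradicts the limiting descent bound $\Phi^{\circ}(\bar x;-\bar g)\le -\|\bar g\|^{2}$ (whose justification is the crux discussed below) since $\beta<1$ and $\bar g\neq 0$. Therefore the discount branch fires infinitely often, giving $\nu_{k}\to 0$ and $\epsilon_{k}\to 0$ (part (ii)). Along the discount subsequence $\|g^{k}\|\le\nu_{k}\to 0$, and since $d(0,\bar{\partial}_{\epsilon_{k}}\Phi(x^{k}))\le \|g^{k}\|+o(\epsilon_{k})$ with $\epsilon_{k}\to 0$, the collapse of $\bar{\partial}_{\epsilon}\Phi$ onto $\bar{\partial}\Phi$ as $\epsilon\downarrow 0$ (upper semicontinuity of $x\mapsto\bar{\partial}\Phi(x)$) yields $\liminf_{k}d(0,\bar{\partial}\Phi(x^{k}))=0$ (part (iii)).

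\textbf{Part (iv) and the main obstacle.} For a limit point $\bar x=\lim_{j} x^{k_{j}}$, I would combine $\nu_{k_{j}},\epsilon_{k_{j}}\to 0$ with the descent property and upper semicontinuity of the subdifferential to obtain $0\in\bar{\partial}\Phi(\bar x)$. The hard part throughout is precisely this limiting step: showing that the limit $\bar g$ of the minimum-norm vectors relates to the \emph{exact} subdifferential $\bar{\partial}\Phi(\bar x)$, i.e.\ $\Phi^{\circ}(\bar x;-\bar g)\le-\|\bar g\|^{2}$. The difficulty is that $\operatorname{conv} G(x^{k},\epsilon_{k})$ is assembled from the index sets $I^{\epsilon}(x^{k})$ and $I^{\epsilon}_{+}(x^{k})$, which may change discontinuously as $x^{k}\to\bar x$, and that $w^{S}(x^{k})$ reproduces the subgradients in $\mathcal{B}(x^{k},\epsilon_{k})$ only up to the $o(\epsilon_{k})$ error of Proposition \ref{prop3}. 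I would resolve this by noting that for large $k$, $\bar x\in\mathcal{B}(x^{k},\epsilon_{k})$ forces the active-set configurations at $\bar x$ to appear among the subsets $S\subseteq I^{\epsilon_{k}}(x^{k})$, so $\bar{\partial}\Phi(\bar x)\subseteq\operatorname{conv} G(x^{k},\epsilon_{k})+o(\epsilon_{k})\mathcal{B}(0,1)$; the continuity of $y^{*},\lambda,\nu$ from Theorem \ref{th0} then passes $w^{S}(x^{k})\to w^{S}(\bar x)$ to the limit, and the vanishing $o(\epsilon_{k})$ closes the contradiction. Controlling this error uniformly, so that it is dominated by $(1-\beta)\|\bar g\|^{2}$, is the crux of the argument.
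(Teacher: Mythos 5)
Your overall template is the right one and matches the paper's: Armijo termination via the minimum-norm variational inequality, telescoping the descent against the lower bound of $\Phi$, a contradiction argument to force the discount branch to fire infinitely often, and the $o(\epsilon_k)$ approximation bounds of Propositions \ref{prop2_0} and \ref{prop3} to pass to $\bar{\partial}\Phi$. The substantive difference is \emph{how} the descent inequality is obtained. You route it through Clarke's generalized directional derivative and the asserted exact inclusion $\bar{\partial}\Phi(x^{k})\subseteq\operatorname{conv}G(x^{k},\epsilon_{k})$. The paper instead uses that, near $x^{k}$, $\Phi$ is a selection of finitely many twice continuously differentiable extensions $\Phi_{i}$ (one per active-set configuration in the ball), each satisfying $\Phi_{i}(x^{k})=\Phi(x^{k})$ and $\nabla\Phi_{i}(x^{k})\in\operatorname{conv}G(x^{k},\epsilon_{k})$ \emph{exactly}; a Taylor expansion of each piece then gives both the Armijo decrease (part (i)) and, via a uniform Hessian bound $\alpha$ on a compact set containing $\operatorname{conv}\{x^{k}\}$, an explicit lower bound $t_{k}\geq\gamma^{N}$ on accepted steps. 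Your exact-inclusion claim is shakier than you suggest: $G(x^{k},\epsilon_{k})$ is built from subsets of $I^{\epsilon_{k}}(x^{k})$, and indices in $J^{0}(x^{k})$ that fall into $I^{\epsilon_{k}}_{-}(x^{k})$ are excluded, so the branches of $\bar{\partial}y^{*}(x^{k})$ given by Lemma \ref{prop1} need not all appear in $G$; what is true, and what suffices, is the per-piece statement the paper uses. This matters most in your part (ii) contradiction: there $\epsilon_{k}\equiv\bar{\epsilon}$ is a \emph{fixed} positive number, so the $o(\bar{\epsilon})$ error you would need to absorb into $(1-\beta)\|\bar g\|^{2}$ does not vanish and is not controlled relative to $\bar{\nu}$. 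The paper's Hessian-bound argument has no such error term, which is precisely why it works; you have correctly identified the crux but not closed it.

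The second, cleaner gap is part (iv). From parts (ii)–(iii) you only know $\|g^{k}\|\leq\nu_{k}\to 0$ along the subsequence of discount steps; an arbitrary limit point $\bar{x}$ of $\{x^{k}\}$ may be approached along a different subsequence on which $\|g^{k_{j}}\|$ stays bounded away from zero, and $\nu_{k_{j}}\to 0$ does not by itself force $\|g^{k_{j}}\|\to 0$. Your one-sentence argument silently assumes it does. The paper supplies the missing step: using $\sum_{k}\|x^{k+1}-x^{k}\|\,\|g^{k}\|<\infty$, it shows that if $\|g^{k}\|\geq\hat{v}$ on all iterates in a neighborhood of $\bar{x}$, then the total travel of the iterates inside that neighborhood is finite, while the sequence must repeatedly enter a $\hat{v}/2$-ball around $\bar{x}$ and exit past radius $\epsilon$, forcing infinite travel — a contradiction that yields $\liminf_{k}\max\{\|x^{k}-\bar{x}\|,\|g^{k}\|,\epsilon_{k}\}=0$, after which Lemma \ref{lemma4} gives $0\in\bar{\partial}\Phi(\bar{x})$. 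Without this ball-escape argument (or an equivalent), your part (iv) is unproven.
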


If the objective function $\Phi$ is non-convex but smooth, property (iii) reduces to $\liminf_{k\rightarrow\infty}\|\nabla \Phi(x^k)\| = 0$, which is a widely used convergence criterion for smooth and non-convex optimization \cite{nesterov1998introductory,jin2021nonconvex}.
A sufficient condition for the existence of limit point of $\{x^k\}$ is that the sequence is bounded.

\subsection{Implementation Details}
\label{sectionD}
\subsubsection{Check differentiability of $y^{*}$ on $\mathcal{B}\left(x^{0}, \epsilon_{0}\right)$} 
\label{sectionD1}
We propose Proposition \ref{prop2} to check differentiability of $y^{*}$ on $\mathcal{B}\left(x^{0}, \epsilon_{0}\right)$, which is required by line \ref{line3} of Algorithm \ref{alg:framework0}.

\begin{proposition}
\label{prop2}
Consider $x^0 \in \mathbb{R}^{d_x}$ and $\epsilon>0$.
Suppose Assumptions \ref{a1}, \ref{a2}, \ref{a3} hold. 
Then, Lipschitz constants of functions $\Phi(x)$, $\lambda_j(x)$ and $p_j(x,y^*(x))$ on $\mathcal{B}(x^0,\epsilon)$ exist and are denoted by 
$l_{\Phi}(x^0,\epsilon)$, $l_{\lambda_j}(x^0,\epsilon)$ and $l_{p_j}(x^0,\epsilon)$, respectively.
Further, suppose the SCSC holds at $y^{*}(x^0)$ w.r.t. $\lambda(x^0)$. If there exists $\epsilon_1 > 0$ such that
\begin{equation}
    \label{check_dif}
    \begin{aligned}
     &\lambda_j(x^0) > l_{\lambda_j}(x^0,\epsilon_1) \epsilon_1 \  \text{ for all }  j \in J(x^0), \\
     &p_j(x^0,y^*(x^0)) < -l_{p_j}(x^0,\epsilon_1) \epsilon_1  \  \text{ for all }  j \not\in J(x^0),
    \end{aligned}
\end{equation}
then $y^{*}$ is continuously differentiable on $\mathcal{B}(x^0,\epsilon_1)$.

\end{proposition}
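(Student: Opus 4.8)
The plan is to establish the existence of the Lipschitz constants by compactness together with the local Lipschitzness already granted by Theorem \ref{th0}, and then to show that the two displayed conditions force the SCSC to hold at \emph{every} point of the ball, so that Theorem \ref{th1} can be invoked pointwise to deliver continuous differentiability.

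First I would settle the existence of $l_{\Phi}$, $l_{\lambda_j}$ and $l_{p_j}$. By part (ii) of Theorem \ref{th0}, the map $z(x)=[y^*(x)^{\top},\lambda(x)^{\top},\nu(x)^{\top}]^{\top}$ is continuous and locally Lipschitz, so each component $\lambda_j$ and $y^*$ is locally Lipschitz. A locally Lipschitz function restricted to the compact ball $\mathcal{B}(x^0,\epsilon)$ is Lipschitz on that ball (cover the ball by neighborhoods on which the local bound holds, extract a finite subcover, and use a Lebesgue-number/chaining argument). Since $f$ and $p_j$ are $C^2$ by Assumption \ref{a1}, hence locally Lipschitz, the compositions $\Phi(x)=f(x,y^*(x))$ and $x\mapsto p_j(x,y^*(x))$ are locally Lipschitz as compositions of locally Lipschitz maps, and therefore Lipschitz on $\mathcal{B}(x^0,\epsilon)$. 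This produces the constants $l_{\Phi}(x^0,\epsilon)$, $l_{\lambda_j}(x^0,\epsilon)$ and $l_{p_j}(x^0,\epsilon)$.

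Next I would use the two hypotheses in \eqref{check_dif} to control the active-set behaviour on $\mathcal{B}(x^0,\epsilon_1)$. Fix any $x\in\mathcal{B}(x^0,\epsilon_1)$, so $\|x-x^0\|\le\epsilon_1$. For an index $j\in J(x^0)$, the Lipschitz bound gives $\lambda_j(x)\ge\lambda_j(x^0)-l_{\lambda_j}(x^0,\epsilon_1)\epsilon_1>0$ by the first condition, so the multiplier stays strictly positive throughout the ball. For an index $j\notin J(x^0)$, the Lipschitz bound on $x\mapsto p_j(x,y^*(x))$ gives $p_j(x,y^*(x))\le p_j(x^0,y^*(x^0))+l_{p_j}(x^0,\epsilon_1)\epsilon_1<0$ by the second condition, so such a constraint remains strictly inactive and never enters the active set, i.e.\ $j\notin J(x)$. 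Combining the two observations, every active index at $x$ must already be active at $x^0$, so $J(x)\subseteq J(x^0)$, and for each $j\in J(x)$ we have $\lambda_j(x)>0$. By Definition \ref{def3} (equivalently, $J^0(x)=\emptyset$), the SCSC therefore holds at $y^*(x)$ w.r.t.\ $\lambda(x)$ for every $x\in\mathcal{B}(x^0,\epsilon_1)$.

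Finally, applying Theorem \ref{th1} at each such $x$ shows that $z(x)$, and in particular $y^*$, is continuously differentiable at $x$; since $x$ was arbitrary, $y^*$ is continuously differentiable on $\mathcal{B}(x^0,\epsilon_1)$. The main obstacle is the active-set bookkeeping rather than any deep estimate: one must verify both that previously active constraints keep strictly positive multipliers and that previously inactive constraints cannot become active, so that no new non-strictly active constraint can appear inside the ball; as noted after Theorem \ref{th1}, such an index with $J^0\neq\emptyset$ would be the only possible source of non-differentiability. The promotion of local to global Lipschitz continuity on the compact ball is the one step to state carefully, though it is standard.
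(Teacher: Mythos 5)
Your proposal is correct and follows essentially the same route as the paper: local Lipschitzness of $z$ from Theorem \ref{th0}(ii) promoted to Lipschitzness on the compact ball by a finite-subcover argument (the paper's Lemma on compact sets), then the two conditions in \eqref{check_dif} used to show that strictly active constraints keep positive multipliers and inactive constraints stay strictly inactive throughout $\mathcal{B}(x^0,\epsilon_1)$, so $J^0(x)=\emptyset$ and the SCSC holds at every point, after which Theorem \ref{th1} is applied pointwise. No gaps.
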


Proposition \ref{prop2} shows that, $y^{*}$ is continuously differentiable on a neighborhood of $x^0$, if for any $j$, either (i) $\lambda_j$ is larger than zero with non-trivial amount when the constraint $p_j(x^0,y^*(x^0))$ is active; or (ii) the satisfaction of $p_j(x^0,y^*(x^0))$ is non-trivial when it is inactive.
For case (i), $\lambda_j(x)>0$ and the constraint is strictly active for all $x \in \mathcal{B}(x^0,\epsilon)$;
for case (ii), $p_j(x,y^*(x))<0$ and the constraint is inactive for all $x \in \mathcal{B}(x^0,\epsilon)$.
As a illustration on the problem in Example \ref{example1} shown in Fig. \ref{ssss1.5}, $y^{*}$ is continuously differentiable on $\mathcal{B}(x^0,\epsilon)$, $\mathcal{B}(x^1,\epsilon)$ and $\mathcal{B}(x^2,\epsilon)$, and the constraint is inactive or strictly active in each ball.

{\color{black}
We evaluate the differentiability of $y^*(x)$ and $\Phi(x)$ on $\mathcal{B}(x^0,\epsilon)$ by Proposition \ref{prop2}.
In particular, we approximatively regard that $y^*$ and $\Phi$ is continuously differentiable on $\mathcal{B}(x^0,\epsilon)$ if \eqref{check_dif} is satisfied; otherwise, there exists $x \in \mathcal{B}(x^0,\epsilon)$ such that $y^*$ and $\Phi$ is not continuously differentiable at $x$.
The Lipschitz constants $l_{\lambda_j}(x^0,\epsilon)$ and $l_{p_j}(x^0,\epsilon)$ are computed as 
\begin{equation}
\label{check}
    \begin{aligned}
    &l_{\lambda_j}(x^0,\epsilon)= \| \nabla \lambda_j(x^0)\|+\delta , \\
    &l_{p_j}(x^0,\epsilon)= \|\nabla_x p_j(x^0,y^*(x^0))+    \\ & \quad \quad\quad \nabla y^{*}(x^0)^{\top}\nabla_y p_j(x^0,y^*(x^0))\|+\delta,
    \end{aligned}
\end{equation}
where $\delta$ is a small parameter, and $\nabla_x p_j(x^0,y^*(x^0))$ and $\nabla \lambda_j(x^0)$ are given in \eqref{eq8}.
Here, for a function $f$, we approximate its Lipschitz constant on $\mathcal{B}(x^0,\epsilon)$, which is defined as $l_{f}(x^0,\epsilon) \triangleq \sup_x \{ \| \nabla f(x)\|: x \in \mathcal{B}(x^0,\epsilon)\}$, as $l_{f}(x^0,\epsilon) 	\approx \| \nabla f(x^0)\|+\delta$.
As $\epsilon$ decreases, $f$ in $\mathcal{B}(x^0,\epsilon)$ is approximating to an affine function, and then the approximation error of $l_{f}(x^0,\epsilon)$ decreases.}

\subsubsection{Computation of $G(x^{0},\epsilon)$}
\label{sectionD2}
To compute $G(x^{0},\epsilon)$ in line \ref{line7} of Algorithm \ref{alg:framework0}, we need to compute the sets $I_{+}^\epsilon(x^0)$ and $I^\epsilon(x^0)$ defined in Proposition \ref{prop3}.
Similar to the idea in Proposition \ref{prop2},
we evaluate $I_{+}^\epsilon(x^0)$ and $I^\epsilon(x^0)$ as
\begin{align}
\label{check_I}
& I_+^\epsilon(x^0) = \left\{ j \in J(x^0): \lambda_j(x^0) > l_{\lambda_j}(x^0,\epsilon) \epsilon \right\} , \nonumber \\
& I^\epsilon_{-}(x^0) = \left\{ j \not\in J(x^0): \right. \nonumber \left. p_j(x^0,y^*(x^0)) < -l_{p_j}(x^0,\epsilon) \epsilon \right\}, \nonumber \\
& I^\epsilon(x^0) = \left\{ j : j \not\in I^\epsilon_+(x^0) \cup I^\epsilon_{-}(x^0) \right\}.
\end{align} 
Recall that the KKT conditions hold at $y^*(x^0)$ for problem $P(x^0)$, then for any $x \in \mathcal{B}(x^0,\epsilon)$, $p_j(x,y^*(x))=0$ for $j \in I_+^\epsilon(x^0)$ and $\lambda_j(x)=0$ for $j \in I_-^\epsilon(x^0)$.
Here, we also use $l_{\lambda_j}$ and $l_{p_j}$ given in \eqref{check} as the Lipschitz constants.
When $y^*$ and $\lambda$ are not differentiable at $x^0$, we sample a point $x^{\prime}$ near $x^0$ such that $y^*$ and $\lambda$ are differentiable at $x^{\prime}$, then replace $\nabla \lambda(x^0)$ and $\nabla y^*(x^0)$ in \eqref{check} by $\nabla \lambda(x^{\prime})$ and $\nabla y^*(x^{\prime})$. 

\section{Experiments }
\subsection{Hyperparameter Optimization}

Hyperparameter optimization has been widely studied \cite{pedregosa2016hyperparameter,franceschi2017forward,ji2021bilevel}.
However, existing methods cannot handle hyperparameter optimization of constrained learning problems, such as the supported vector machine (SVM) classification \cite{cortes1995support}, constrained reinforcement learning \cite{achiam2017constrained,chen2021primal,xu2021crpo}.
We apply the proposed algorithm to hyperparameter optimization of constrained learning. 

\begin{figure}[hbt]
\begin{center}
\begin{tabular}{cc}
\hspace{-6mm}\includegraphics[height=0.2\textwidth]{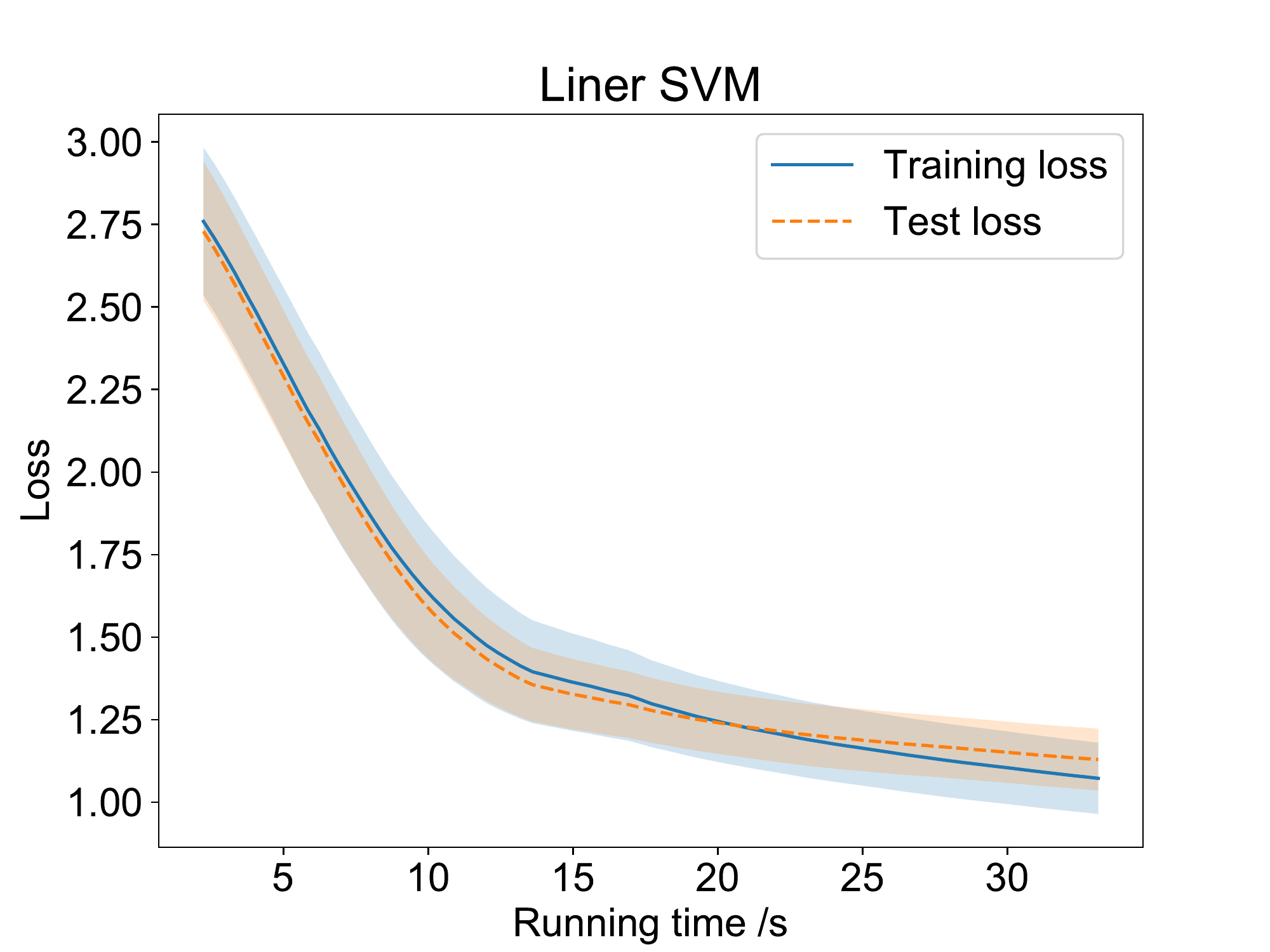} & \hspace{-8mm}
\includegraphics[height=0.2\textwidth]{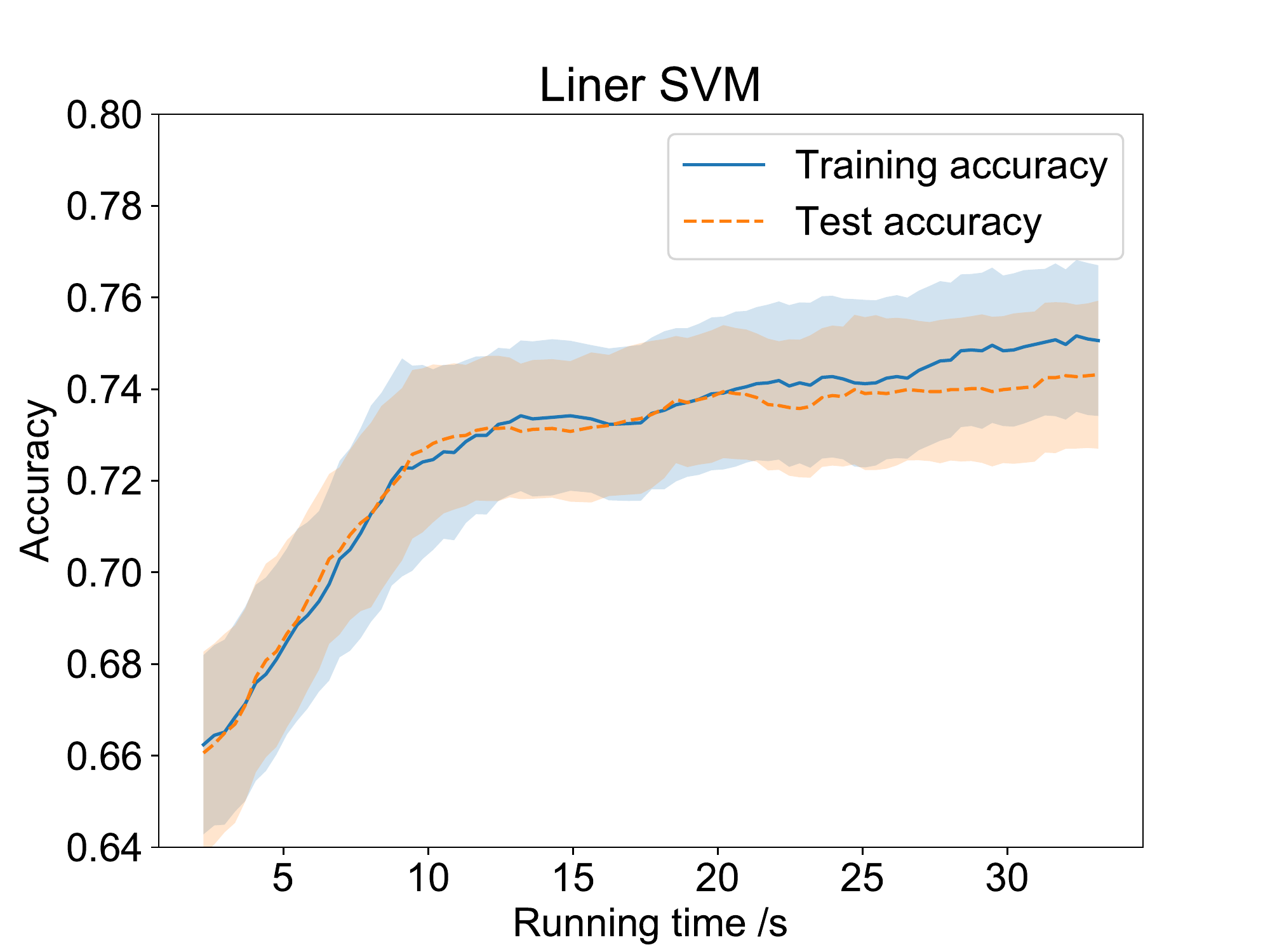}  \hspace{-8mm} \\
\hspace{-6mm}\includegraphics[height=0.2\textwidth]{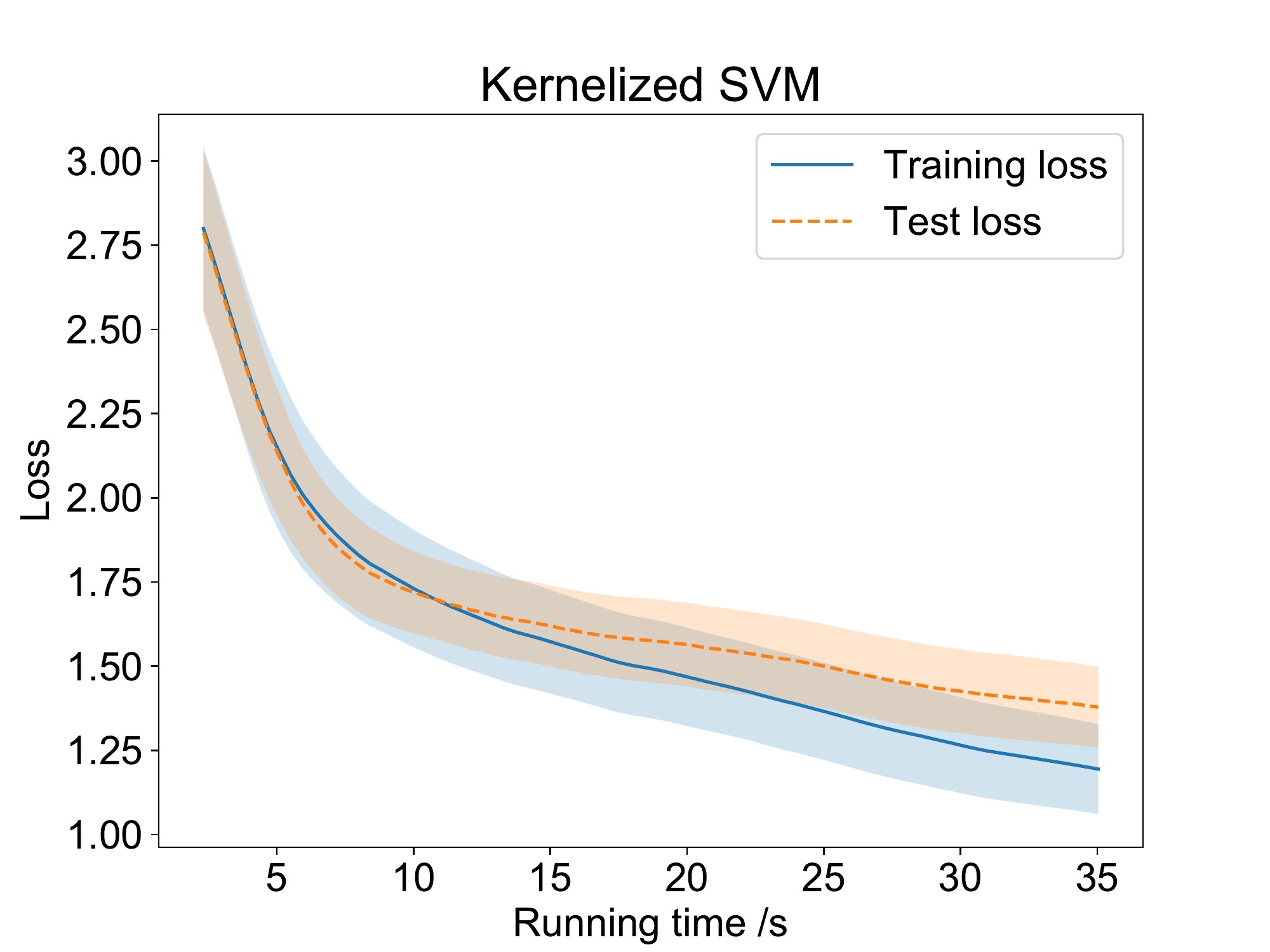} & \hspace{-8mm}
\includegraphics[height=0.2\textwidth]{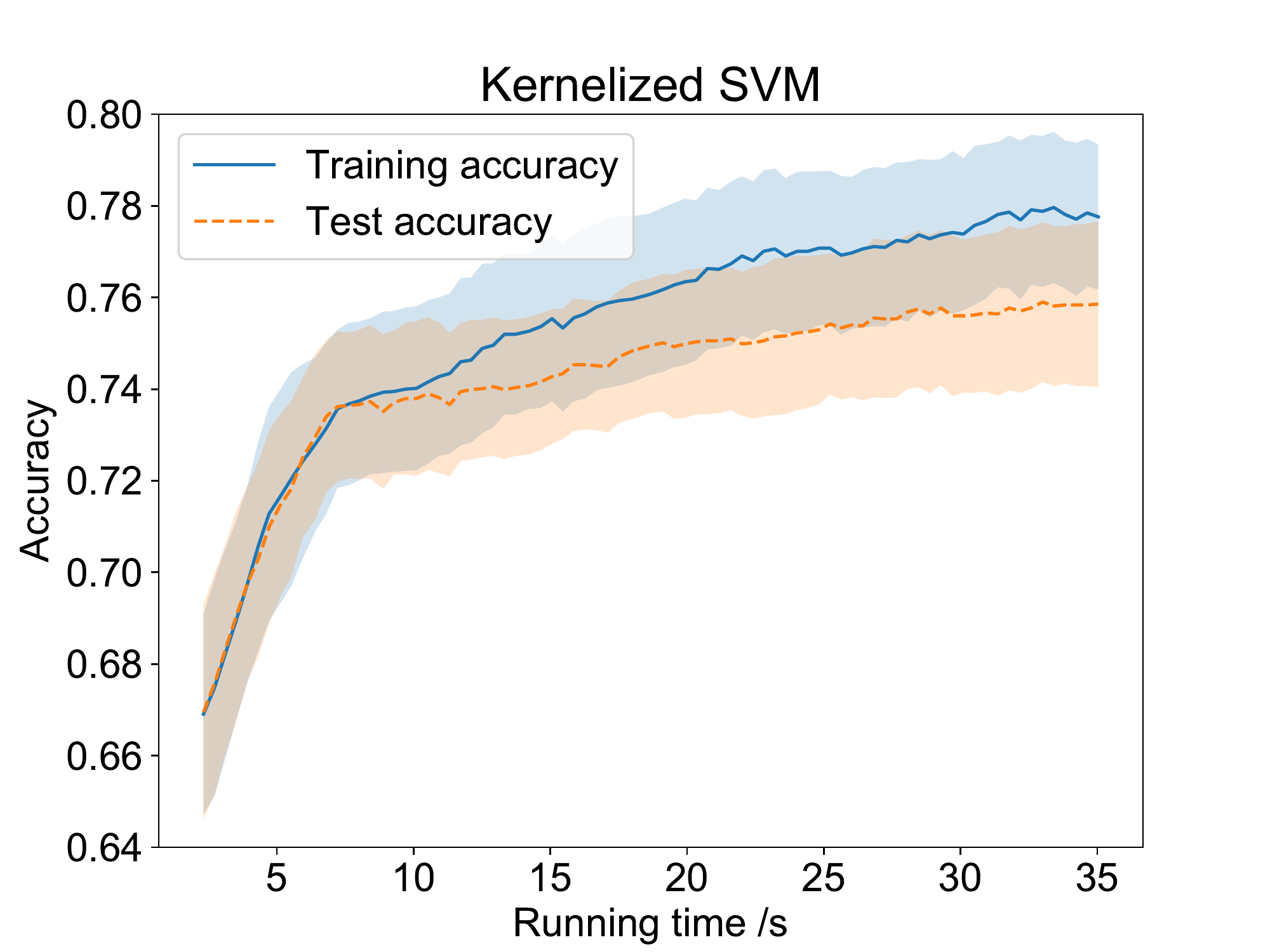}  \hspace{-8mm}
\end{tabular}
\caption{ Loss and accuracy v.s. running time in hyperparameter optimization of linear and kernelized SVM}
\label{fig:ho}
\end{center}
\end{figure}

\subsubsection{Hyperparameter Optimization of SVM}
We optimize the hyperparameter in the SVM optimization, i.e., the penalty terms of the separation violations. 
We conduct the experiment on linear SVM and kernelized SVM on the dataset of diabetes in \cite{Dua2019}.
It is the first time to solve hyperparameter optimization for SVM.
We provide details of the problem formulation and the implementation setting in Appendix \ref{A1}.
As shown in Fig. \ref{fig:ho}, the loss is nearly convergent for both linear and kernelized SVM, and the final test accuracy is much better than that of randomly selected hyperparameters, which is the initial point of the optimization.

\subsubsection{Data Hyper-Cleaning}
Data hyper-cleaning \cite{franceschi2017forward, shaban2019truncated} is to train a classifier in a setting where the labels of training data are corrupted with a probability $p$ (i.e., the corruption rate).
We formulate the problem as a hyperparameter optimization of SVM and conduct experiments on a breast cancer dataset provided in \cite{Dua2019}.
The problem formulation and the implementation setting are provided in Appendix \ref{A1}.
We compare our gradient approximation method with directly gradient descent used in \cite{amos2017optnet,lee2019meta}. 
It is shown in Fig. \ref{fig:dataclean} that, our method converges faster
than the benchmark method in terms of the loss and accuracy in both the training and test stages. Moreover, both the two methods achieve the test accuracy $96.2 \%$ using the corrupt data ($p=0.4$). The accuracy is comparable to the test accuracy $96.5\%$ of an SVM model where the data is not corrupted. 

\begin{figure}[bt]
\begin{center}
\begin{tabular}{cc}
\hspace{-6mm}\includegraphics[height=0.2\textwidth]{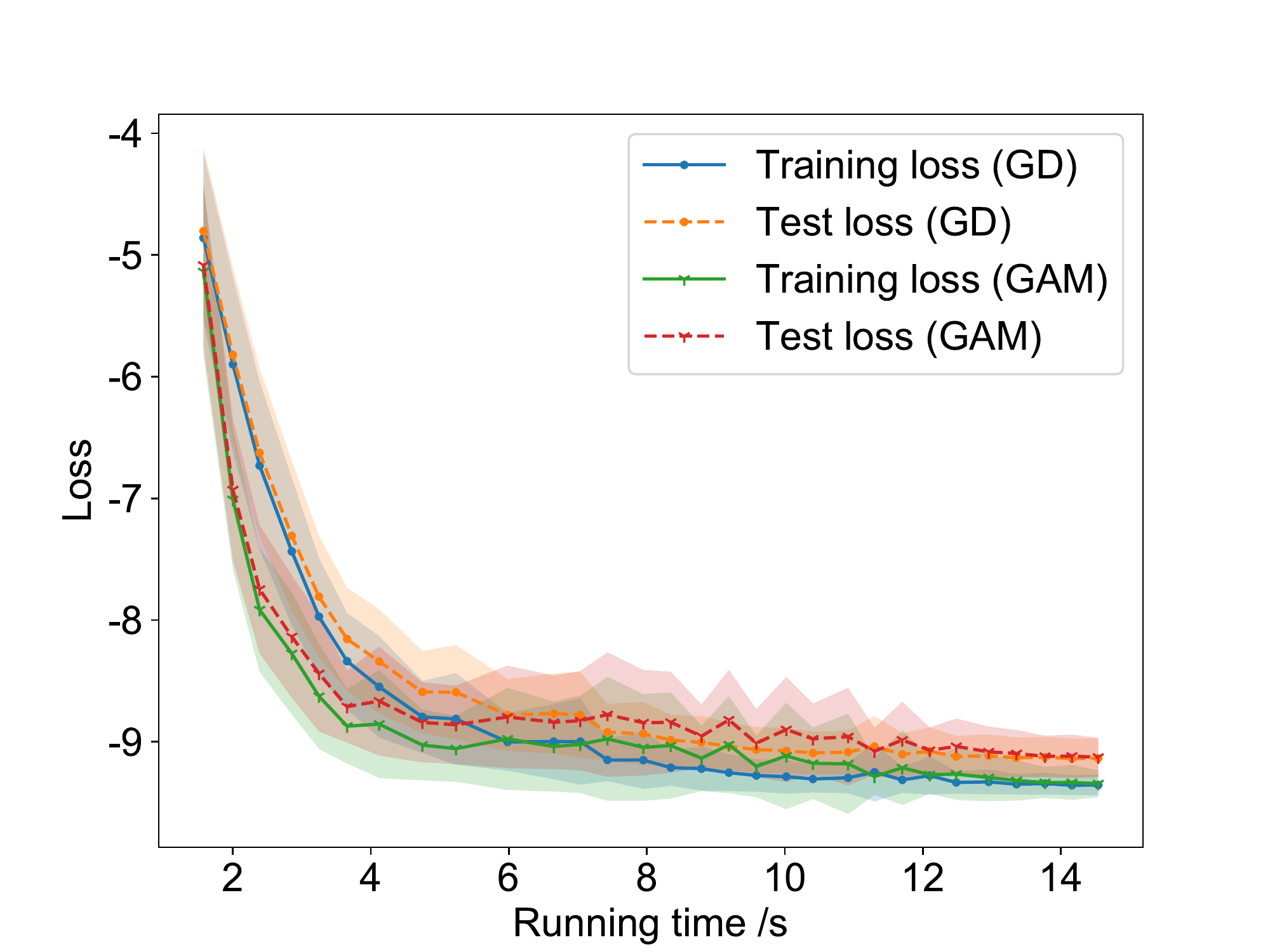} & \hspace{-8mm}
\includegraphics[height=0.2\textwidth]{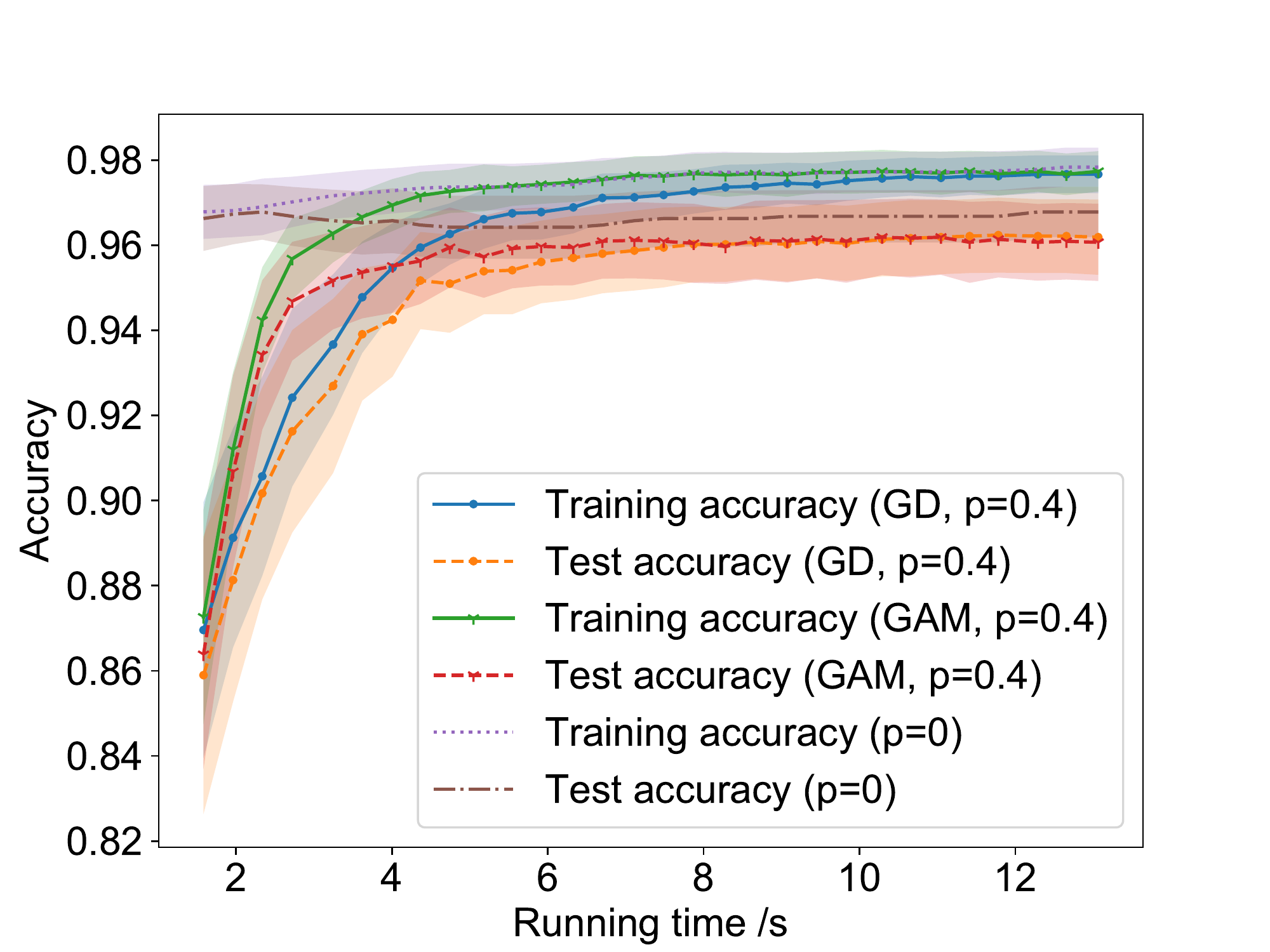}  \hspace{-8mm} 
\end{tabular}
\caption{Comparison of gradient descent (GD) and gradient approximation method (GAM) in data hyper-cleaning with the corruption rate $p=0.4$. Left: training and test losses of GD and GAM v.s. running time; right: training and test accuracy of GD and GAM with $p=0.4$ and training and test accuracy with $p=0$ v.s. running time. }
\label{fig:dataclean}
\end{center}
\end{figure}

\begin{figure}[hbt]
\begin{center}
\begin{tabular}{cc}
\hspace{-6mm}\includegraphics[height=0.2\textwidth]{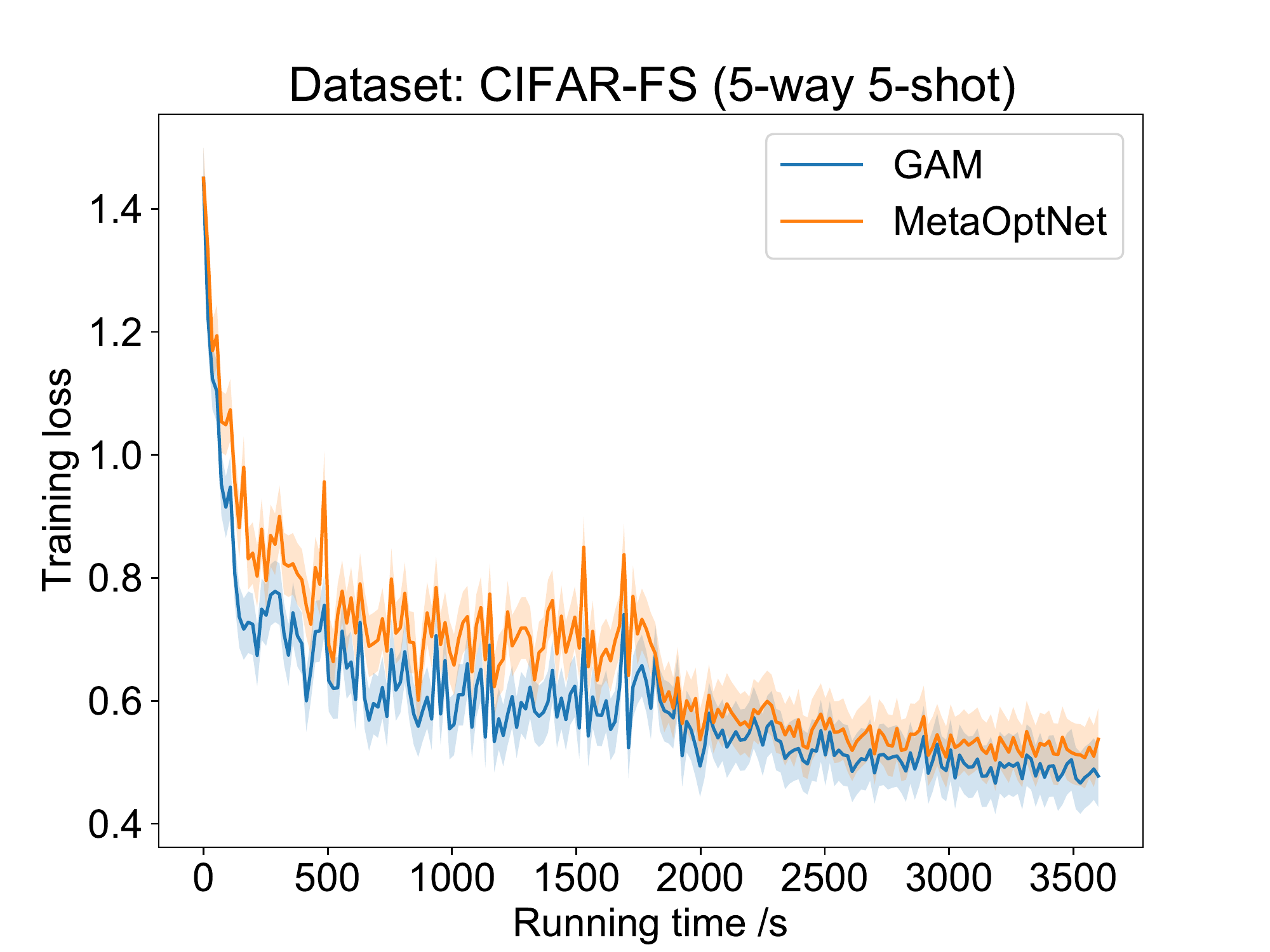} & \hspace{-8mm}
\includegraphics[height=0.2\textwidth]{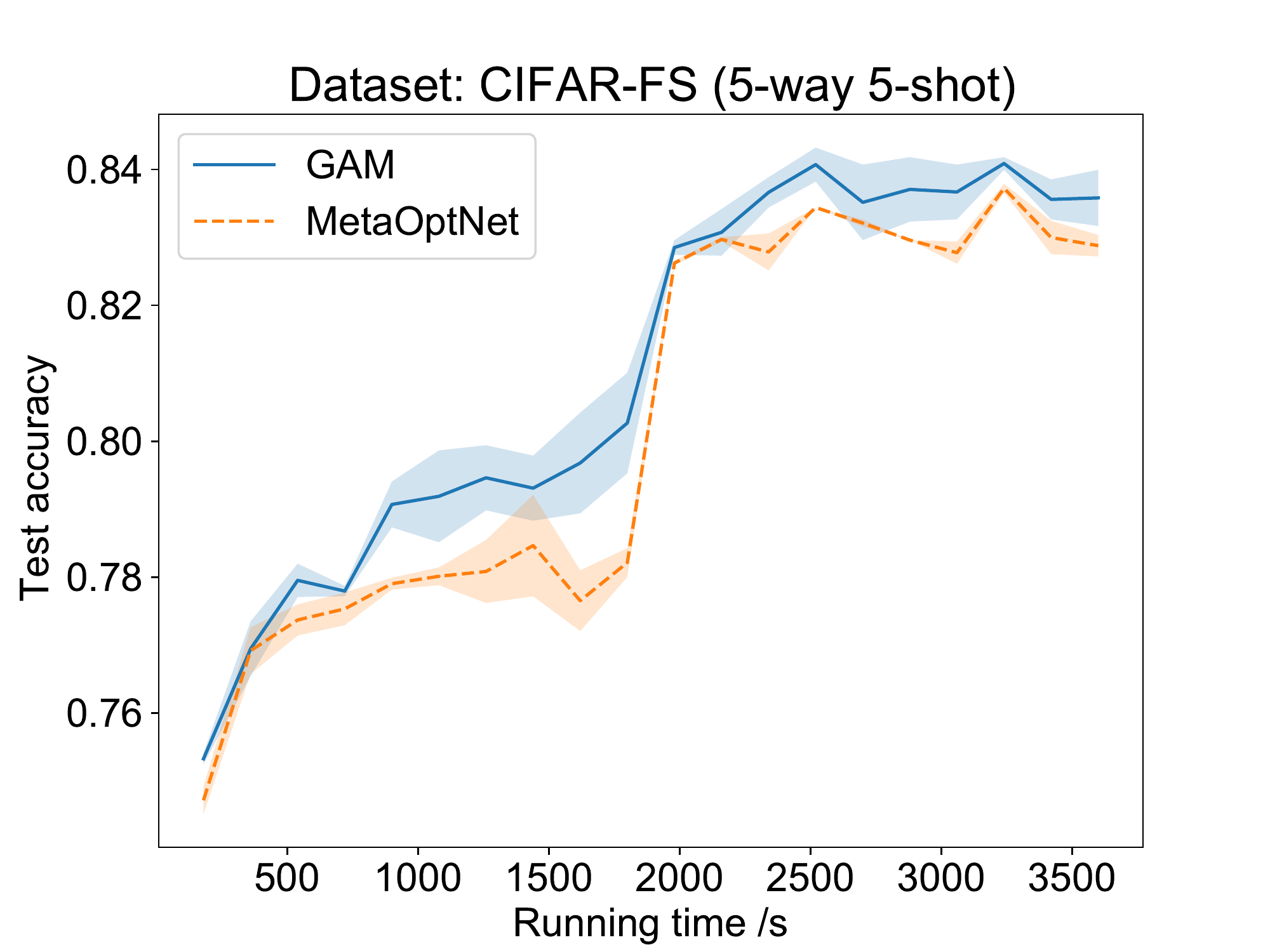}  \hspace{-8mm} \\
\hspace{-6mm}\includegraphics[height=0.2\textwidth]{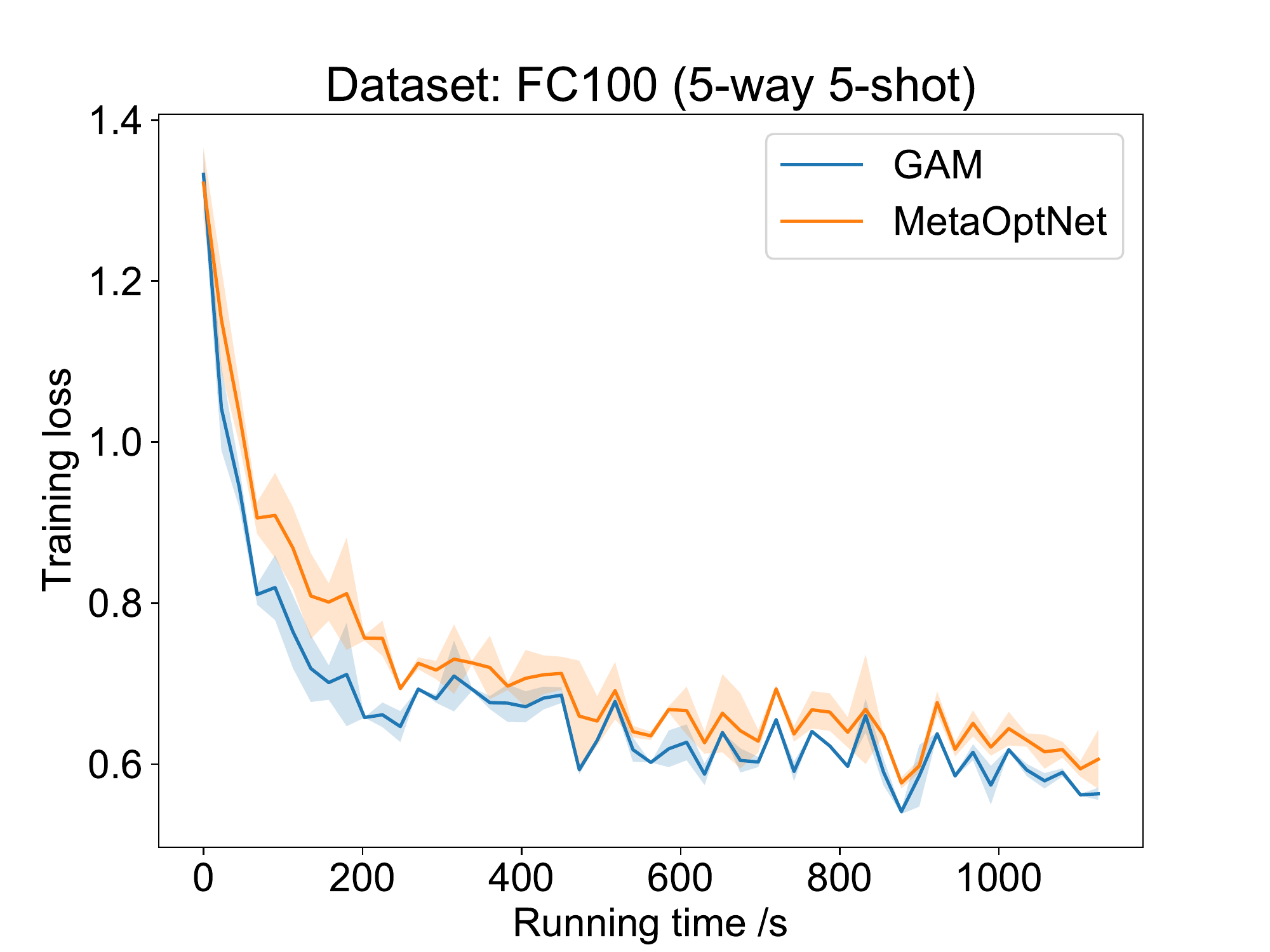} & \hspace{-8mm}
\includegraphics[height=0.2\textwidth]{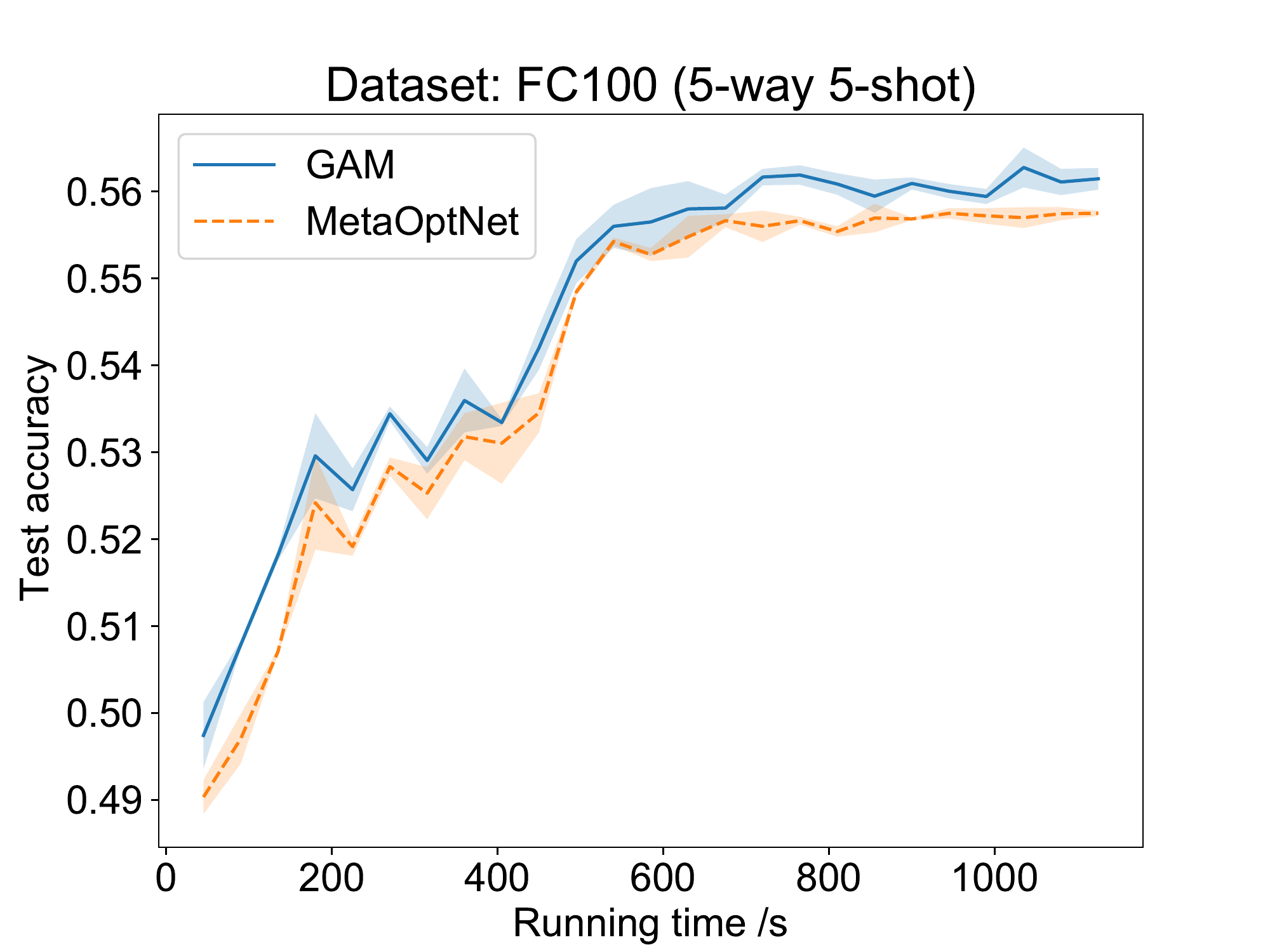}  \hspace{-8mm}
\end{tabular}
\caption{ Comparison of MetaOptNet and gradient approximation method (GAM). For each dataset, left: training loss v.s. running time; right: test accuracy v.s. running time.}
\label{fig:meta-training-shot}
\end{center}
\end{figure}

\subsection{Meta-Learning}
Meta-learning approaches for few-shot learning have been formulated as bilevel optimization problems in \cite{rajeswaran2019meta,lee2019meta, ji2021bilevel}. In particular, the problem in MetaOptNet \cite{lee2019meta} has the form of problem \eqref{biopt} with the lower-level constraints. However, its optimization does not explicitly consider the non-differentiability of the objective function and cannot guarantee convergence.
In the experiment, we compare our algorithm with the optimization in MetaOptNet on datasets CIFAR-FS \cite{R2D2} and FC100 \cite{TADAM}, which are widely used for few-shot learning. Appendix \ref{A2} provides details of 
the problem formulation and the experiment setting.

Fig. \ref{fig:meta-training-shot} compares our gradient approximation method and the direct gradient descent in MetaOptNet \cite{lee2019meta}.
The two algorithms share all training configurations, including
the network structure, the learning rate in each epoch and the batch size. 
For both CIFAR-FS
and FC100 datasets, our method converges faster
than the optimization in MetaOptNet in terms of the training loss and test accuracy, and achieves a higher final test accuracy.
Note that the only difference between the two algorithms in this experiment is the computation of the descent direction. The result shows the Clarke subdifferential approximation in our algorithm works better than the gradient as the descent direction. 
This is consistent with Proposition \ref{prop3}, where a set of representative gradients instead one gradient is more suitable to approximate 
the Clarke subdifferential.
More comparison results with other meta-learning approaches are given in Appendix \ref{A2}.

\section{Conclusion}
We develop a gradient approximation method for the bilevel optimization where the lower-level optimization problem is convex with equality and inequality constraints and the upper-level optimization is non-convex. The proposed method efficiently approximates the Clarke Subdifferential of the non-smooth objective function, and theoretically guarantees convergence. Our experiments validate our theoretical analysis and demonstrate the superior effectiveness of the algorithm. 

\section{Acknowledgements}
This work was partially supported by NSF awards ECCS 1846706 and ECCS 2140175.

\bibliography{aaai23}

\begin{thebibliography}{62}
\providecommand{\natexlab}[1]{#1}

\bibitem[{Achiam et~al.(2017)Achiam, Held, Tamar, and
  Abbeel}]{achiam2017constrained}
Achiam, J.; Held, D.; Tamar, A.; and Abbeel, P. 2017.
\newblock Constrained policy optimization.
\newblock In \emph{International Conference on Machine Learning}, 22--31. PMLR.

\bibitem[{Agrawal et~al.(2019)Agrawal, Amos, Barratt, Boyd, Diamond, and
  Kolter}]{agrawal2019differentiable}
Agrawal, A.; Amos, B.; Barratt, S.; Boyd, S.; Diamond, S.; and Kolter, J.~Z.
  2019.
\newblock Differentiable convex optimization layers.
\newblock \emph{Advances in Neural Information Processing Systems}, 9562--9674.

\bibitem[{Amos and Kolter(2017)}]{amos2017optnet}
Amos, B.; and Kolter, J.~Z. 2017.
\newblock Optnet: Differentiable optimization as a layer in neural networks.
\newblock In \emph{International Conference on Machine Learning}, 136--145.
  PMLR.

\bibitem[{Amos, Xu, and Kolter(2017)}]{amos2017input}
Amos, B.; Xu, L.; and Kolter, J.~Z. 2017.
\newblock Input convex neural networks.
\newblock In \emph{International Conference on Machine Learning}, 146--155.
  PMLR.

\bibitem[{Bagirov, Karmitsa, and
  M{\"a}kel{\"a}(2014)}]{bagirov2014introduction}
Bagirov, A.; Karmitsa, N.; and M{\"a}kel{\"a}, M.~M. 2014.
\newblock \emph{Introduction to nonsmooth optimization: theory, practice and
  software}, volume~12.
\newblock Springer.

\bibitem[{Bagirov et~al.(2020)Bagirov, Gaudioso, Karmitsa, M{\"a}kel{\"a}, and
  Taheri}]{bagirov2020numerical}
Bagirov, A.~M.; Gaudioso, M.; Karmitsa, N.; M{\"a}kel{\"a}, M.~M.; and Taheri,
  S. 2020.
\newblock Gradient sampling methods for nonsmooth optimization.
\newblock In \emph{Numerical Nonsmooth Optimization: State of the Art
  Algorithms}. Springer.

\bibitem[{Bard and Moore(1990)}]{Bard1990}
Bard, J.; and Moore, J. 1990.
\newblock A branch and bound algorithm for the bilevel programming problem.
\newblock \emph{{SIAM} Journal on Scientific and Statistical Computing}, 11.

\bibitem[{Bard and Falk(1982)}]{BARD198277}
Bard, J.~F.; and Falk, J.~E. 1982.
\newblock An explicit solution to the multi-level programming problem.
\newblock \emph{Computers and Operations Research}, 9(1): 77--100.

\bibitem[{Bengio(2000)}]{bengio2000gradient}
Bengio, Y. 2000.
\newblock Gradient-based optimization of hyperparameters.
\newblock \emph{Neural computation}, 12(8): 1889--1900.

\bibitem[{Bertinetto et~al.(2018)Bertinetto, Henriques, Torr, and
  Vedaldi}]{R2D2}
Bertinetto, L.; Henriques, J.~F.; Torr, P.~H.; and Vedaldi, A. 2018.
\newblock Meta-learning with differentiable closed-form solvers.
\newblock \emph{arXiv preprint arXiv:1805.08136}.

\bibitem[{Bottou and Bousquet(2008)}]{bottou2008}
Bottou, L.; and Bousquet, O. 2008.
\newblock The tradeoffs of large scale learning.
\newblock \emph{Advances in Neural Information Processing Systems}, 161--168.

\bibitem[{Burke, Lewis, and Overton(2005)}]{burke2005robust}
Burke, J.~V.; Lewis, A.~S.; and Overton, M.~L. 2005.
\newblock A robust gradient sampling algorithm for nonsmooth, nonconvex
  optimization.
\newblock \emph{SIAM Journal on Optimization}, 15(3): 751--779.

\bibitem[{Chen, Dong, and Wang(2021)}]{chen2021primal}
Chen, Y.; Dong, J.; and Wang, Z. 2021.
\newblock A primal-dual approach to constrained markov decision processes.
\newblock \emph{arXiv preprint arXiv:2101.10895}.

\bibitem[{Clarke(1975)}]{clarke1975generalized}
Clarke, F.~H. 1975.
\newblock Generalized gradients and applications.
\newblock \emph{Transactions of the American Mathematical Society}, 205:
  247--262.

\bibitem[{Cortes and Vapnik(1995)}]{cortes1995support}
Cortes, C.; and Vapnik, V. 1995.
\newblock Support-vector networks.
\newblock \emph{Machine learning}, 20(3): 273--297.

\bibitem[{Dempe(1998)}]{dempe1998implicit}
Dempe, S. 1998.
\newblock An implicit function approach to bilevel programming problems.
\newblock In \emph{Multilevel Optimization: Algorithms and Applications},
  273--294. Springer.

\bibitem[{Dempe and Franke(2016)}]{10.1007/s10589-015-9795-8}
Dempe, S.; and Franke, S. 2016.
\newblock On the solution of convex bilevel optimization problems.
\newblock \emph{Computational Optimization and Applications}, 63(3): 685–703.

\bibitem[{Domke(2012)}]{domke2012generic}
Domke, J. 2012.
\newblock Generic methods for optimization-based modeling.
\newblock In \emph{Artificial Intelligence and Statistics}, 318--326. PMLR.

\bibitem[{Dua and Graff(2017)}]{Dua2019}
Dua, D.; and Graff, C. 2017.
\newblock {UCI} Machine Learning Repository.

\bibitem[{Fiacco(1983)}]{lemke1985introduction}
Fiacco, A.~V. 1983.
\newblock \emph{Introduction to sensitivity and stability analysis in nonlinear
  programming}.
\newblock Academic press.

\bibitem[{Fiacco and Ishizuka(1990)}]{fiacco1990sensitivity}
Fiacco, A.~V.; and Ishizuka, Y. 1990.
\newblock Sensitivity and stability analysis for nonlinear programming.
\newblock \emph{Annals of Operations Research}, 27(1): 215--235.

\bibitem[{Fiacco and McCormick(1970)}]{fiacco1990nonlinear}
Fiacco, A.~V.; and McCormick, G.~P. 1970.
\newblock Nonlinear programming: sequential unconstrained minimization
  techniques.
\newblock \emph{SIAM Review}, 12(4): 593--594.

\bibitem[{Finn, Abbeel, and Levine(2017)}]{MAML}
Finn, C.; Abbeel, P.; and Levine, S. 2017.
\newblock Model-agnostic meta-learning for fast adaptation of deep networks.
\newblock In \emph{International Conference on Machine Learning}, 1126--1135.
  PMLR.

\bibitem[{Franceschi et~al.(2017)Franceschi, Donini, Frasconi, and
  Pontil}]{franceschi2017forward}
Franceschi, L.; Donini, M.; Frasconi, P.; and Pontil, M. 2017.
\newblock Forward and reverse gradient-based hyperparameter optimization.
\newblock In \emph{International Conference on Machine Learning}, 1165--1173.
  PMLR.

\bibitem[{Franceschi et~al.(2018)Franceschi, Frasconi, Salzo, Grazzi, and
  Pontil}]{franceschi2018bilevel}
Franceschi, L.; Frasconi, P.; Salzo, S.; Grazzi, R.; and Pontil, M. 2018.
\newblock Bilevel programming for hyperparameter optimization and
  meta-learning.
\newblock In \emph{International Conference on Machine Learning}, 1568--1577.
  PMLR.

\bibitem[{Ghadimi and Wang(2018)}]{ghadimi2018approximation}
Ghadimi, S.; and Wang, M. 2018.
\newblock Approximation methods for bilevel programming.
\newblock \emph{arXiv preprint arXiv:1802.02246}.

\bibitem[{Giorgi and Zuccotti(2018)}]{Giorgi2018ATO}
Giorgi, G.; and Zuccotti, C. 2018.
\newblock A tutorial on sensitivity and stability in nonlinear programming and
  variational inequalities under differentiability assumptions.
\newblock Technical report, DEM Working Paper Series.

\bibitem[{Gould et~al.(2016)Gould, Fernando, Cherian, Anderson, Cruz, and
  Guo}]{gould2016differentiating}
Gould, S.; Fernando, B.; Cherian, A.; Anderson, P.; Cruz, R.~S.; and Guo, E.
  2016.
\newblock On differentiating parameterized argmin and argmax problems with
  application to bi-level optimization.
\newblock \emph{arXiv preprint arXiv:1607.05447}.

\bibitem[{Grazzi et~al.(2020)Grazzi, Franceschi, Pontil, and
  Salzo}]{grazzi2020iteration}
Grazzi, R.; Franceschi, L.; Pontil, M.; and Salzo, S. 2020.
\newblock On the iteration complexity of hypergradient computation.
\newblock In \emph{International Conference on Machine Learning}, 3748--3758.
  PMLR.

\bibitem[{Hansen, Jaumard, and Savard(1992)}]{10.1137.0913069}
Hansen, P.; Jaumard, B.; and Savard, G. 1992.
\newblock New branch-and-bound rules for linear bilevel programming.
\newblock \emph{SIAM Journal on Scientific Computing}, 13(5): 1194–1217.

\bibitem[{Hardt, Recht, and Singer(2016)}]{hardt2016train}
Hardt, M.; Recht, B.; and Singer, Y. 2016.
\newblock Train faster, generalize better: Stability of stochastic gradient
  descent.
\newblock In \emph{International Conference on Machine Learning}, 1225--1234.
  PMLR.

\bibitem[{Hestenes and Stiefel(1952)}]{hestenes1952methods}
Hestenes, M.~R.; and Stiefel, E. 1952.
\newblock Methods of conjugate gradients for solving.
\newblock \emph{Journal of research of the National Bureau of Standards},
  49(6): 409.

\bibitem[{Hofmann, Schölkopf, and Smola(2008)}]{10.1214/009053607000000677}
Hofmann, T.; Schölkopf, B.; and Smola, A.~J. 2008.
\newblock Kernel methods in machine learning.
\newblock \emph{The Annals of Statistics}, 36(3): 1171 -- 1220.

\bibitem[{Hong et~al.(2020)Hong, Wai, Wang, and Yang}]{hong2020two}
Hong, M.; Wai, H.-T.; Wang, Z.; and Yang, Z. 2020.
\newblock A two-timescale framework for bilevel optimization: Complexity
  analysis and application to actor-critic.
\newblock \emph{arXiv preprint arXiv:2007.05170}.

\bibitem[{Ji et~al.(2020)Ji, Lee, Liang, and Poor}]{JiLLP20}
Ji, K.; Lee, J.~D.; Liang, Y.; and Poor, H.~V. 2020.
\newblock Convergence of meta-learning with task-specific adaptation over
  partial parameters.
\newblock \emph{Advances in Neural Information Processing Systems},
  11490--11500.

\bibitem[{Ji, Yang, and Liang(2021)}]{ji2021bilevel}
Ji, K.; Yang, J.; and Liang, Y. 2021.
\newblock Bilevel optimization: Convergence analysis and enhanced design.
\newblock In \emph{International Conference on Machine Learning}, 4882--4892.
  PMLR.

\bibitem[{Jin et~al.(2021)Jin, Netrapalli, Ge, Kakade, and
  Jordan}]{jin2021nonconvex}
Jin, C.; Netrapalli, P.; Ge, R.; Kakade, S.~M.; and Jordan, M.~I. 2021.
\newblock On nonconvex optimization for machine learning: Gradients,
  stochasticity, and saddle points.
\newblock \emph{Journal of the ACM}, 68(2): 1--29.

\bibitem[{Kiwiel(2007)}]{kiwiel2007convergence}
Kiwiel, K.~C. 2007.
\newblock Convergence of the gradient sampling algorithm for nonsmooth
  nonconvex optimization.
\newblock \emph{SIAM Journal on Optimization}, 18(2): 379--388.

\bibitem[{Kojima(1980)}]{kojima1980strongly}
Kojima, M. 1980.
\newblock Strongly stable stationary solutions in nonlinear programs.
\newblock In \emph{Analysis and computation of fixed points}, 93--138.
  Elsevier.

\bibitem[{Kolstad and Lasdon(1990)}]{kolstad1990derivative}
Kolstad, C.~D.; and Lasdon, L.~S. 1990.
\newblock Derivative evaluation and computational experience with large bilevel
  mathematical programs.
\newblock \emph{Journal of optimization theory and applications}, 65(3):
  485--499.

\bibitem[{Konda and Tsitsiklis(2000)}]{konda2000actor}
Konda, V.~R.; and Tsitsiklis, J.~N. 2000.
\newblock Actor-critic algorithms.
\newblock In \emph{Advances in Neural Information Processing Systems},
  1008--1014.

\bibitem[{Kyparisis(1985)}]{kyparisis1985uniqueness}
Kyparisis, J. 1985.
\newblock On uniqueness of {Kuhn-Tucker} multipliers in nonlinear programming.
\newblock \emph{Mathematical Programming}, 32(2): 242--246.

\bibitem[{Lee et~al.(2019)Lee, Maji, Ravichandran, and Soatto}]{lee2019meta}
Lee, K.; Maji, S.; Ravichandran, A.; and Soatto, S. 2019.
\newblock Meta-learning with differentiable convex optimization.
\newblock In \emph{2019 IEEE/CVF Conference on Computer Vision and Pattern
  Recognition}, 10649--10657.

\bibitem[{Liu, Simonyan, and Yang(2018)}]{liu2018darts}
Liu, H.; Simonyan, K.; and Yang, Y. 2018.
\newblock DARTS: Differentiable architecture search.
\newblock In \emph{International Conference on Learning Representations}.

\bibitem[{Liu et~al.(2021{\natexlab{a}})Liu, Gao, Zhang, Meng, and
  Lin}]{liu2021investigating}
Liu, R.; Gao, J.; Zhang, J.; Meng, D.; and Lin, Z. 2021{\natexlab{a}}.
\newblock Investigating bi-Level optimization for learning and vision from a
  unified perspective: A survey and beyond.
\newblock \emph{IEEE Transactions on Pattern Analysis \& Machine Intelligence}.

\bibitem[{Liu et~al.(2021{\natexlab{b}})Liu, Liu, Zeng, and
  Zhang}]{liu2021towards}
Liu, R.; Liu, Y.; Zeng, S.; and Zhang, J. 2021{\natexlab{b}}.
\newblock Towards gradient-based bilevel optimization with non-convex followers
  and beyond.
\newblock \emph{Advances in Neural Information Processing Systems}, 34:
  8662--8675.

\bibitem[{Lorraine, Vicol, and Duvenaud(2020)}]{lorraine2020optimizing}
Lorraine, J.; Vicol, P.; and Duvenaud, D. 2020.
\newblock Optimizing millions of hyperparameters by implicit differentiation.
\newblock In \emph{International Conference on Artificial Intelligence and
  Statistics}, 1540--1552. PMLR.

\bibitem[{Malanowski(1985)}]{malanowski1985differentiability}
Malanowski, K. 1985.
\newblock Differentiability with respect to parameters of solutions to convex
  programming problems.
\newblock \emph{Mathematical Programming}, 33(3): 352--361.

\bibitem[{Nemirovski et~al.(2009)Nemirovski, Juditsky, Lan, and
  Shapiro}]{nemirovski2009robust}
Nemirovski, A.; Juditsky, A.; Lan, G.; and Shapiro, A. 2009.
\newblock Robust stochastic approximation approach to stochastic programming.
\newblock \emph{SIAM Journal on optimization}, 19(4): 1574--1609.

\bibitem[{Nesterov(1998)}]{nesterov1998introductory}
Nesterov, Y. 1998.
\newblock Introductory lectures on convex programming.

\bibitem[{Oreshkin, Rodr{\'\i}guez~L{\'o}pez, and Lacoste(2018)}]{TADAM}
Oreshkin, B.; Rodr{\'\i}guez~L{\'o}pez, P.; and Lacoste, A. 2018.
\newblock Tadam: Task dependent adaptive metric for improved few-shot learning.
\newblock \emph{Advances in Neural Information Processing Systems}, 719--729.

\bibitem[{Pedregosa(2016)}]{pedregosa2016hyperparameter}
Pedregosa, F. 2016.
\newblock Hyperparameter optimization with approximate gradient.
\newblock In \emph{International Conference on Machine Learning}, 737--746.
  PMLR.

\bibitem[{Rajeswaran et~al.(2019)Rajeswaran, Finn, Kakade, and
  Levine}]{rajeswaran2019meta}
Rajeswaran, A.; Finn, C.; Kakade, S.~M.; and Levine, S. 2019.
\newblock Meta-learning with implicit gradients.
\newblock In \emph{Advances in Neural Information Processing Systems},
  113--124.

\bibitem[{Ralph and Dempe(1995)}]{ralph1995directional}
Ralph, D.; and Dempe, S. 1995.
\newblock Directional derivatives of the solution of a parametric nonlinear
  program.
\newblock \emph{Mathematical programming}, 70(1): 159--172.

\bibitem[{Savard and Gauvin(1994)}]{SAVARD1994265}
Savard, G.; and Gauvin, J. 1994.
\newblock The steepest descent direction for the nonlinear bilevel programming
  problem.
\newblock \emph{Operations Research Letters}, 15(5): 265--272.

\bibitem[{Schramm and Zowe(1992)}]{schramm1992version}
Schramm, H.; and Zowe, J. 1992.
\newblock A version of the bundle idea for minimizing a nonsmooth function:
  Conceptual idea, convergence analysis, numerical results.
\newblock \emph{SIAM journal on optimization}, 2(1): 121--152.

\bibitem[{Shaban et~al.(2019)Shaban, Cheng, Hatch, and
  Boots}]{shaban2019truncated}
Shaban, A.; Cheng, C.-A.; Hatch, N.; and Boots, B. 2019.
\newblock Truncated back-propagation for bilevel optimization.
\newblock In \emph{The 22nd International Conference on Artificial Intelligence
  and Statistics}, 1723--1732. PMLR.

\bibitem[{Shi, Lu, and Zhang(2005)}]{shi2005extended}
Shi, C.; Lu, J.; and Zhang, G. 2005.
\newblock An extended {Kuhn-Tucker} approach for linear bilevel programming.
\newblock \emph{Applied Mathematics and Computation}, 162(1): 51--63.

\bibitem[{Snell, Swersky, and Zemel(2017)}]{proto-net}
Snell, J.; Swersky, K.; and Zemel, R. 2017.
\newblock Prototypical networks for few-shot learning.
\newblock \emph{Advances in Neural Information Processing Systems}, 4080--4090.

\bibitem[{Sow et~al.(2022)Sow, Ji, Guan, and Liang}]{sow2022constrained}
Sow, D.; Ji, K.; Guan, Z.; and Liang, Y. 2022.
\newblock A Constrained Optimization Approach to Bilevel Optimization with
  Multiple Inner Minima.
\newblock \emph{arXiv preprint arXiv:2203.01123}.

\bibitem[{Sung et~al.(2018)Sung, Yang, Zhang, Xiang, Torr, and
  Hospedales}]{sung2018learning}
Sung, F.; Yang, Y.; Zhang, L.; Xiang, T.; Torr, P.~H.; and Hospedales, T.~M.
  2018.
\newblock Learning to compare: Relation network for few-shot learning.
\newblock In \emph{Proceedings of the IEEE conference on computer vision and
  pattern recognition}, 1199--1208.

\bibitem[{Xu, Liang, and Lan(2021)}]{xu2021crpo}
Xu, T.; Liang, Y.; and Lan, G. 2021.
\newblock Crpo: A new approach for safe reinforcement learning with convergence
  guarantee.
\newblock In \emph{International Conference on Machine Learning}, 11480--11491.
  PMLR.

\end{thebibliography}

\onecolumn
\appendix
\noindent {\Large \textbf{Supplementary Materials}}

\section{Implementation Supplement}
\subsection{Computation of gradient matrices}
In lines \ref{line5} and \ref{line7} of Algorithm \ref{alg:framework0}, we need to compute the gradient matrices $\nabla \Phi(x^k)$ and the set of gradients $G(x^{k},\epsilon_k)$.
Notice that both $-M_{+}^{-1}(x) N_{+}(x)$ in the computation of $\nabla \Phi(x^k)$ in \eqref{eq8} and $-M^{S}_{\epsilon}(x^0,y^*(x^0))^{-1} N^{S}_{\epsilon}(x^0,y^*(x^0))$ in the computation of $G(x^{0},\epsilon)$ in \eqref{eq17} have a form of
\begin{equation}
\label{form+}
-\left[\begin{array}{ccccccc}
\nabla_{y}^{2} \mathcal{L}  & \nabla_{y} r^{\top}\\
\nabla_{y} r & 0
\end{array}\right]^{-1}
\left[\begin{array}{c}
\nabla_{x y}^{2} \mathcal{L}\\
\nabla_{x} r
\end{array}\right],
\end{equation}
where $\nabla_{y}^{2} \mathcal{L}^{-1}$ is positive definite at $(y^*(x), \lambda(x), \nu(x), x)$ for any $x$ (shown in the proof (ii) of Theorem \ref{th2} in Appendix).
We can compute \eqref{form+} as follows. First, as $\nabla_{y}^{2} \mathcal{L}^{-1}$ is positive definite, we can use the conjugate gradient (CG) method \cite{hestenes1952methods} to compute $A = \nabla_{y}^{2} \mathcal{L}^{-1} \nabla_{x y}^{2} \mathcal{L}$ and $B= \nabla_{y}^{2} \mathcal{L}^{-1} \nabla_{y} {r}^{\top}$. Second, \eqref{form+} can be written as
\begin{equation}
\label{form1+}
\left[\begin{array}{c}
-A+B(\nabla_{y}{r} B)^{-1} (\nabla_{y}{r} A - \nabla_{x} r) \\
-(\nabla_{y}{r} B)^{-1} (\nabla_{y}{r} A - \nabla_{x} r)
\end{array}\right].
\end{equation}
Let the number of strictly active constraints be $m^{+}$ and the number of non-strictly active constraints be $m^{-}$ in $P(x)$, and $n$ is the number of equality constraints.
Then, $r(x,y)$ is a vector function with at most $n+m^{+}+m^{-}$ dimensions.
The dimension of $\nabla_{y}^{2} \mathcal{L}$ is $d_y$. In machine learning applications, $d_x$ is usually large and $n+m^{+}+m^{-}$ is relatively small. It is shown in \cite{ji2021bilevel} that the computation of $A$ and $B$ in the first step is achievable, and thus so is the computation of \eqref{form1+}.

Paper \cite{amos2017optnet} provides a highly efficient solver to compute the gradient of the solution of a quadratic program, which exploits fast GPU-based batch solves within a primal-dual interior point method. The tool can also be used to compute $-M_{+}^{-1}(x) N_{+}(x)$ and $-M^{S}_{\epsilon}(x^0,y^*(x^0))^{-1} N^{S}_{\epsilon}(x^0,y^*(x^0))$ in Algorithm \ref{alg:framework0}, when the lower-level optimization problem is a quadratic program.

\subsection{Satisfaction of Assumptions}

Here are two reminders of Assumption \ref{a3}.
\begin{remark}
A sufficient condition of Assumption \ref{a3} which is easier to check is that, the solution $y^{*}(x)$ exists for $P\left(x\right)$ and the LICQ holds at $y$ for $P(x)$ for all ${x} \in \mathbb{R}^{d_x}$ and $y \in \mathbb{R}^{d_y}$.
\end{remark}
\begin{remark}
The equality constraint cannot be replaced by two inequality constraints, i.e., $p = 0$ is replaced by $p \leq 0$ and $-p \leq 0$. Otherwise, the LICQ is violated.
\end{remark}
Note that all optimization problems in the experiments of this paper satisfy Assumptions \ref{a1}, \ref{a2}, \ref{a3}. The details are shown in Appendix \ref{A0}.

\section{Experimental Supplement}\label{sc: expsetting}
\label{A0}
All experiments are executed on a computer with a 4.10 GHz Intel Core i5 CPU and an RTX 3080 GPU.

\subsection{Hyperparameter Optimization}
\label{A1}

In a machine learning problem, given a hyperparameter $\Lambda$, the learner is to minimize the training error and the optimal parameter is denoted as $w^*(\Lambda)$.
Hyperparameter optimization is to search for the best hyperparameter $\Lambda^{*}$ for the learning problem, which can be formulated as a bilevel  optimization problem. 
In particular, it is to minimize the validation error of the learner’s parameter $w^{*}(\Lambda)$ in the upper-level optimization, where $w^{*}(\Lambda)$ is the minimizer of training error in the lower-level optimization under the hyperparameter $\Lambda$. 

Hyperparameter optimization has been widely studied in \cite{pedregosa2016hyperparameter,franceschi2017forward, franceschi2018bilevel,lorraine2020optimizing,ji2021bilevel}.
However, these methods cannot handle with hyperparameter optimization of constrained learning problems, such as the supported vector machine (SVM) classification \cite{cortes1995support}, safe reinforcement learning \cite{achiam2017constrained,chen2021primal,xu2021crpo}.
We apply the proposed algorithm to hyperparameter optimization of constrained learning problem, which is formulated as
$$
\begin{aligned}
&\min _{\Lambda}  \ \Phi(\Lambda)  = \mathcal{L}_{\mathcal{D}_{\text {val }}}(w^{*})=\frac{1}{\left|\mathcal{D}_{\text {val }}\right|} \sum_{z \in \mathcal{D}_{\text {val }}} \mathcal{L}\left(w^{*} ; z \right) \\
& \text { s.t. }  w^{*}=\underset{w}{\arg \min } \{  \mathcal{F}_{\mathcal{D}_{\text {tr }}}(\Lambda, w): 
p\left(\Lambda, w\right) \leq 0; q\left(\Lambda, w\right) = 0 \},
\end{aligned}
$$
where $\mathcal{D}_{\text {val }}$ and $\mathcal{D}_{\text {tr }}$ are validation and training data, $\mathcal{L}_{\mathcal{D}_{\text {val }}}$ is the loss function of model parameter $\omega$ on data $\mathcal{D}_{\text {val }}$.
The lower-level optimization is the training of model parameter $w$, where $\mathcal{F}_{\mathcal{D}_{\text {val }}}$ is the training objective on $\mathcal{D}_{\text {tr }}$ and $p$, $q$ are the constraints.

\subsubsection{Hyperparameter Optimization of SVM}
The optimization problem for SVM is: 
\begin{equation}
\label{eqsvm_opt}
\begin{aligned}
(w^{*}, b^{*}, {\xi}^{*}) =&\arg\min_{w, b, \xi} \ \frac{1}{2}\|w\|^{2}+ \frac{1}{2}\sum_{i=1}^{N} {e^{c_i}} \xi_{i}^2 \\
& \text { s.t. } \  l_{i}\left(w^{\top} \phi(z_{i})+b\right) \geq 1-\xi_{i}, \ i=1,2, \ldots N,\\
\end{aligned}
\end{equation}
Here, $z_i$ is the data point and $y_i$ is the label, and $(z_i;l_i) \in \mathcal{D}_{\text {tr }}$ for all $1 \leq i \leq N$. and $(z_i;y_i) \in \mathcal{D}_{\text {tr }}$ for all $1 \leq i \leq N$. The vector function $\phi(z_i)$ is the high dimension feature for point $z_i$. The kernel function is defined as $K\left(z_{i}, z_{j}\right)=\phi\left(z_{i}\right)^{T} \phi\left(z_{j}\right)$.
The hyperparameter optimization of SVM is formulated as 
\begin{equation}
\label{ho_svm_upper}
    \min _{c}  \ \Phi(c)  =  \mathcal{L}_{\mathcal{D}_{\text {val }}}(w^{*},b^{*}),
\end{equation}
where $w^{*},b^{*}$ are given in \eqref{eqsvm_opt} and the optimized hyperparameter is $c=[c_1, \dots, c_N]$.
To satisfy Assumption \ref{a2}, we set the objective function of \eqref{eqsvm_opt} as $\frac{1}{2}\|w\|^{2}+ \frac{1}{2} \sum_{i=1}^{N} {e^{c_i}} \xi_{i}^2 + \frac{1}{2} \mu b^2$ where $\mu$ is a small positive number. Then, the objective function is strongly-convex w.r.t $(w, b, \xi)$. It is easy to justify the LICQ in Assumption \ref{a3} is satisfied. 

The upper-level objective function is defined as 
$\mathcal{L}_{\mathcal{D}_{\text {val }}}(w^{*},b^{*})= \frac{1}{\left|\mathcal{D}_{\text {val }}\right|} \sum_{(z,l) \in \mathcal{D}_{\text {val }}} \mathcal{L}\left(w^{*},b^* ; z,l \right)$,
where $\mathcal{L}(w^{*},b^{*} ; \mathcal{D}_{\text {val }})$ is defined as 
$\mathcal{L}\left(w^{*},b^* ; z,l \right)=\sigma((\frac{-l(z^{\top}{w^{*}}+b)}{\|w^{*}\|})$ and $\sigma(x)=\frac{1-e^{-x}}{1+e^{-x}}$.
Here, $\frac{l(z^{\top}{w^{*}}+b)}{\|w^{*}\|}$ is the signed distance between point $z$ and the decision plane $z^{\top}{w^{*}}+b=0$, where $\frac{l(z^{\top}{w^{*}}+b)}{\|w^{*}\|}>0$ when the prediction is correct and $\frac{l(z^{\top}{w^{*}}+b)}{\|w^{*}\|}<0$ when the prediction is incorrect. Thus, $\mathcal{L}_{\mathcal{D}_{\text {val }}}(w^{*},b^{*})$ is a differentiable surrogate function of the validation accuracy.

When the feature function $\phi$ is not tractable, the hyperparameter $c$ of \eqref{eqsvm_opt} can not be directly optimized.
For example, in kernelized SVM \cite{10.1214/009053607000000677}, under most kernel functions, e.g., Gaussian kernel and polynomial kernel $\phi$ are  unknown or very complex. 
Then, it is hard to compute $\nabla w^{*} (c)$ by Theorem \ref{th1}. 
Therefore, in kernelized SVM, we solve the dual problem of \eqref{eqsvm_opt}:
\begin{equation}
\label{eqsvm_opt_dual}
\begin{aligned}
\min _{\alpha} \ & \frac{1}{2} \alpha^{\top} (Q + C^{-1}) \alpha -\sum_{i=1}^{n} \alpha_{i}\\
\text { s.t. } & \ \sum_{i=1}^{n} \alpha_{i} y_{i}=0 \\
&\alpha_{i} \geq 0, \ i=1,2, \ldots N,
\end{aligned}
\end{equation}
where $Q$ is an $n$ by $n$ positive semi-definite matrix with $Q_{i j} \equiv y_{i} y_{j} K\left(z_{i}, z_{j}\right)$, and $C=\operatorname{diag}\left(e^{c_{1}}, \ldots, e^{c_{n}}\right)$.
Since the strong duality holds for problem (\ref{eqsvm_opt}), we have 
$$
w^{*}=\sum_{i=1}^{n} \alpha_{i}^{*} y_{i} \phi\left(z_{i}\right)
$$
and
$$
b^{*}=y_{i}(1-e^{-c_i}\alpha_i)-\sum_{j=1}^{n} \alpha_{j}^{*} y_{j}\phi\left(z_{j}\right)^{\top} \phi\left(z_{i}\right)=y_{i}(1-e^{-c_i}\alpha_i)-\sum_{j=1}^{n} \alpha_{j}^{*} y_{j} K\left(z_{j}, z_{i}\right)
$$
for any support vector $z_{i}$ with $\alpha_{i}^{*}>0$.
Following the kernel method, the computation of $\phi$ is not required for both processes of model training and prediction.
The prediction of $z^{new}$ is
$$\operatorname{sign}\{ {w^{*}}^{\top}\phi(z^{new})+b^*\}=\operatorname{sign}\{\sum_{i=1}^{n} \alpha_{i}^{*} y_{i} K\left(z_{i},z_{new}\right)+b^{*}\}.$$
Assumption \ref{a2} is satisfied, since the objective function $\frac{1}{2} \alpha^{\top} (Q+C^{-1}) \alpha -\sum_{i=1}^{n} \alpha_{i}$ where $Q$ is a positive semi-definite matrix and $C^{-1}=\operatorname{diag}\left(e^{-c_{1}}, \ldots, e^{-c_{n}}\right)$ is positive definite, then the objective function is strongly-convex. Since there exists $i$ such that $\alpha_{i}^{*} > 0$, then it is easy to justify the LICQ in Assumption \ref{a3} is satisfied.

In the experiment, we consider hyperparameter optimization of the linear SVM model and the kernelized  SVM model.
\textbf{Linear SVM: } The feature function is $\phi(x)=x$. Both lower-level problems (\ref{eqsvm_opt}) and (\ref{eqsvm_opt_dual}) works for hyperparameter optimization of Linear SVM. Here, we solve the bilevel problem \eqref{eqsvm_opt}.
\textbf{kernelized SVM: } We apply the polynomial kernel, i.e., $K(z, z^{\prime})=\phi(z)^{T} \phi(z^{\prime})=(\gamma z^{\top}z^{\prime}+r)^{d}$, where $\gamma=1$ and $d=3$. We test our algorithm on a diabetes dataset in \cite{Dua2019}.
For Algorithm \ref{alg:framework0}, we set $\gamma=0.3$, $\epsilon_0=0.3$, $\beta=0.5$ and fix the total iteration number as 60.

\subsubsection{Data Hyper-Cleaning}
We formulate the data hyper-cleaning as the hyperparameter optimization of SVM, where the upper-level optimization problem is shown in \eqref{ho_svm_upper} and the lower-level optimization problem is shown in \eqref{eqsvm_opt}. After the optimization of the hyperparameter $c$, the penalty term $e^{c_i}$ which corresponds to the corruption data $(z_i,y_i)$ will be close to $0$. Thus, the corruption data $(z_i,y_i)$ is detected and almost does no affect training and the prediction of the classifier model. We conduct experiments on a dataset of breast cancer provided in \cite{Dua2019}.
For Algorithm \ref{alg:framework0}, we set $\gamma=0.3$, $\epsilon_0=0.3$, $\beta=0.5$ and fix the total iteration number as 30.

\subsection{Meta-learning}
\label{A2}
Meta-learning for few-shot learning is to learn a shared prior parameter across a distribution of tasks, such that a simple learning step with few-shot data based on the prior leads to a good adaptation to the task in the distribution. In particular, the training task $\mathcal{T}_{i}$ is sampled from distribution $P_{\mathcal{T}}$. Each task $\mathcal{T}_{i}$ is characterized by its training data $\mathcal{D}_{i}^{tr}$ and test data $\mathcal{D}_{i}^{test}$. 
The goal of meta-learning is to find a good parameter ${\phi}$ and a base learner $\mathcal{A}$, such that the task-specific parameter $w^{i}=\mathcal{A}({\phi}, \mathcal{D}_{i}^{tr})$ has a small test loss $\mathcal{L}(w^{i}, \phi, \mathcal{D}_{i}^{test})$. 

The training of meta-learning can be formulated as a constrained bilevel optimization problem \cite{lee2019meta}. The upper-level optimization is to extract features from the input data. The multi-class SVM served as the base learner $\mathcal{A}$ in the lower-level optimization to classify the data on its extracted features. In particular, the feature extraction model $f_{{\phi}}$ maps from the image $x_n$ to its features denoted as $f_{{\phi}}(x_n)$.
The multi-class SVM in the lower-level optimization is a constrained optimization problem:
\begin{equation}
\label{metalearningopt}
\begin{aligned}
w^{i}=\mathcal{A}\left(\mathcal{D}_i^{\text {tr }} ; {\phi}\right)=\underset{\{{w}_{k}\},\{\xi_{n}\}}{\arg \min }  \frac{1}{2} \sum_{k}\left\|{w}_{k}\right\|_{2}^{2}+C \sum_{n} \xi_{n} \\
\text{s.t. }
{w}_{y_{n}} \cdot f_{{\phi}}\left({x}_{n}\right)-{w}_{k} \cdot f_{{\phi}}\left({x}_{n}\right) \geq 1-\delta_{y_{n}, k}-\xi_{n}, \forall n, k 
\end{aligned}
\end{equation}
where $\mathcal{D}^{ {tr }}_i=\left\{\left({x}_{n}, y_{n}\right)\right\}$ with image ${x}_{n}$ and its label $y_n$, $C$ is the regularization parameter and $\delta_{\cdot,\cdot}$ is the Kronecker delta function. Here, $k$ is the index of feature $f_{{\phi}}(x_n)$, and $n$ is the index of the data.
The upper-level optimization is
\begin{equation}
\label{metalearningupper}
\min_{\phi} \sum_{\mathcal{T}_{i} \in P_{\mathcal{T}}} \mathcal{L}\left(w^i, \phi, \mathcal{D}^{ {test }}_i \right).
\end{equation}
where
$$
\mathcal{L}\left(w^i, \phi, \mathcal{D}^{ {test }}_i \right)= 
\sum_{({x}, y) \in \mathcal{D}^{ {test }}}\left[-\gamma {w}_{y}^i \cdot f_{\phi}({x})+\log \sum_{k} \exp \left(\gamma {w}_{k}^i \cdot f_{\phi}({x})\right)\right]
$$
and $w^i$ is given in \eqref{metalearningopt}. Here, $\mathcal{L}\left(w^i, \phi, \mathcal{D}^{ {test }}_i \right)$ is the negative log-likelihood loss under the feature extraction parameter ${{\phi}}$ and the SVM parameter ${w}^i$ optimized in lower-level optimization \eqref{metalearningopt}.
Then, the upper-level optimization \eqref{metalearningupper} to find the best feature extraction parameter ${{\phi}}$.

Following the experiment setting in MetaOptNet \cite{lee2019meta}, we use a ResNet-12 network as the feature extraction mapping $f_{{\phi}}()$. Since the line search in Algorithm \ref{alg:framework0} is not convenient for the meta-learning problem, we compute the descent direction in \ref{alg:framework0} and use the SGD method with Nesterov momentum of 0.9 and weight decay of 0.0005 to solve the problem. Each mini-batch
consists of 8 episodes. The model was meta-trained for 30 epochs, with each epoch consisting of 1000 episodes. The
learning rate was initially set to 0.1, and then changed to
0.006, 0.0012 at epochs 10, 20 and 25, respectively. We use the configurations for both our method and the method in MetaOptNet \cite{lee2019meta}.

The comparison to previous work on CIFAR-FS and FC100 in the aspect of prediction accuracy is shown in Table \ref{tab:CIFAR}. It is shown that the final test accuracy of our optimization algorithm is slightly better than that of MetaOptNet-SVM \cite{lee2019meta}. 

\begin{table*}[htb]
\caption{\textbf{Comparison to prior work on CIFAR-FS and FC100.} Average few-shot classification accuracies (\%) with 95\% confidence intervals on CIFAR-FS and FC100. \textsuperscript{$\ast$}CIFAR-FS results from \cite{R2D2}. FC100 result from \textsuperscript{$\dagger$}\cite{TADAM} and \textsuperscript{$\ddag$}\cite{ji2021bilevel}. All models are trained on the original training data of CIFAR-FS and FC100 in \cite{lee2019meta}, and validation data are not included.} 
\label{tab:CIFAR}
\begin{center}
\begin{small}
\begin{tabular}{@{}llc@{}cc@{}c@{}cc@{}}
\hline
\toprule
&  & \multicolumn{2}{c}{\textbf{CIFAR-FS 5-way}} & \phantom{ab} & \multicolumn{2}{c}{\textbf{FC100 5-way}} \\
\cmidrule{3-4} \cmidrule{6-7}
\textbf{model} && \textbf{1-shot} & \textbf{5-shot} && \textbf{1-shot} & \textbf{5-shot}  \\
\hline
MAML\textsuperscript{$\ast$}\textsuperscript{$\ddag$} \cite{MAML}  &&  58.9 $\pm$ 1.9 \quad & \quad 71.5 $\pm$ 1.0  && - \quad & \quad 47.2 \\
Prototypical Networks\textsuperscript{$\ast$}\textsuperscript{$\dagger$} \cite{proto-net} && 55.5 $\pm$ 0.7 \quad & \quad 72.0 $\pm$ 0.6 && 35.3 $\pm$ 0.6 \quad & \quad 48.6 $\pm$ 0.6 \\
Relation Networks\textsuperscript{$\ast$} \cite{sung2018learning}  && 55.0 $\pm$ 1.0 \quad & \quad 69.3 $\pm$ 0.8 && - \quad & \quad - \\
R2D2 \cite{R2D2}  && 65.3 $\pm$ 0.2 \quad & \quad 79.4 $\pm$ 0.1 && - \quad & \quad -\\
TADAM \cite{TADAM}  && - \quad & \quad - && 40.1 $\pm$ 0.4 \quad & \quad 56.1 $\pm$ 0.4\\
ProtoNets \cite{proto-net} && \textbf{72.2 $\pm$ 0.7} \quad & \quad 83.5 $\pm$ 0.5 && 37.5 $\pm$ 0.6 \quad & \quad 52.5 $\pm$ 0.6 \\
MetaOptNet-RR \cite{lee2019meta}   && \textbf{72.6 $\pm$ 0.7} \quad & \quad \textbf{84.3 $\pm$ 0.5} && 40.5 $\pm$ 0.6 \quad & \quad 55.3 $\pm$ 0.6\\
MetaOptNet-SVM \cite{lee2019meta}  && \textbf{72.0 $\pm$ 0.7} \quad & \quad \textbf{84.2 $\pm$ 0.5} && 41.1 $\pm$ 0.6 \quad & \quad 55.5 $\pm$ 0.6 \\
MetaOptNet-SVM-GAE (ours)  && \textbf{72.2 $\pm$ 0.7} \quad & \quad \textbf{84.6 $\pm$ 0.6} && \textbf{41.9 $\pm$ 0.6} \quad & \quad \textbf{56.4 $\pm$ 0.8} \\
\bottomrule
\hline
\end{tabular}
\end{small}
\end{center}
\end{table*}

\section{Proof and Analysis}

Firstly, we clarify notations used in this section.
All notations used in this section are the same as in the main body of the paper, except $J(x)$, $J^{+}(x)$ and $J^{0}(x)$ in Section \ref{section3}. In Section \ref{section3}, 
we simplify notations in \eqref{def1} and denote $J(x,y^{*}(x))$ as $J(x)$, $J^{+}(x,y^{*}(x),{\lambda}(x))$ as $J^{+}(x)$, and $J^{0}(x,y^*(x),\lambda(x))$ as $J^{0}(x)$, because the optimal solution $y^{*}(x)$ and the Lagrangian multipliers ${\lambda}$ and ${\nu}$ are uniquely determined by $x$. 

This section keeps all definitions in Definition \ref{def1}, and simplify $J^{+}(x,y,{\lambda})$ as $J^{+}(x,y)$, and $J^{0}(x,y,\lambda)$ as $J^{0}(x,y)$, where the KKT conditions hold at $y$ for $P(x)$.
We can do this because Lagrangian multipliers $\lambda$ and $\mu$ are unique and determined by $x$ and $y$ when 
the KKT conditions hold at $y$ for $P(x)$ and the LICQ holds (Assumption \ref{a3}) \cite{kyparisis1985uniqueness}.

\label{proof_app}
\subsection{Proof of Theorems \ref{th0} and \ref{th1}}

We first list Definition \ref{def4}, Theorems \ref{thm1} and \ref{thm1+}, which are shown in \cite{lemke1985introduction,Giorgi2018ATO,kojima1980strongly}, then introduce Lemmas \ref{lemma0}, \ref{thm4} and \ref{thm3}. Finally, we prove Theorem \ref{th2}, which is the full version of the combination of Theorems \ref{th0} and \ref{th1}.

\begin{definition}
\label{def4}
Suppose that the KKT conditions hold at $\hat{y}$ for $P(x)$ with the Lagrangian multipliers $\hat{\lambda}$ and $\hat{\nu}$, the Strong Second Order Sufficient Conditions (SSOSC) hold at $\hat{y}$ if
$$
z^{\top} \nabla_{y}^{2} \mathcal{L}\left(\hat{y}, \hat{\lambda}, \hat{\nu}, x\right) z>0
$$
for all $z \neq 0, z \in Z\left(\hat{y},x\right)$, where $\mathcal{L}$ is the Lagrangian associated with $P(x)$, and $Z\left(\hat{y},x\right)$ is defined by 
$$
Z\left(\hat{y},x\right) \triangleq \{  z \in \mathbb{R}^{d_y}: 
\nabla_y p_{j}\left(x,\hat{y}\right) z=0, \ \hat{\lambda}_j>0;
\nabla_y q_{i}\left(x,\hat{y}\right) z=0, \ 1 \leq i \leq n \}.
$$
\end{definition}

\begin{theorem}[\cite{lemke1985introduction,Giorgi2018ATO}]
\label{thm1} 
Consider the problem $P\left(x^{0}\right)$. Suppose that Assumption \ref{a1} holds. Suppose that $y^{0} \in K\left(x^{0}\right)$ and the KKT conditions hold at $y^{0}$ with the Lagrangian multipliers $\lambda^{0}, {\nu}^{0}$. Moreover, suppose that the LICQ holds at $y^{0}$, the SCSC holds at $y^{0}$ w.r.t. $\lambda^{0}$, and the SSOSC hold at $y^{0}$, with $\left(\lambda^{0}, {\nu}^{0}\right)$. Then, 

\begin{itemize}
\item[(\romannumeral1)] $y^{0}$ is a locally unique local minimum of $P\left(x^{0}\right)$, i.e., there exists $\delta > 0$ such that, for all $y \in \mathcal{B}(y^0,\delta)$, $y^{0}$ is the unique local minimum of $P\left(x^{0}\right)$. The associated Lagrangian multipliers $\lambda^{0}$ and ${\nu}^{0}$ are unique.
\item[(\romannumeral2)] There exists $\epsilon > 0$ such that, there exists a unique continuously differentiable vector function
$$
z(x) \triangleq [y(x)^{\top}, \lambda(x)^{\top}, \nu(x)^{\top}]^{\top},
$$
which is defined on $\mathcal{B}(x^0,\epsilon)$, and
$y(x)$ is a locally unique local minimum of $P(x)$. The KKT conditions hold at $y(x)$ with unique associated Lagrangian multipliers $\lambda(x)$ and $\nu(x)$.
\item[(\romannumeral3)] The LICQ and the SCSC hold at $y(x)$ for $P(x)$ for all $x \in \mathcal{B}(x^0,\epsilon)$.
\item[(\romannumeral4)] The gradient of $z(x)$ is given as 
$$
\left[\begin{array}{c}
\nabla_{x} y(x^0) \\
\nabla_{x} \lambda(x^0) \\
\nabla_{x} \nu(x^0)
\end{array}\right]=-M(x^0,y^0,\lambda^{0},{\nu}^{0})^{-1} N(x^0,y^0,\lambda^{0},{\nu}^{0}),
$$
where 
$$
M\triangleq
\left[\begin{array}{ccccccc}
\nabla_{y}^{2} \mathcal{L} & \left(\nabla_{y} p_{1}\right)^{\top} & \cdots & \left(\nabla_{y} p_{m}\right)^{\top} & \left(\nabla_{y} q\right)^{\top} \\
\lambda_{1} \nabla_{y} p_{1} & p_{1} & \cdots & 0  & 0 \\
\vdots & \vdots & \ddots & 0 & \vdots \\
\lambda_{m} \nabla_{y} p_{m} & 0 & \cdots & p_{m} & 0 \\
\nabla_y q & 0 & 0 & 0 & 0 
\end{array}\right]
$$
with $M(x^0, y^0)$ being nonsingular and 
$$
N\triangleq
\left[\nabla_{x y}^{2} \mathcal{L}^{\top}, \lambda_1\left(\nabla_{x} p_1\right)^{\top}, \cdots, \lambda_m\left(\nabla_{x} p_m\right)^{\top}, \nabla_{x} q^{\top} \right]^{\top}.
$$
\end{itemize}
\end{theorem}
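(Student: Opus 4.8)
The plan is to realize the KKT system as a parametrized nonlinear equation and invoke the Implicit Function Theorem (IFT), so that the whole statement collapses onto one algebraic fact: the Jacobian $M$ is nonsingular at the base point $(y^0,\lambda^0,\nu^0,x^0)$. Define $H(y,\lambda,\nu,x)$ to be the map whose components are the stationarity residual $\nabla_y \mathcal{L}(y,\lambda,\nu,x)$, the products $\lambda_j p_j(x,y)$ for $1\le j\le m$, and the equality residual $q(x,y)$. By Assumption \ref{a1}, $H$ is continuously differentiable, and the KKT conditions at $y^0$ say precisely that $H(y^0,\lambda^0,\nu^0,x^0)=0$. A direct differentiation identifies the Jacobian of $H$ in $(y,\lambda,\nu)$ with the matrix $M$ in the statement and its Jacobian in $x$ with $N$. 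Thus, once $M(x^0,y^0,\lambda^0,\nu^0)$ is nonsingular, IFT supplies $\epsilon>0$ and a unique $C^1$ map $z(x)=[y(x)^\top,\lambda(x)^\top,\nu(x)^\top]^\top$ on $\mathcal{B}(x^0,\epsilon)$ with $z(x^0)=(y^0,\lambda^0,\nu^0)$, $H(z(x),x)=0$, and the derivative formula $\nabla z(x)=-M^{-1}N$ of part (iv).

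The main obstacle is the nonsingularity of $M$, and this is where LICQ, SCSC and SSOSC all enter. I would first use SCSC to split the inequalities into the strictly active set $J^+(x^0)=J(x^0)$ (where $\lambda_j^0>0$, $p_j^0=0$) and the inactive set (where $\lambda_j^0=0$, $p_j^0<0$), noting that SCSC rules out non-strictly active constraints. For inactive $j$ the slackness row of $M$ has a single nonzero entry $p_j^0\neq0$ on its diagonal, so a Laplace expansion along these rows removes the corresponding rows and columns, reducing $\det M$ to $\big(\prod_{j\text{ inactive}}p_j^0\big)$ times the determinant of the active block; for each active $j$ I then factor the positive multiplier $\lambda_j^0$ out of its slackness row, which (using $p_j^0=0$ there) turns the active block into the symmetric saddle-point matrix $M_+(x^0)$ of Theorem \ref{th1}. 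Hence $M$ is nonsingular iff $M_+(x^0)$ is. To show $M_+(x^0)$ nonsingular I take a null vector $(u,v,w)$: the lower blocks give $\nabla_y p_{J^+}u=0$ and $\nabla_y q\,u=0$, i.e. $u\in Z(y^0,x^0)$, while left-multiplying the top block by $u^\top$ and using these orthogonality relations yields $u^\top\nabla_y^2\mathcal{L}\,u=0$; SSOSC then forces $u=0$, after which the top block reads $\nabla_y p_{J^+}^\top v+\nabla_y q^\top w=0$, and LICQ forces $v=0$, $w=0$. This establishes (iv) and the existence part of (ii).

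It remains to upgrade the solution of the reduced equation $H=0$ to a genuine KKT point with the claimed qualitative behavior, which I carry out by continuity after possibly shrinking $\epsilon$. Since $\lambda_j(x)$ and $p_j(x,y(x))$ are continuous and $\lambda_j^0>0$ for active $j$, we keep $\lambda_j(x)>0$ on $\mathcal{B}(x^0,\epsilon)$, whence $\lambda_j(x)p_j(x,y(x))=0$ forces $p_j(x,y(x))=0$; similarly $p_j^0<0$ for inactive $j$ persists and forces $\lambda_j(x)=0$. This delivers primal feasibility, dual feasibility and complementary slackness, so $y(x)$ is a KKT point; moreover the active set is frozen at $J(x^0)$, every active multiplier stays positive, and the open full-rank condition persists, which is exactly the assertion that LICQ and SCSC hold throughout $\mathcal{B}(x^0,\epsilon)$, giving (iii).

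Finally, for the local-minimality and uniqueness claims I would exploit that SSOSC is an open condition. Because $\nabla_y^2\mathcal{L}$, the active constraint gradients, and hence the subspace $Z(y(x),x)$ vary continuously while SSOSC holds strictly at $y^0$, SSOSC persists at $y(x)$ for $x$ near $x^0$; combined with the KKT conditions just verified, the standard second-order sufficient condition shows each $y(x)$ is a strict local minimizer of $P(x)$, and strictness makes it locally isolated with its multipliers unique. Specializing $x=x^0$ then yields part (i) and completes (ii). I expect the delicate points to be the bookkeeping in the block reduction of $M$ (tracking the $\lambda_j^0$ factors and the complementary-slackness rows) and arranging all the continuity/persistence arguments to hold on a single common radius $\epsilon$.
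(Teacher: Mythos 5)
The paper never proves this statement: Theorem \ref{thm1} is imported as a black box from the sensitivity-analysis literature (\cite{lemke1985introduction,Giorgi2018ATO}, ultimately the Fiacco--McCormick argument) and is only \emph{used} in the appendix, in Lemmas \ref{lemma0}, \ref{thm4}, \ref{thm3} and Theorem \ref{th2}. So there is no in-paper proof to compare against; what you have written is a reconstruction of the classical proof of the cited result, and it is essentially sound. The identification of the KKT system with $H(y,\lambda,\nu,x)=0$ and of $M$, $N$ with its partial Jacobians is correct; the block reduction of $\det M$ (Laplace expansion along the inactive slackness rows, which under SCSC have the single nonzero diagonal entry $p_j^0\neq 0$, then factoring the positive $\lambda_j^0$ out of each active row) correctly reduces nonsingularity of $M$ to nonsingularity of $M_{+}$; and the null-space argument that uses SSOSC to force $u=0$ and LICQ to force $v=w=0$ is the standard and correct route. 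The persistence arguments for the active set, SCSC, LICQ and feasibility on a common radius are also fine.

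The one step you should not wave through is the passage from ``strict local minimizer'' to ``locally unique local minimizer'' in parts (i) and (ii). Strictness of a minimizer, even with quadratic growth, does not by itself exclude a sequence of other local minima accumulating at $y^{0}$. The standard repair is: near $y^{0}$ the active set can only shrink and LICQ persists, so every local minimizer of $P(x^{0})$ in a small ball around $y^{0}$ is itself a KKT point, i.e., a zero of $H(\cdot,x^{0})$; the implicit function theorem you already invoked gives local uniqueness of that zero, so $y^{0}$ is the only candidate, and the same argument applied at each $x\in\mathcal{B}(x^{0},\epsilon)$ gives the local uniqueness claimed in (ii). Relatedly, uniqueness of the multipliers follows from LICQ (the active constraint gradients are linearly independent, so the stationarity equation determines $(\lambda^{0},\nu^{0})$), not from strictness of the minimum. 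With those two points tightened, your argument is a complete proof of the cited theorem.
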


\begin{remark}
If the lower-level optimization problem $P(x^0)$ is unconstrained, the requirements in Theorem \ref{thm1} reduce to that $\nabla_{y}^{2}g\left({y}^0, x^0\right)$  is positive definite.
\end{remark}

\begin{theorem}[\cite{Giorgi2018ATO,kojima1980strongly}]
\label{thm1+} 
Suppose that all requirements except the SCSC in Theorem \ref{thm1} are satisfied at $(y^0, \lambda^0, {\nu}^0)$ for $P\left(x^{0}\right)$. 
Then, 
\begin{itemize}
\item[(\romannumeral1)] $y^{0}$ is a locally unique local minimum of $P\left(x^{0}\right)$. The associated Lagrangian multipliers $\lambda^{0}$ and ${\nu}^{0}$ are unique.
\item[(\romannumeral2)] There exists $\epsilon > 0$ such that, there exists a unique Lipschitz continuous and once directional differentiable vector function
$$
z(x) \triangleq [y(x)^{\top}, \lambda(x)^{\top}, \nu(x)^{\top}]^{\top},
$$
which is defined on $\mathcal{B}(x^0,\epsilon)$, and
$y(x)$ is the locally unique local minimum of $P(x)$ with unique associated Lagrangian multipliers $\lambda(x)$ and $\nu(x)$.

\item[(\romannumeral3)] The LICQ hold at $y(x)$ for $P(x)$ for all $x \in \mathcal{B}(x^0,\epsilon)$.
\end{itemize}
\end{theorem}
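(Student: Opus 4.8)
The plan is to realize the parametric KKT system of $P(x)$ as a generalized equation and invoke the theory of strongly regular generalized equations (in the sense of Robinson), supplemented by Kojima's characterization of strongly stable stationary solutions \cite{kojima1980strongly,Giorgi2018ATO} and the B-differentiability results for such equations. The first target, part (i), is the easiest: uniqueness of the multipliers $(\lambda^0,\nu^0)$ is immediate from LICQ, since the active inequality gradients together with the equality gradients are linearly independent, so the stationarity equation $\nabla_y\mathcal{L}=0$ pins the multipliers down uniquely; and the SSOSC forces strict positivity of $\nabla_y^2\mathcal{L}$ on the critical cone $Z(y^0,x^0)$, which yields a quadratic growth condition making $y^0$ an isolated local minimizer.

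For part (ii), I would write the KKT conditions as a parametric generalized equation $0\in \psi(x,z)+N_C(z)$, where $z=(y,\lambda,\nu)$, $\psi$ collects $\nabla_y\mathcal{L}$, $p$ and $q$, and $C$ encodes the sign constraint $\lambda\geq 0$ so that $N_C$ reproduces complementarity. The central step is to verify that this generalized equation is strongly regular at $(x^0,z^0)$. Kojima's result, together with Robinson's equivalence, shows that LICQ and SSOSC imply strong regularity even when SCSC fails. Robinson's implicit function theorem for strongly regular generalized equations then produces, for some $\epsilon>0$, a single-valued, locally unique, Lipschitz continuous solution $z(x)$ on $\mathcal{B}(x^0,\epsilon)$, with $y(x)$ the locally unique local minimum of $P(x)$ and $\lambda(x),\nu(x)$ its unique multipliers. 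Directional differentiability then follows from the B-differentiability of solutions of strongly regular generalized equations: $\nabla_d z(x^0)$ is the unique solution of the generalized equation obtained by linearizing $\psi$ and retaining the combinatorial structure of the active set. Because SCSC may fail, the non-strictly active indices $J^0(x^0)$ turn this computation into an auxiliary linear complementarity problem, whose solution exists and is unique precisely by strong regularity.

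Part (iii) is a continuity argument. A constraint inactive at $y^0$ stays inactive for $x$ near $x^0$ by continuity of $z(x)$ and strict inequality, so the active set $J(x,y(x))$ remains contained in $J(x^0,y^0)$; a subfamily of a linearly independent family is linearly independent, and since the constraint gradients depend continuously on $x$, linear independence is preserved after shrinking $\epsilon$ if necessary. Hence LICQ holds throughout $\mathcal{B}(x^0,\epsilon)$.

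The main obstacle is the directional-differentiability claim once SCSC is dropped. When SCSC holds, the whole KKT system is smooth and the classical implicit function theorem delivers continuous differentiability, which is exactly Theorem \ref{thm1}. Without SCSC the solution map $z(x)$ is generically nondifferentiable at $x^0$, so one cannot appeal to a smooth implicit function theorem; establishing even one-sided directional derivatives requires the finer strong-regularity machinery and the associated B-derivative formula, in which the activity switching governed by $J^0(x^0)$ must be resolved through the auxiliary complementarity subproblem described above.
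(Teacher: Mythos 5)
The paper does not actually prove Theorem~\ref{thm1+}: it imports the statement wholesale from the cited references \cite{kojima1980strongly,Giorgi2018ATO} (see the sentence preceding Definition~\ref{def4}, which says these theorems ``are shown in'' those works). Your sketch --- uniqueness of multipliers from LICQ, quadratic growth from SSOSC, strong regularity of the KKT generalized equation via Kojima/Robinson yielding a locally unique Lipschitz solution map, B-differentiability for the directional derivative, and a continuity argument for persistence of LICQ --- is precisely the standard argument contained in those references, so it is consistent with (and fills in) the paper's approach; the only step worth flagging is that passing from ``locally unique KKT point'' to ``locally unique local \emph{minimum} of $P(x)$ for nearby $x$'' requires noting that SSOSC persists under perturbation, which is part of Kojima's strong-stability theorem and should be stated explicitly.
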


\begin{lemma}
\label{lemma0}
Suppose that all requirements except the SCSC in Theorem \ref{thm1} hold at $(y^0, \lambda^0, {\nu}^0)$ for $P\left(x^{0}\right)$ (All requirements in Theorem \ref{thm1+} are satisfied).
Let $y(x)$ be the locally unique local minimum of $P(x)$ shown in Theorem \ref{thm1+}.
Then, 
\begin{itemize}
    \item[(\romannumeral1)] There exists $\beta >0$, such that for all $x \in \mathcal{B}(x^0,\beta)$, $p_{j}\left(x, y(x)\right)< 0$ for all $j \not\in J(x^0,y^0)$. 
    \item[(\romannumeral2)] There exists $\xi >0$ and $\delta>0$, such that for all $x^{\prime} \in \mathcal{B}(x^0,\xi)$ and $y^{\prime} \in \mathcal{B}(y(x^{\prime}),\delta)$, $p_{j}\left(x^{\prime}, y^{\prime}\right)<0$ for all $j \not\in J(x^0,y^0)$.
\end{itemize}

\end{lemma}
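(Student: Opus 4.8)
The plan is to derive both parts as direct openness/continuity arguments, resting on three ingredients: (a) the strict feasibility of the inactive constraints at $(x^0,y^0)$, (b) the joint continuity of each $p_j$ supplied by Assumption~\ref{a1}, and (c) the continuity (in fact Lipschitz continuity) of the map $x \mapsto y(x)$ furnished by Theorem~\ref{thm1+}, together with $y(x^0)=y^0$. Since there are only finitely many indices $j \notin J(x^0,y^0)$, it suffices to establish each conclusion for a single such $j$ and then take the minimum of the finitely many radii obtained.

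First I would record the starting fact. For every $j \notin J(x^0,y^0)$, the point $y^0$ is feasible for $P(x^0)$, so $p_j(x^0,y^0)\le 0$; and since $j$ lies outside the active set, Definition~\ref{def1} yields the strict inequality $p_j(x^0,y^0)<0$. This strict slack is what the continuity arguments propagate to a neighborhood.

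For part~(\romannumeral2), fix such a $j$. By Assumption~\ref{a1} the function $p_j$ is continuous on $\mathbb{R}^{d_x}\times\mathbb{R}^{d_y}$, so there is $\rho_j>0$ with $p_j(x,y)<0$ whenever $\|x-x^0\|<\rho_j$ and $\|y-y^0\|<\rho_j$. By Theorem~\ref{thm1+}, $y(\cdot)$ is continuous at $x^0$ with $y(x^0)=y^0$, so I can pick $\xi_j\in(0,\rho_j]$ small enough that $\|y(x')-y^0\|<\rho_j/2$ for all $x'\in\mathcal{B}(x^0,\xi_j)$, and set $\delta_j=\rho_j/2$. Then for any $x'\in\mathcal{B}(x^0,\xi_j)$ and $y'\in\mathcal{B}(y(x'),\delta_j)$, the triangle inequality gives $\|y'-y^0\|\le\|y'-y(x')\|+\|y(x')-y^0\|<\delta_j+\rho_j/2=\rho_j$, while $\|x'-x^0\|<\xi_j\le\rho_j$; hence $p_j(x',y')<0$. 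Taking $\xi=\min_j \xi_j$ and $\delta=\min_j \delta_j$ over the finitely many $j\notin J(x^0,y^0)$ proves part~(\romannumeral2).

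Part~(\romannumeral1) then follows immediately by specializing part~(\romannumeral2) to $y'=y(x')$, which lies in $\mathcal{B}(y(x'),\delta)$ for any positive $\delta$, so $\beta=\xi$ works; alternatively it has a one-line direct proof, since $x\mapsto p_j(x,y(x))$ is continuous at $x^0$ (composition of continuous maps) with value $p_j(x^0,y^0)<0$, and strict negativity persists on a neighborhood. I do not anticipate a genuine obstacle: the entire argument is a persistence-of-strict-inequality statement. The only mild care needed is the two-variable bookkeeping in part~(\romannumeral2)—splitting $\|y'-y^0\|$ via the triangle inequality and shrinking $\xi$ so that $y(x')$ stays within half the admissible radius of $y^0$—and the reduction to finitely many constraints by taking a minimum of radii.
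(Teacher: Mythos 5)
Your proposal is correct and follows essentially the same route as the paper's proof: propagate the strict inequality $p_j(x^0,y^0)<0$ by joint continuity of $p_j$ and continuity of $x\mapsto y(x)$ from Theorem \ref{thm1+}, use the triangle inequality to control $\|y'-y^0\|$ in part (ii), and take a minimum of radii over the finitely many inactive indices. The only (immaterial) difference is that you derive part (i) as a special case of part (ii), whereas the paper proves (i) directly first.
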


\begin{proof}
(\romannumeral1)
For any $j \not\in J(x^0,y^0)$, we have $p_{j}\left(x^0, y^0\right) < 0$. Then, there exists $\epsilon_1>0$ such that $p_{j}\left(x^0, y^0\right) \leq -\epsilon_1$. 
Since the function $p_j$ is continuous at $(x^0, y^0)$ and $y$ is continuous at $x^0$, we have that $p_j(x,y(x))$ is continuous at $x^0$.
Then, there exists $\beta_1 > 0$ such that, for all $x \in \mathcal{B}(x^0,\beta_1)$, we have $|p_{j}\left(x, y(x)\right) -p_{j}\left(x^0, y^0\right)| \leq \frac{1}{2}\epsilon_1$, and then $p_{j}\left(x, y(x)\right)\leq -\frac{1}{2}\epsilon_1$. 
 By selecting the smallest $\beta_1$ over all $j \not\in J(x^0,y^0)$ as $\beta$, (\romannumeral1) is shown.

(\romannumeral2)
Since $p_j$ is continuous at $(x^0, y^0)$, there exists $\delta_1 > 0$ such that, for all $(x^{\prime},y^{\prime}) \in \mathcal{B}((x^0, y^0),\delta_1)$, $|p_{j}\left(x^{\prime}, y^{\prime}\right) -p_{j}\left(x^0, y^0\right)| \leq \frac{1}{2}\epsilon_1$.
Since $y(x)$ is continuous at $x^0$, there exists $\xi_1>0$, for all $x \in \mathcal{B}(x^0,\xi_1)$, $\|y(x)-y(x^0)\|<\delta_1/4$.
Let $\xi_2 = \min\{\delta_1/4,\xi_1\}$. Then, for all $x^{\prime} \in \mathcal{B}(x^0,\xi_2)$ and $y^{\prime} \in \mathcal{B}(y(x^{\prime}),\delta_1/2)$, $\|(x^{\prime},y^{\prime})-(x_0,y_0)\|<\delta_1$. Then $|p_{j}\left(x^{\prime}, y^{\prime}\right) -p_{j}\left(x^0, y^0\right)| \leq \frac{1}{2}\epsilon_1$, and $p_{j}\left(x^{\prime}, y^{\prime}\right)\leq -\frac{1}{2}\epsilon_1$.

By selecting the smallest $\xi_2$ over all $j \not\in J(x^0,y^0)$ as $\xi$ and selecting the smallest $\delta_1/2$ over all $j \not\in J(x^0,y^0)$ as $\delta$, we have that, for all $x^{\prime} \in \mathcal{B}(x^0,\xi)$, we can find $\delta>0$ such that, when ${||y^{\prime}-y(x)||} \leq \delta$, $p_{j}\left(x, y^{\prime}\right)\leq -\frac{1}{2}\epsilon_1$ for all $j \not\in J(x^0,y^0)$. 
\end{proof}

\begin{remark}
Lemma \ref{lemma0} shows that the inactive constraints at $(x^0,y^0)$ are still inactive near $(x^0,y^0)$.
\end{remark}

\begin{lemma}
\label{thm4}
Suppose that all requirements in Theorem \ref{thm1} hold at $(y^0, \lambda^0, {\nu}^0)$ for $P\left(x^{0}\right)$. Define problem $\hat{P}\left(x\right)$ as:
$$
\begin{aligned}
\underset{y}{\arg\min }& \ g(x, y): \\
\text { s.t. } \ & p_{j}\left(x, y\right) \leq 0, j \in J(x^0,y^0), \\ 
& q\left(x, y\right) = 0.
\end{aligned}
$$
Then, the following
properties hold:
\begin{itemize}
    \item[(\romannumeral1)] All requirements and all conclusions in Theorem \ref{thm1} hold at $(y^0, {\lambda}^{0}, {\nu}^0)$ for ${P}\left(x^{0}\right)$, and also hold at $(y^0, {\lambda}^{0}_{J(x^0,y^0)}, {\nu}^0)$ for $\hat{P}\left(x^{0}\right)$. 
\end{itemize}
Let ${z}(x) \triangleq [{y}(x)^{\top}, {\lambda}(x)^{\top}, {\nu}(x)^{\top}]^{\top}$ be the unique continuously differentiable vector function in a neighborhood of $x^0$, such that $y(x)$ is a locally unique local minimum of $P(x)$ with unique associated Lagrangian multipliers $\lambda(x)$ and $\nu(x)$.
Let $\hat{z}(x) \triangleq [\hat{y}(x)^{\top}, \hat{\lambda}(x)^{\top}, \hat{\nu}(x)^{\top}]^{\top}$ be the unique continuously differentiable vector function in a neighborhood of $x^0$, such that $\hat{y}(x)$ is a locally unique local minimum of $\hat{P}(x)$ with unique associated Lagrangian multipliers $\hat{\lambda}(x)$ and $\hat{\nu}(x)$. 
\begin{itemize} 
\item[(\romannumeral2)] We have 
$$
\begin{aligned}
&\nabla_{x} y(x^0)=\nabla_{x} \hat{y}(x^0), \\
&\nabla_{x} {\nu}(x^0)=\nabla_{x} \hat{\nu}(x^0), \\
&\nabla_{x} \lambda_j(x^0)=\nabla_{x} \hat{\lambda}_j(x^0) \text{ when } j \in J(x^0,y^0),\\
&\nabla_{x} \lambda_j(x^0)=0 \text{ when } j \not\in J(x^0,y^0).
\end{aligned} 
$$
\end{itemize}
\end{lemma}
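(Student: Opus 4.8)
The plan is to prove the two parts separately: first transfer the hypotheses of Theorem \ref{thm1} from $P$ to $\hat P$ using complementary slackness, and then exploit the local persistence of the active set (Lemma \ref{lemma0}) together with the LICQ to match the solutions and multipliers of $P$ and $\hat P$ on a neighborhood of $x^0$ before differentiating.

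For part (\romannumeral1), I would verify each hypothesis of Theorem \ref{thm1} for $\hat P(x^0)$ at $(y^0,\lambda^0_{J(x^0,y^0)},\nu^0)$. Twice continuous differentiability is inherited, since the data of $\hat P$ is a subcollection of that of $P$. The point $y^0$ is feasible for $\hat P(x^0)$ with $p_j(x^0,y^0)=0$ for every retained index $j \in J(x^0,y^0)$, so all retained inequality constraints are active there. The crucial observation is that complementary slackness forces $\lambda^0_j=0$ for $j \not\in J(x^0,y^0)$; hence the stationarity equation of $\hat P$ at $(y^0,\lambda^0_{J(x^0,y^0)},\nu^0)$ coincides with that of $P$, and the KKT conditions hold for $\hat P$. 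The LICQ and SCSC transfer at once, because the retained active-constraint gradients $\{\nabla_y p_j(x^0,y^0): j \in J(x^0,y^0)\}\cup\{\nabla_y q_i(x^0,y^0)\}$ and the retained positive multipliers are exactly those of $P$. For the SSOSC I would note that, again because the dropped multipliers vanish, $\nabla_y^2 \hat{\mathcal{L}}(y^0,\lambda^0_{J(x^0,y^0)},\nu^0,x^0)=\nabla_y^2 \mathcal{L}(y^0,\lambda^0,\nu^0,x^0)$, and that the SCSC makes the critical cone $\hat Z(y^0,x^0)$ equal to $Z(y^0,x^0)$; thus positive definiteness on the cone transfers verbatim. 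With all hypotheses verified, the conclusions of Theorem \ref{thm1} then hold for $\hat P$, producing the continuously differentiable local solution $\hat z(x)$.

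For part (\romannumeral2), the key step is to show $y(x)=\hat y(x)$ on a neighborhood of $x^0$. Since $\hat P(x)$ is obtained from $P(x)$ by deleting the inequality constraints indexed outside $J(x^0,y^0)$, its feasible set contains $K(x)$, so $\hat y(x)$ minimizes $g(x,\cdot)$ over a larger set. By Lemma \ref{lemma0}(\romannumeral2), applied with $y'=\hat y(x)$ (which lies near $y(x)$ for $x$ close to $x^0$, since $\hat y$ and $y$ are continuous and both equal $y^0$ at $x^0$), the dropped constraints remain strictly satisfied: $p_j(x,\hat y(x))<0$ for $j \not\in J(x^0,y^0)$. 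Hence $\hat y(x)\in K(x)$ is feasible for $P(x)$, and a minimizer over the larger set that lies in the smaller set is a minimizer over the smaller set; by local uniqueness of the minimizer (Theorem \ref{thm1} for $P$) this forces $\hat y(x)=y(x)$. Combining Theorem \ref{thm1}(\romannumeral3) for $P$ with Lemma \ref{lemma0}, the active set is locally constant, so $\lambda_j(x)\equiv 0$ for $j\not\in J(x^0,y^0)$, giving the last identity upon differentiation. For the remaining multipliers I would subtract the stationarity equations of $P$ and $\hat P$ at the common point $y(x)=\hat y(x)$; since $\lambda_j(x)=0$ for dropped $j$, the difference reads $\sum_{j\in J(x^0,y^0)}(\lambda_j(x)-\hat\lambda_j(x))\nabla_y p_j + (\nu(x)-\hat\nu(x))^\top \nabla_y q = 0$, and the LICQ (valid on the neighborhood by Theorem \ref{thm1}(\romannumeral3)) forces $\lambda_j(x)=\hat\lambda_j(x)$ for $j\in J(x^0,y^0)$ and $\nu(x)=\hat\nu(x)$. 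Differentiating all four identities at $x^0$ yields the claimed gradient equalities.

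The main obstacle I anticipate is the rigorous justification that the relaxed solution $\hat y(x)$ is genuinely feasible for, and hence coincides with, the original solution $y(x)$ on a full neighborhood: this rests on pairing the strict persistence of inactivity from Lemma \ref{lemma0} with the local uniqueness of the minimizer, and one must confirm that $\hat y(x)$ stays close enough to $y^0$ (by continuity of $\hat z$) for Lemma \ref{lemma0} to apply. A secondary point requiring care is the SSOSC transfer in part (\romannumeral1), where both the equality of the reduced and full Lagrangian Hessians and the coincidence of the critical cones hinge on the SCSC and on complementary slackness.
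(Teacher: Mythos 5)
Your proposal is correct and follows essentially the same route as the paper: verify the hypotheses of Theorem \ref{thm1} for $\hat{P}(x^0)$ using complementary slackness (the dropped multipliers vanish, so stationarity, the LICQ, the SCSC and the SSOSC all transfer verbatim), then use Lemma \ref{lemma0} together with local uniqueness to identify $z(x)$ with $\hat{z}(x)$ (up to the dropped zero multipliers) on a neighborhood of $x^0$ before differentiating. The only cosmetic differences are directional: you show the relaxed minimizer $\hat{y}(x)$ stays feasible for $P(x)$ and hence coincides with $y(x)$ by local uniqueness, whereas the paper shows $y(x)$ remains a locally unique local minimum of $\hat{P}(x)$ and invokes the uniqueness clause of Theorem \ref{thm1}; likewise you match the multipliers by subtracting the two stationarity equations under the LICQ, while the paper constructs the candidate $[y(x)^{\top},\lambda_{J(x^0,y^0)}(x)^{\top},\nu(x)^{\top}]^{\top}$ and again appeals to uniqueness --- both variants rest on the same ingredients and are equally valid at the paper's level of rigor.
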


\begin{proof} (\romannumeral1)
Problem $\hat{P}\left(x^{0}\right)$ holds a same objective function and same equality constraints as problem $P\left(x^{0}\right)$.
The inequality constraints of $P\left(x^{0}\right)$ are those of $P\left(x^{0}\right)$ removing the inactive constraint. 
The LICQ, the SCSC, the SSOSC and the KKT conditions hold at $(y^0, {\lambda}^{0}, {\nu}^0)$ for ${P}\left(x^{0}\right)$. Then, it is easy to justify that the LICQ, the SCSC hold at $(y^0, {\lambda}^{0}_{J(x^0,y^0)}, {\nu}^0)$ for $\hat{P}\left(x^{0}\right)$. 
By the KKT conditions at $(y^0,{\lambda}^{0}, {\nu}^0)$ for ${P}\left(x^{0}\right)$, we have $\lambda_j=0$ for $j \not\in J(x^0,y^0)$, i.e., $p_{j}\left(x^0, y^0\right) < 0$. Then, the SSOSC and the KKT conditions hold at $(y^0, {\lambda}^{0}_{J(x^0,y^0)}, {\nu}^0)$ for $\hat{P}\left(x^{0}\right)$.
By Theorem \ref{thm1}, (\romannumeral1) holds. 
Then $y^{0}$ is a locally unique local minimum of $\hat{P}\left(x^{0}\right)$, and there exists $\hat{z}(x)=[\hat{y}(x)^{\top}, \hat{\lambda}(x)^{\top}, \hat{\nu}(x)^{\top}]^{\top}$ being the unique continuously differentiable vector function in a neighborhood of $x^0$, such that $\hat{y}(x)$ is a locally unique local minimum of $\hat{P}(x)$ with unique Lagrangian multipliers $\hat{\lambda}(x)$ and $\hat{\nu}(x)$. 

(\romannumeral2) All conclusions in Theorem \ref{thm1} hold at $(y^0, {\lambda}^{0}, {\nu}^0)$ for ${P}\left(x^{0}\right)$, then there exists ${z}(x)=[{y}(x)^{\top}, {\lambda}(x)^{\top}, {\nu}(x)^{\top}]^{\top}$ be the unique continuously differentiable vector function in a neighborhood of $x^0$, such that $y(x)$ is a locally unique local minimum of $P(x)$ with unique Lagrangian multipliers $\lambda(x)$ and $\nu(x)$. Next, we will show that, in a neighborhood of $x^0$,
$$
\begin{aligned}
&y(x)=\hat{y}(x), \\
&{\nu}(x)=\hat{\nu}(x), \\
&\lambda_j(x)=\hat{\lambda}_j(x) \text{ when } j \in J(x^0,y^0),\\
&\lambda_j(x)=0 \text{ when } j \not\in J(x^0,y^0).
\end{aligned} 
$$

Since $y(x)$ is a locally unique local minimum of $P(x)$, then there exists $\beta_3>0$, for any ${||x-x^0||}\leq \beta_3$, $y(x)$ is a local minimum of $P\left(x\right)$, i.e., there exists $\delta_3 > 0$ such that $g(x, y(x)) \leq g(x, y^{\prime})$, when ${||y^{\prime}-y(x)||} \leq \delta_3$, $p\left(x, y^{\prime}\right) \leq 0$, and $q\left(x, y^{\prime}\right) = 0$. 
Let $\beta_2$ be the $\xi$ and $\delta_3$ be the $\delta$ shown in Lemma \ref{lemma0}. 
Let $\beta_4=\min{\{\beta_2,\beta_3\}}$ and $\delta_4=\min{\{\delta_2,\delta_3\}}$. Then, for all ${||x-x^0||}\leq \beta_4$, we have $\delta_4$, such that the following two statements are satisfied:
\begin{itemize}
    \item[(a)] $g(x, y(x)) \leq g(x, y^{\prime})$ when ${||y^{\prime}-y(x)||} \leq \delta_4$, $p\left(x, y^{\prime}\right) \leq 0$, and $q\left(x, y^{\prime}\right) = 0$.
    \item[(b)] $p_{j}\left(x, y^{\prime}\right)\leq -\frac{1}{2}\epsilon_1 <0$ for all $j \not\in J(x^0,y^0)$ when ${||y^{\prime}-y(x)||} \leq \delta_4$.
\end{itemize}
The statement (b) show that set $\{y^{\prime}: {||y^{\prime}-y(x)||} \leq \delta_4\} \subset $ $\{y^{\prime}: p_{j}\left(x, y^{\prime}\right)\leq -\frac{1}{2}\epsilon_1 <0 \text{ for all } j \not\in J(x^0,y^0)\}$.
Then, $g(x, y(x)) \leq g(x, y^{\prime})$ when ${||y^{\prime}-y(x)||} \leq \delta_4$, $q\left(x, y^{\prime}\right) = 0$, and $p\left(x, y^{\prime}\right) \leq 0$, $j \in J(x^0,y^0)$. This means $y(x)$ is a local minimum of $\hat{P}\left(x\right)$ for all ${||x-x^0||}\leq \beta_4$. 

For ${||x-x^0||}\leq \beta_4$, $y(x)$ is a locally unique local minimum of ${P}\left(x\right)$. Assume that $y(x)$ is not a locally unique local minimum of $\hat{P}\left(x\right)$, i.e., for any $\phi>0$, there exists ${||y^{\prime}(\phi)-y(x)||}\leq \phi$ such that $y^{\prime}$ is a local minimum of $\hat{P}\left(x\right)$. We can set $0<\phi<\delta_4$, then $p_j(x,y^{\prime}(\phi))<0$ for all $j \not\in J(x^0,y^0)$. Then for any $\phi$, $y^{\prime}(\phi)$ is a local minimum of ${P}\left(x\right)$, which contradicts that $y(x)$ is a locally unique local minimum of ${P}\left(x\right)$. Thus, $y(x)$ is a locally unique local minimum of $\hat{P}\left(x\right)$. 
Let $z_1(x)\triangleq [{y}(x)^{\top}, {\lambda}^{\prime}(x)^{\top}, {\nu}(x)^{\top}]^{\top}$ defined on $\{x: {||x-x^0||}\leq \beta_4\}$, where ${\lambda}^{\prime}(x)^{\top}$ is a vector function which contains all $\lambda_j(x)$ when $j \in J(x^0,y^0)$. Then $z_1(x)$ is a continuously differentiable vector function, and it is easy to justify that the KKT conditions hold at $y(x)$ with Lagrangian multipliers $\lambda^{\prime}(x)$ and $\nu(x)$ for $\hat{P}(x)$.

Since $\hat{z}(x)=[\hat{y}(x)^{\top}, \hat{\lambda}(x)^{\top}, \hat{\nu}(x)^{\top}]^{\top}$ is the unique continuously differentiable vector function in a neighborhood of $x^0$, such that $\hat{y}(x)$ is a locally unique local minimum of $\hat{P}(x)$. Then for ${||x-x^0||}\leq \beta_4$, we have $\hat{z}(x)=z_1(x)$ and 
$$
\begin{aligned}
&y(x)=\hat{y}(x), \\
&{\nu}(x)=\hat{\nu}(x), \\
&\lambda_j(x)=\hat{\lambda}_j(x) \text{ when } j \in J(x^0,y^0).
\end{aligned} 
$$
Since $p_{j}\left(x, y(x)\right) <0$ when $j \not\in J(x^0,y^0)$, then $\lambda_j(x)=0$ by the KKT conditions.
Then,
$$
\begin{aligned}
&\nabla_{x} y(x^0)=\nabla_{x} \hat{y}(x^0), \\
&\nabla_{x} {\nu}(x^0)=\nabla_{x} \hat{\nu}(x^0), \\
&\nabla_{x} \lambda_j(x^0)=\nabla_{x} \hat{\lambda}_j(x^0) \text{ when } j \in J(x^0,y^0),\\
&\nabla_{x} \lambda_j(x^0)=0 \text{ when } j \not\in J(x^0,y^0).
\end{aligned} 
$$
\end{proof}

Note that the SCSC holds at $\hat{y}$ w.r.t. $\hat{\lambda}$ for $P(x)$ is equivalent to $J^{0}(x,\hat{y})=\emptyset$, then $J\left(x,\hat{y}\right) = J^{+}\left(x,\hat{y}\right)$.
We define the matrix functions 
$$
M_{+}(x^0,y^0) \triangleq 
\left[\begin{array}{ccccccc}
\nabla_{y}^{2} \mathcal{L}  & \nabla_{y} p_{J^{+}(x^0,y^0)}^{\top} & \nabla_{y} q^{\top} \\
\nabla_{y} p_{J^{+}(x^0,y^0)}  & 0 & 0 \\
\nabla_y q & 0 & 0 
\end{array}\right](x^0,y^0,\lambda^{0},{\nu}^{0}),
$$
and 
$$
N_{+}(x^0,y^0)\triangleq [\nabla_{x y}^{2} \mathcal{L}^{\top}, \nabla_{x} p_{J^{+}(x^0,y^0)}^{\top}, \nabla_{x} q^{\top}]^{\top}(x^0,y^0,\lambda^{0},{\nu}^{0}).
$$
Compute the gradient of $\hat{z}(x)$ as shown in Theorem \ref{thm1}. 
Since $\lambda_j>0$ for all $j \in J^{+}(x^0,y^0)$, we can cancel all $\lambda_j$ in $M$ and $N$.
Then we can get 
$$
\left[\begin{array}{c}
\nabla_{x} \hat{y}(x^0) \\
\nabla_{x} \hat{\lambda}(x^0) \\
\nabla_{x} \hat{\nu}(x^0) \\
\end{array}\right]
= -{M}_{+}^{-1}(x^0, y^0) {N}_{+}(x^0, y^0).
$$
By Lemma \ref{thm4}, the gradient of ${z}(x)$ is computed as:
\begin{equation}
\label{compute_g0}
\begin{array}{c}
\left[\begin{array}{c}
\nabla_{x} y(x^0) \\
\nabla_{x} \lambda_{J(x^0,y^0)}(x^0) \\
\nabla_{x} \nu(x^0)
\end{array}\right]
= -M_{+}(x^0, y^0)^{-1} N_{+}(x^0, y^0), \\ \nabla_{x} \lambda_{{J(x^0,y^0)}^C}(x^0)=0. \end{array}
\end{equation}

\begin{lemma}
\label{thm3}
Suppose that all requirements except the SCSC in Theorem \ref{thm1} are satisfied at $(y^0, \lambda^0, {\nu}^0)$ for $P\left(x^{0}\right)$. 
Then, 
\begin{itemize}
\item[(\romannumeral1)] $y^{0}$ is a locally unique local minimum of $P\left(x^{0}\right)$.
\item[(\romannumeral2)] There exists $\epsilon > 0$ such that, there exists a unique Lipschitz continuous vector function
$$
z(x) \triangleq [y(x)^{\top}, \lambda(x)^{\top}, \nu(x)^{\top}]^{\top},
$$
which is defined on $\mathcal{B}(x^0,\epsilon)$, and
$y(x)$ is the locally unique local minimum of $P(x)$ with unique associated Lagrangian multipliers $\lambda(x)$ and $\nu(x)$.
\end{itemize}
For a direction $d \in \mathbb{R}^{d_x}$, 
define $J^{0}_{+}(x^0,y^0, d)$ as set which contains all $j \in J^{0}(x^0,y^0)$ such that, there exists ${\epsilon}_0>0$, for any $0<{\epsilon}<{\epsilon}_0$, 
$p_j(x^0+\epsilon d, y(x^0+\epsilon d))=0$ and $\lambda_j(x^0+\epsilon d) > 0$.
Denote $J^{0}_{-}(x^0,y^0, d) \triangleq J^{0}(x^0,y^0) \setminus 
J^{0}_{+}(x^0,y^0, d)$, 
\begin{equation}
\label{md16}
M_{D}(x^0, y^0, d) \triangleq
\left[\begin{array}{ccccccc}
\nabla_{y}^{2} \mathcal{L}  & \nabla_{y} p_{J^{+}(x^0,y^0)}^{\top} & \nabla_{y} q^{\top} & \nabla_{y} p^{\top}_{J^{0}_{+}(x^0,y^0, d)}\\
\nabla_{y} p_{J^{+}(x^0,y^0)}  & 0 & 0 & 0\\
\nabla_y q & 0 & 0 & 0 \\
\nabla_{y} p_{J^{0}_{+}(x^0,y^0, d)} & 0 & 0 & 0
\end{array}\right](x^0,y^0,\lambda^{0},{\nu}^{0})
\end{equation}
and 
\begin{equation}
\label{nd17}
N_{D}(x^0, y^0,d) \triangleq
\left[\nabla_{x y}^{2} \mathcal{L}^{\top}, \nabla_{x} p_{J^{+}(x^0,y^0)}^{\top}, \nabla_{x} q^{\top}, \nabla_{x} p_{J^{0}_{+}(x^0,y^0, d)}^{\top} \right]^{\top}(x^0,y^0,\lambda^{0},{\nu}^{0}).
\end{equation}

\begin{itemize}
\item[(\romannumeral3)] 
The directional derivative of $z(x)$ at $x^0$ on any direction $d \in \mathbb{R}^{d_x}$ with $\|d\|=1$ exists and given by

\begin{equation}
\label{compute_g1}
\begin{array}{c}
\left[\begin{array}{c}
\nabla_{d} {y}(x^0) \\
\nabla_{d} \lambda_{J^{+}(x^0,y^0)}(x^0) \\
\nabla_{d} {\nu}(x^0) \\
\nabla_{d} {\lambda}_{J_+^0(x^0,y^0,d)}(x^0) \\
\end{array}\right]
= -M_{D}^{-1}(x^0, y^0, d) N_{D}(x^0, y^0,d) d, \\
\nabla_{d} {\lambda}_{J_-^0(x^0,y^0,d)}(x^0) =0, \\
\nabla_{d} {\lambda}_{J(x^0,y^0)^C}(x^0) =0, 
\end{array}
\end{equation}

where $M_{D}(x^0,y^0,d)$ is nonsingular. 
\end{itemize}

\end{lemma}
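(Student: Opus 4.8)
The plan is to obtain parts (i) and (ii) immediately from Theorem \ref{thm1+}: once the SCSC is dropped but the LICQ, the SSOSC and the KKT conditions are retained, that theorem already furnishes the locally unique minimizer $y^0$ together with a unique, Lipschitz continuous and directionally differentiable map $z(x)$ on a ball $\mathcal{B}(x^0,\epsilon)$, with $y(x)$ the locally unique minimizer of $P(x)$ and $\lambda(x),\nu(x)$ its unique multipliers. Thus all the real work is the formula \eqref{compute_g1} in part (iii).

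Fix a direction $d$ with $\|d\|=1$ and set $x_\epsilon\triangleq x^0+\epsilon d$. First I would pin down the active set along the ray for small $\epsilon>0$. For $j\in J^{+}(x^0,y^0)$ we have $\lambda_j(x^0)>0$, so by continuity $\lambda_j>0$ nearby and complementary slackness keeps $p_j(\cdot,y(\cdot))=0$; these stay strictly active. For $j\notin J(x^0,y^0)$, Lemma \ref{lemma0} keeps $p_j<0$, hence $\lambda_j=0$; these stay inactive. The remaining indices $j\in J^{0}(x^0,y^0)$ start with $p_j=0$ and $\lambda_j=0$. Using the directional differentiability of $z$ from Theorem \ref{thm1+} together with the identity $\lambda_j(x)\,p_j(x,y(x))\equiv 0$, each such $j$ settles into exactly one regime for all sufficiently small $\epsilon>0$: either the constraint stays binding with a positive multiplier (these form $J^{0}_{+}(x^0,y^0,d)$) or the multiplier stays zero (these form $J^{0}_{-}(x^0,y^0,d)$). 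Consequently, for $\epsilon\in(0,\epsilon_0)$ the solution along the ray solves the KKT system with the fixed active set $\tilde J\triangleq J^{+}(x^0,y^0)\cup J^{0}_{+}(x^0,y^0,d)$, and the SCSC holds with respect to $\tilde J$ since every multiplier indexed by $\tilde J$ is strictly positive there.

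Next I would reduce to the smooth case exactly as in Lemma \ref{thm4}: retain only the constraints indexed by $\tilde J$ together with the equalities $q$, discard the rest, and argue as in Lemma \ref{thm4} that $z(x_\epsilon)$ coincides with the solution of this reduced problem for small $\epsilon>0$ and that the discarded indices contribute $\nabla_x\lambda_{J^{0}_{-}}=\nabla_x\lambda_{J(x^0,y^0)^C}=0$. Because the reduced problem satisfies the SCSC along the ray, Theorem \ref{thm1} applies on $(0,\epsilon_0)$: $z(x_\epsilon)$ is continuously differentiable in $\epsilon$ there, and by the chain rule $\frac{d}{d\epsilon}z(x_\epsilon)=-M_{+}(x_\epsilon,y(x_\epsilon))^{-1}N_{+}(x_\epsilon,y(x_\epsilon))\,d$, where $M_+,N_+$ are built from the strictly active set $J^{+}(x_\epsilon,y(x_\epsilon))=\tilde J$. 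The nonsingularity of this KKT matrix, and hence of its limit, follows from the standard argument used for $M$ and $M_+$ in Theorem \ref{thm1}: the LICQ restricted to the gradients indexed by $\tilde J$ still yields linear independence, and the relevant SSOSC holds on the critical cone $\{z:\nabla_y p_j z=0,\ j\in\tilde J;\ \nabla_y q\,z=0\}$, which is contained in $Z(y^0,x^0)$ on which the original SSOSC is assumed.

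Finally I would pass to the limit $\epsilon\to 0^{+}$. Since the block structure of $M_+(x_\epsilon,y(x_\epsilon))$ is fixed over $\tilde J$ and its evaluation point tends to $(x^0,y^0,\lambda^0,\nu^0)$, continuity gives $M_+(x_\epsilon,y(x_\epsilon))\to M_{D}(x^0,y^0,d)$ and $N_+(x_\epsilon,y(x_\epsilon))\to N_{D}(x^0,y^0,d)$, the matrices \eqref{md16}--\eqref{nd17}. The elementary fact that a map continuous on $[0,\epsilon_0)$, $C^1$ on $(0,\epsilon_0)$, whose derivative has a one-sided limit at $0$ is right-differentiable there with that limit then identifies the right derivative of $\epsilon\mapsto z(x_\epsilon)$ at $\epsilon=0$ — that is, $\nabla_d z(x^0)$ — with $-M_{D}(x^0,y^0,d)^{-1}N_{D}(x^0,y^0,d)\,d$, which is \eqref{compute_g1}. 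I expect the main obstacle to be step two: rigorously showing the active set stabilizes along the ray so that the partition $J^{0}=J^{0}_{+}\cup J^{0}_{-}$ is well defined, and that the indices in $J^{0}_{-}$ may be discarded — in particular that a $J^{0}_{-}$-constraint which happens to remain binding with zero multiplier does not contradict the $M_D$-system — which is precisely where the strong regularity underlying Theorem \ref{thm1+} is essential.
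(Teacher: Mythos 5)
Your parts (i) and (ii) match the paper: both are read off directly from Theorem \ref{thm1+}. For part (iii), however, your route differs from the paper's and contains a genuine gap at exactly the point you flag. You assume that every $j\in J^{0}(x^0,y^0)$ \emph{settles} into one regime along the ray (binding with positive multiplier, or multiplier identically zero) for all sufficiently small $\epsilon$, and you then reduce to a smooth problem with the fixed active set $\tilde J=J^{+}\cup J^{0}_{+}$ and apply Theorem \ref{thm1} along the ray. But stabilization does not follow from directional differentiability plus $\lambda_j\,p_j\equiv 0$: if $\nabla_d\lambda_j(x^0)=0$ then $\lambda_j(x^0+\epsilon d)=o(\epsilon)$, which is compatible with the index alternating between the two regimes on every interval $(0,\beta_0)$. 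The paper explicitly admits this oscillating case (case (b), second sub-case, in its proof) and the definition of $J^{0}_{-}$ is built to absorb it; your reduction to a fixed-active-set problem breaks down for such an index, and the Lemma \ref{thm4}-style identification $z=\hat z$ also fails when a discarded $J^{0}_{-}$ constraint remains \emph{binding} (rather than inactive) arbitrarily close to $x^0$, since dropping it then changes the feasible set locally.

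The paper avoids needing stabilization altogether. It takes the existence of the directional derivative of $z$ as already guaranteed by Theorem \ref{thm1+} (Kojima's strong stability), and then differentiates the full KKT system $\nabla_y\mathcal L=0$, $\lambda_j p_j=0$, $q=0$ along the direction $d$ at $\beta=0$. For $j\in J^{0}_{-}$ the a priori existence of $\nabla_d\lambda_j(x^0)$ forces $\nabla_d\lambda_j(x^0)=0$ even in the oscillating case (because $\lambda_j$ vanishes on a sequence $\beta_k\downarrow 0$ and $\lambda_j(x^0)=0$), so those rows drop out of the linearized system; the positive multipliers indexed by $J^{+}\cup J^{0}_{+}$ are then cancelled, leaving exactly the $M_D,N_D$ system, with nonsingularity from the LICQ and the positive definiteness of $\nabla_y^2\mathcal L$ on the relevant subspace. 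To repair your argument you would either need to prove active-set stabilization along rays (which requires structure beyond what is assumed here), or switch to the paper's strategy of differentiating the complementarity identities directly and letting the known existence of $\nabla_d z(x^0)$ annihilate the $J^{0}_{-}$ contributions. Your final limiting step (one-sided limit of the derivative of a continuous, piecewise-$C^1$ curve) is sound, but it is only reached after the unproved stabilization claim.
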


\begin{proof}
When the SCSC holds at $y^{0}$ w.r.t. $\lambda^{0}$, then we have $J(x^0,y^0)= J^{+}(x^0,y^0) $. This theorem is equivalent to Lemma \ref{thm4}. 
When the SCSC is not satisfied, $J^{0}(x^0,y^0)\neq \emptyset$.

(\romannumeral1)
By part (\romannumeral1) of Theorem \ref{thm1+}, the LICQ, the SSOSC and the KKT conditions hold at $(y^0, {\lambda}^{0}, {\nu}^0)$ for ${P}\left(x^{0}\right)$, (\romannumeral1) holds.

(\romannumeral2) By part (\romannumeral2) of Theorem \ref{thm1+}, the LICQ, the SSOSC and the KKT conditions hold at $(y^0, {\lambda}^{0}, {\nu}^0)$ for ${P}\left(x^{0}\right)$, (\romannumeral2) holds and the directional derivative of $z(x)$ at $x^0$ on any direction exists.

(\romannumeral3) 
By (\romannumeral2), the vector function $z(x) \triangleq [y(x)^{\top}, \lambda(x)^{\top}, \nu(x)^{\top}]^{\top}$ defined on $\mathcal{B}(x^0,\epsilon)$ is the local solution of $P(x)$.
By (\romannumeral3) of Theorem \ref{thm1+} the LICQ hold at $y(x)$ for $P(x)$ for all $x \in \mathcal{B}(x^0,\epsilon)$.
Then, according to Theorem 3 in \cite{Giorgi2018ATO}, $z(x) \triangleq [y(x)^{\top}, \lambda(x)^{\top}, \nu(x)^{\top}]^{\top}$ satisfies the KKT conditions for problem $P(x)$ for any $x \in \mathcal{B}(x^0,\epsilon)$, and $\lambda(x), \nu(x)$ are unique Lagrangian multipliers.
The directional derivative of $z(x)$ at $x^0$ on any direction exists.

(a) Consier $j \not\in J(x^0,y^0)$. From part (\romannumeral2) of Lemma \ref{lemma0}, we have that, in a small neighborhood of $(x^0,y^0)$,  $p_{j}\left(x, y\right)< 0$ for all $j \not\in J(x^0,y^0)$, i.e., the inactive constraints at $(x^0,y^0)$ are still inactive in a neighborhood of $(x^0,y^0)$.
Then, for $j \not\in J(x^0,y^0)$, there exists $\beta_0$ such that
$p_j(x^0+\beta d, y(x^0+\beta d))<0$ and $\lambda_j(x^0+\beta d)=0$ for all $\beta<\beta_0$.
Then, $\nabla_{d} {\lambda}_{J(x^0,y^0)^C}(x^0) =0$.

(b) Consider $j \in J^{0}_{-}\left(x^0,y^0, d\right)$, i.e., there does not exist ${\beta}_0>0$, for any ${\beta}<{\beta}_0$, 
$p_j(x^0+\beta d, y(x^0+\beta d))=0$ and $\lambda_j(x^0+\beta d)>0$. There are two possible cases for the direction $d$. The first case is that, there exists ${\beta}_0>0$, for any ${\beta}<{\beta}_0$, 
$p_j(x^0+\beta d, y(x^0+\beta d)) \leq 0$ and $\lambda_j(x^0+\beta d)=0$. The second case is that, for any ${\beta}_0>0$, we can always find ${\beta},\beta_1<{\beta}_0$ such that, $p_j(x^0+\beta d, y(x^0+\beta d))=0$, $\lambda_j(x^0+\beta d) > 0$, and $p_j(x^0+\beta d, y(x^0+\beta d)) \leq 0$, $\lambda_j(x^0+\beta d)=0$.
For the first case, we have $\nabla_{d} {\lambda}_{J_-^0(x^0,y^0,d)}(x^0) =0$.
For the second case, since the directional derivative of $y(x)$ and $\lambda(x)$ at $x^0$ on the direction exists, we have $\nabla_\beta p_j(x^0+\beta d, y(x^0+\beta d))=0$ and $\nabla_{d} {\lambda}_{J_-^0(x^0,y^0,d)}(x^0) =0$.
Thus, for $j \in J^{0}_{-}\left(x^0,y^0, d\right)$, $\nabla_{d} {\lambda}_{J_-^0(x^0,y^0,d)}(x^0) =0$.

(c) Consider $j \in J^{+}(x^0,y^0)$, we have $\lambda^0_j>0$. When $\beta$ is sufficiently small, we have $p_j(x^0+\beta d, y(x^0+\beta d))=0$ and $\lambda_j(x^0+\beta d)>0$.

(d) Consider $j \in J^{0}_{+}\left(x^0,y^0, d\right)$, $p_j(x^0+\beta d, y(x^0+\beta d))=0$ and $\lambda_j(x^0+\beta d) > 0$.

(e) Consider the KKT conditions at $z(x)$, we have 
$$
\left\{\begin{array}{l}
\nabla_{y} \mathcal{L}(y(x), \lambda(x), \nu(x), x)=0 \\
\lambda_{j}(x) p_{j}(y(x), x)=0, \ j=1, \ldots, m \\
q_j(y(x), x)=0, \ j=1, \ldots, n
\end{array}\right.
$$
for any $x \in \mathcal{B}(x^0,\epsilon)$.
Then, for any sufficiently small $\beta<\epsilon$, we have
$$
\left\{\begin{array}{l}
\nabla_{y} \mathcal{L}(y(x^0), \lambda(x^0), \nu(x^0), x^0)=0 \\
\lambda_{j}(x^0) p_{j}(y(x^0), x^0)=0, \ j=1, \ldots, m \\
q_j(y(x^0), x^0)=0, \ j=1, \ldots, n
\end{array}\right.
\text{ and  }
\left\{\begin{array}{l}
\nabla_{y} \mathcal{L}(y(x^0+\beta d), \lambda(x^0+\beta d), \nu(x^0+\beta d), x^0+\beta d)=0 \\
\lambda_{j}(x^0+\beta d) p_{j}(y(x^0+\beta d), x^0+\beta d)=0, \ j=1, \ldots, m \\
q_j(y(x^0+\beta d), x^0+\beta d)=0, \ j=1, \ldots, n.
\end{array}\right.
$$
Then, we have
$$
\left.\frac{\partial \nabla_{y} \mathcal{L}\left(y\left(x^{0}+\beta d\right), \lambda\left(x^{0}+\beta d\right), \nu\left(x^{0}+\beta d\right), x^{0}+\beta d\right)}{\partial \beta}\right|_{\beta=0}=0,
$$
$$
\left.\frac{\partial \lambda_{i}(x^0+\beta d) p_{j}(y(x^0+\beta d), x^0+\beta d)}{\partial \beta}\right|_{\beta=0}=0, \ j=1, \ldots, m,
$$
$$
\left.\frac{\partial q_j(y(x^0+\beta d), x^0+\beta d)}{\partial \beta}\right|_{\beta=0}=0, \ j=1, \ldots, n.
$$
Then, we get 
$$
\left[\begin{array}{ccccc}
\nabla_{y}^{2} \mathcal{L} & \left(\nabla_{y} p_{1}\right)^{\top} & \cdots & \left(\nabla_{y} p_{m}\right)^{\top} & \left(\nabla_{y} q\right)^{\top} \\
\lambda_{1} \nabla_{y} p_{1} & p_{1} & \cdots & 0 & 0 \\
\vdots & \vdots & \ddots & 0 & \vdots \\
\lambda_{m} \nabla_{y} p_{m} & 0 & \cdots & p_{m} & 0 \\
\nabla_{y} q & 0 & 0 & 0 & 0
\end{array}\right] 
\left[\begin{array}{c}
\nabla_{d} y \\
\nabla_{d} \lambda_1 \\
\vdots \\
\nabla_{d} \lambda_m \\
\nabla_{d} \nu
\end{array}\right] 
+ 
\left[\begin{array}{c}
\nabla_{x y}^{2} \mathcal{L} \\
\lambda_{1}\nabla_{x} p_{1} \\
\vdots \\
\lambda_{m}\nabla_{x} p_{m} \\
\nabla_{x} q
\end{array}\right] d =0.
$$
From (a)(b), we have $\nabla_{d} {\lambda}_{J(x^0,y^0)^C}(x^0) =0$ and $\nabla_{d} {\lambda}_{J_-^0(x^0,y^0,d)}(x^0) =0$. Then, the equation is reduced to
$$
\begin{aligned}
&\left[\begin{array}{ccccc}
\nabla_{y}^{2} \mathcal{L} & \nabla_{y} p_{J^{+}(x^0,y^0)}^{\top} & \nabla_{y} p_{J^{0}_{+}(x^0,y^0, d)}^{\top} & \nabla_{y} q^{\top} \\
\lambda_{J^{+}(x^0,y^0)} \nabla_{y} p_{J^{+}(x^0,y^0)} & p_{J^{+}(x^0,y^0)} & 0 & 0 \\
{\lambda}_{J_+^0(x^0,y^0,d)} \nabla_{y} p_{J^{0}_{+}(x^0,y^0, d)} & 0  & p_{J^{0}_{+}(x^0,y^0, d)} & 0 \\
\nabla_{y} q & 0 &  0 & 0
\end{array}\right] 
\left[\begin{array}{c}
\nabla_{d} y \\
\nabla_{d} \lambda_{J^{+}(x^0,y^0)} \\
\nabla_{d} {\lambda}_{J_+^0(x^0,y^0,d)} \\
\nabla_{d} \nu
\end{array}\right]  \\
&+ 
\left[\begin{array}{c}
\nabla_{x y}^{2} \mathcal{L} \\
\lambda_{J^{+}(x^0,y^0)}\nabla_{x} p_{J^{+}(x^0,y^0)} \\
{\lambda}_{J_+^0(x^0,y^0,d)}\nabla_{x} p_{J^{0}_{+}(x^0,y^0, d)} \\
\nabla_{x} q
\end{array}\right] d =0.
\end{aligned}
$$
From (c)(d), $p_{J^{+}(x^0,y^0)}=0$, $p_{J^{0}_{+}(x^0,y^0, d)}=0$, $\lambda_{J^{+}(x^0,y^0)}>0$, and ${\lambda}_{J_+^0(x^0,y^0,d)}>0$. Then, $\lambda_{J^{+}(x^0,y^0)}$ and ${\lambda}_{J_+^0(x^0,y^0,d)}$ are cancelled. We have
$$
\begin{array}{c}
\left[\begin{array}{c}
\nabla_{d} {y}(x^0) \\
\nabla_{d} \lambda_{J^{+}(x^0,y^0)}(x^0) \\
\nabla_{d} {\nu}(x^0) \\
\nabla_{d} {\lambda}_{J_+^0(x^0,y^0,d)}(x^0) \\
\end{array}\right]
= -M_{D}^{-1}(x^0, y^0, d) N_{D}(x^0, y^0,d) d
\end{array}
$$
where $M_{D}(x^0,y^0,d)$ and $N_{D}(x^0, y^0,d)$ is defined in \eqref{md16} and \eqref{nd17}. Since the LICQ holds, $M_{D}(x^0,y^0,d)$ is nonsingular.

\end{proof}

Consider Assumptions \ref{a1}, \ref{a2}, \ref{a3} are satisfied, $y^*(x)$ is the optimal solution of $P(x)$.
Define the matrix functions 
$$
M_{+}(x) \triangleq 
\left[\begin{array}{ccccccc}
\nabla_{y}^{2} \mathcal{L}  & \nabla_{y} p_{J^{+}(x,y^*(x))}^{\top} & \nabla_{y} q^{\top} \\
\nabla_{y} p_{J^{+}(x,y^*(x))}  & 0 & 0 \\
\nabla_y q & 0 & 0 
\end{array}\right](x,y^*(x),\lambda(x),\nu(x))
$$
and 
$$
N_{+}(x)\triangleq = [\nabla_{x y}^{2} \mathcal{L}^{\top}, \nabla_{x} p_{J^{+}(x,y^*(x))}^{\top}, \nabla_{x} q^{\top}]^{\top}(x,y^*(x)).
$$
For a direction $d \in \mathbb{R}^{d_x}$, define 
$J^{0}_{+}(x,y^*(x), d)$ as set which contains all $j \in J^{0}(x,y^*(x))$ such that, there exists ${\epsilon}_0>0$, for any $0<{\epsilon}<{\epsilon}_0$, 
$p_j(x+\epsilon d, y^*(x+\epsilon d))=0$ and $\lambda_j(x+\epsilon d) > 0$.
Denote $J^{0}_{-}(x,y^*(x), d) \triangleq J^{0}(x,y^*(x)) \setminus 
J^{0}_{+}(x,y^*(x), d)$. Define 
$$
M_{D}(x, d) \triangleq
\left[ 
\begin{array}{ccc}
    M_{+} & \begin{array}{ccc}
    \nabla_{y} p^{\top}_{J^{0}_{+}(x,y^*(x), d)} \\ 0 \\ 0 \\
    \end{array} \\
    \begin{array}{ccc}
    \nabla_{y} p_{J^{0}_{+}(x,y^*(x), d)} & 0 & 0 \\
    \end{array} & 0  \\
\end{array}\right](x,y^*(x))
$$
and 
$$
N_{D}(x,d) \triangleq \left[N_{+}^{\top}(x), \nabla_{x} p_{J^{0}_{+}(x,y^*(x), d)}^{\top}(x,y^*(x)) \right]^{\top}.
$$

\begin{theorem}[Full version of the combination of Theorems \ref{th0} and \ref{th1}]
\label{th2}
Suppose Assumptions \ref{a1}, \ref{a2}, \ref{a3} hold. Then,

\begin{itemize}
\item[(\romannumeral1)] The global minimum $y^{*}(x)$ of $P\left(x\right)$ exists and is unique. The KKT conditions hold at $y^{*}(x)$ with unique Lagrangian multipliers $\lambda(x)$ and $\nu(x)$.
\item[(\romannumeral2)] 
The vector function $z(x) \triangleq [y^{*}(x), \lambda(x), \nu(x)]$ is continuous and locally Lipschitz. 
\end{itemize}

\begin{itemize}
\item[(\romannumeral3)] 
The directional derivative of $z$ at $x$ on any direction $d \in \mathbb{R}^{d_x}$ with $\|d\|=1$ exists and given by
\begin{equation}
\label{M0}
\begin{array}{c}
\left[\begin{array}{c}
\nabla_{d} {y}^{*}(x) \\
\nabla_{d} \lambda_{J^{+}(x,y^{*}(x))}(x) \\
\nabla_{d} {\nu}(x) \\
\nabla_{d} {\lambda}_{J_+^0(x,y^{*}(x),d)}(x)
\end{array}\right]
=  -M_{D}^{-1}(x,d) N_{D}(x,d) d, \\
\nabla_{d} {\lambda}_{J_-^0(x,y^{*}(x),d)}(x) =0, \\
\nabla_{d} {\lambda}_{J(x,y^{*}(x))^C}(x)=0,
\end{array}
\end{equation}

where $M_{D}(x,d)$ is nonsingular.

\end{itemize}

\begin{itemize}
\item[(\romannumeral4)] If the SCSC holds at $y^{*}(x)$ w.r.t. $\lambda(x)$, ${z}$ is continuously differentiable at $x$ and the gradient is computed by
$$
\left[\begin{array}{c}
\nabla_{x} y^{*}(x) \\
\nabla_{x} \lambda_{J(x,y^{*}(x))}(x) \\
\nabla_{x} \nu(x)
\end{array}\right]
= -M_{+}^{-1}(x) N_{+}(x),
$$
$$\nabla_{x} {\lambda}_{J(x,y^{*}(x))^C}(x) =0,$$
where ${M}_{+}(x)$ is nonsingular.

\end{itemize}
\end{theorem}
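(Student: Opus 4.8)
The plan is to derive all four parts from the local sensitivity results already assembled---Theorems \ref{thm1} and \ref{thm1+} together with Lemmas \ref{lemma0}, \ref{thm4}, and \ref{thm3}---by checking that Assumptions \ref{a1}--\ref{a3} force, at \emph{every} $x$, the pointwise hypotheses those results require, and then assembling the local conclusions into global statements. For part (\romannumeral1), existence of $y^{*}(x)$ is guaranteed by Assumption \ref{a3} (and also follows from coercivity of the $\mu$-strongly convex $g(x,\cdot)$); uniqueness holds because $K(x)$ is convex (each $p_j$ convex, each $q_i$ affine in $y$ by Assumption \ref{a2}) and $g(x,\cdot)$ is strictly convex on it. Since LICQ holds at $y^{*}(x)$ by Assumption \ref{a3}, the KKT conditions are necessary and hold there, and LICQ forces the multipliers $(\lambda(x),\nu(x))$ to be unique (cf. \cite{kyparisis1985uniqueness}).

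The key preparatory step for parts (\romannumeral2)--(\romannumeral4) is to verify the SSOSC at every $x$. From $\nabla_{y}^{2}\mathcal{L}=\nabla_{y}^{2} g+\sum_j\lambda_j\nabla_{y}^{2} p_j+\sum_i\nu_i\nabla_{y}^{2} q_i$, Assumption \ref{a2} gives $\nabla_{y}^{2} g\succeq\mu I$, $\lambda_j\nabla_{y}^{2} p_j\succeq 0$ (convex $p_j$, $\lambda_j\geq 0$), and $\nabla_{y}^{2} q_i=0$ (affine $q_i$), so $\nabla_{y}^{2}\mathcal{L}\succeq\mu I\succ 0$. This is stronger than the positive definiteness on the subspace $Z(y^{*}(x),x)$ demanded by Definition \ref{def4}, hence the SSOSC holds everywhere. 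Together with LICQ and the KKT conditions from part (\romannumeral1), all hypotheses of Theorem \ref{thm1+} are met at each $x^{0}$, yielding a locally unique, Lipschitz continuous, directionally differentiable branch near $x^{0}$. By convexity every local minimizer of $P(x)$ is global and thus equals $y^{*}(x)$, which identifies this branch with $z(x)=[y^{*}(x)^{\top},\lambda(x)^{\top},\nu(x)^{\top}]^{\top}$ and establishes the continuity and local Lipschitzness asserted in part (\romannumeral2).

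Parts (\romannumeral3) and (\romannumeral4) then follow by invoking Lemma \ref{thm3} and Lemma \ref{thm4} respectively at $(y^{0},\lambda^{0},\nu^{0})=(y^{*}(x),\lambda(x),\nu(x))$ and rewriting their conclusions in the global notation introduced just before the theorem (replacing $y^{0}$ by $y^{*}(x)$, so $M_{D}(x^{0},y^{0},d)$ becomes $M_{D}(x,d)$, and similarly for $N_D$, $M_+$, $N_+$). Lemma \ref{thm3} directly supplies the directional-derivative formula \eqref{M0} together with $\nabla_{d}\lambda_{J_-^0}=0$, $\nabla_{d}\lambda_{J^C}=0$, and the nonsingularity of $M_{D}(x,d)$. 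When the SCSC additionally holds, $J^{0}(x,y^{*}(x))=\emptyset$ so all hypotheses of Theorem \ref{thm1} are satisfied; Lemma \ref{thm4} and the computation culminating in \eqref{compute_g0} then give continuous differentiability and the reduced gradient $-M_{+}^{-1}(x)N_{+}(x)$ with $\nabla_{x}\lambda_{J^C}=0$, establishing part (\romannumeral4).

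I expect the principal difficulty to be bookkeeping rather than fresh analysis: carefully matching the pointwise hypotheses and, above all, the \emph{locally} defined solution branches of the cited results to the \emph{globally} defined objects $y^{*}(x),\lambda(x),\nu(x)$. The crux is the convexity argument that identifies the locally unique local minimizer produced by Theorems \ref{thm1}/\ref{thm1+} with the global minimizer $y^{*}(x)$, since only after this identification do the local gradient and directional-derivative formulas transfer verbatim to $z(x)$.
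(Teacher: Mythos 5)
Your proposal is correct and follows essentially the same route as the paper: existence/uniqueness and KKT from strong convexity, convexity of $K(x)$, and the LICQ; the global verification of the SSOSC via $\nabla_{y}^{2}\mathcal{L}\succeq\mu I$; identification of the locally unique local minimizer from Theorem \ref{thm1+}/Lemmas \ref{thm3}--\ref{thm4} with the global minimizer $y^{*}(x)$ by convexity; and then reading off parts (\romannumeral3) and (\romannumeral4) from Lemmas \ref{thm3} and \ref{thm4}. You correctly flag the local-to-global identification as the crux, which is exactly the step the paper's proof also makes explicit.
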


\begin{proof}
(\romannumeral1) 
Firstly, we show $y^*(x)$ exists and is unique. 
Function $g(x,y)$ is $\mu$-strongly-convex w.r.t. $y$, and the feasible set $K\left(x\right)$ is convex and closed for all ${x} \in \mathbb{R}^{d_x}$. 
Case one: $K(x)$ is bounded. Then, $K(x)$ is a compact set and $g(x,y)$ is continuous, which imply that the minimum $y^*(x)$ exists.
Case two: $K(x)$ is not bounded. 
Function $g(x,y)$ is $\mu$-strongly-convex w.r.t. $y$, then
$$g(x,y) \geq g(x,y^0) + \nabla_y g(x,y^0)^{\top} (y-y^0)+\frac{\mu}{2}\|y-y^0\|^2.$$
Then, $\lim_{\|y\| \rightarrow \infty } g(x,y) = +\infty$. Then, for any real number $\alpha$, the set $\{x \mid g(x,y) \leq \alpha\}$ is closed and bounded, and there exists $\alpha$ such that $\{y \mid g(x,y) \leq \alpha\}$ is not empty. Then, $\{x \mid g(x,y) \leq \alpha\}$ is compact.
Then, the compactness of $\{y \mid g(x,y) \leq \alpha\}$ and the continuity of $g(x,y)$ imply that, the optimization problem $\min g(x,y) \text{ s.t. } y \in \{t \mid g(x,t) \leq \alpha\}$ is feasible and has a minimum, which is also the minimum of problem $P(x)$. Thus, the global minimum $y^*(x)$ exists.

The solution $y^{*}(x)$ is a global minimum of $P(x)$ implies it is a local minimum.
Assume $y^{*}(x)$ is not the unique local minimum of $P(x)$ and there exists $y^{\prime} \in K(x)$ is also a local minimum, then there exists a $0<\alpha<1$ such that $g(x,y^{*}(x)) \leq g(x, \alpha y^{*}(x) +(1-\alpha) y^{\prime})$ and $g(x,y^{\prime} \leq g(x, \alpha y^{*}(x) +(1-\alpha) y^{\prime})$, then $\alpha g(x,y^{*}(x)) + (1-\alpha)g(x, y^{\prime}) \leq g(x, \alpha y^{*}(x) +(1-\alpha) y^{\prime})$, which contradicts that $g(x,y)$ is $\mu$-strongly-convex w.r.t. $y$.
Thus, $y^{*}(x)$ is the unique local minimum, and then it is the unique global minimum.

The solution $y^{*}(x)$ is a global minimum of $P(x)$ implies that $y^{*}(x)$ is a local minimum.
By Assumptions \ref{a3}, the LICQ holds at $y^{*}(x)$ for $P(x)$.
Then, the KKT conditions hold at $y^{*}(x)$ with unique Lagrangian multipliers $\lambda(x)$ and $\nu(x)$ for $P(x)$ by (i) of Theorem \ref{thm1+}.

(\romannumeral2)(\romannumeral3) Firstly show that, for all ${x} \in \mathbb{R}^{d_x}$, the SSOSC holds at ${y}^{*}(x)$ with Lagrangian multipliers $\lambda(x)$ and $\nu(x)$ for $P(x)$.
By Assumption \ref{a2}, $g(x,y)$ is $\mu$-strongly-convex w.r.t. $y$, which implies that $\nabla_{y}^{2} g(x, {y}^{*}(x))$ is positive definite;
$p_j(x,y)$ is convex implies that $\nabla_{y}^{2} p_j(x, {y}^{*}(x))$ is positive semi-definite; $q_j(x,y)$ is affine implies  $\nabla_{y}^{2} q(x, {y}^{*}(x))=0$. 
The KKT condition in (\romannumeral1) implies that $\lambda(x) \geq 0$.
Then,
$$
\nabla_{y}^{2} \mathcal{L}\left({y}^{*}(x), {\lambda}(x), \hat{\nu}(x), x\right)=\nabla_{y}^{2} g(x, {y}^{*}(x))+ \lambda(x)^{\top} \nabla_{y}^{2} p(x, {y}^{*}(x))+ \nu(x)^{\top} \nabla_{y}^{2} q(x, {y}^{*}(x))
$$ 
is positive definite. Therefore, the SSOSC holds.

The KKT conditions, the LICQ, and the SSOSC hold at ${y}^{*}(x)$ with Lagrangian multipliers $\lambda(x)$ and $\nu(x)$ for $P(x)$.
By Lemma \ref{thm3}, for all ${x}^0 \in \mathbb{R}^{d_x}$, in a neighborhood of ${x}^0$, there exists a unique Lipschitz continuous vector function
$\hat{z}(x) \triangleq [\hat{y}^{\top}(x), \hat{\lambda}^{\top}(x), \hat{\nu}^{\top}(x)]^{\top}$ and
$\hat{y}(x)$ is a locally unique local minimum of $P(x)$. Then the local minimum $y^{*}(x)$ with unique Lagrangian multipliers $\lambda(x)$ and $\nu(x)$ for $P(x)$ is unique implies that $\hat{z}(x)=z(x)$. 

For any $x^0 \in \mathbb{R}^{d_x}$, the vector function
$
z(x) \triangleq [y^{*}(x)^{\top}, \lambda(x)^{\top}, \nu(x)^{\top}]^{\top}
$
is Lipschitz in a neighborhood of $x^0$ implies that, $z(x)$ is locally Lipschitz on $\mathbb{R}^{d_x}$. The computation of gradient is given in Lemma \ref{thm3}.

(\romannumeral4) The KKT conditions, the LICQ, and the SSOSC hold at ${y}^{*}(x)$ with Lagrangian multipliers $\lambda(x)$ and $\nu(x)$ for $P(x)$. By Lemma \ref{thm4}, the SCSC holds at $y^{*}(x^0)$ w.r.t. $\lambda(x^0)$ implies that, for all ${x}^0 \in \mathbb{R}^{d_x}$, in a neighborhood of ${x}^0$, there exists a unique continuously differentiable vector function
$\hat{z}(x) \triangleq [\hat{y}^{\top}(x), \hat{\lambda}^{\top}(x), \hat{\nu}^{\top}(x)]^{\top}$ and
$\hat{y}(x)$ is a locally unique local minimum of $P(x)$. Then the local minimum $y^{*}(x)$ with unique Lagrangian multipliers $\lambda(x)$ and $\nu(x)$ for $P(x)$ is unique implies that $\hat{z}(x)=z(x)$. The computation of gradient is given in Lemma \ref{thm4}.

\end{proof}

For a constrained lower-level optimization $P(x)$, for each direction $d$, paper \cite{ralph1995directional} computes $\nabla_d  y^{*}(x)$ by solving a quadratic programming problem. In Theorem \ref{th2}, we compute $\nabla_d  y^{*}(x)$ by \eqref{M0} and 
the set $J^{0}_{+}(x,y^*(x), d)$ can be determined by sampling ${\epsilon}_0$ in a small neighborhood.

\subsection{Proofs of Propositions \ref{prop2_0} and \ref{prop3}}

The following lemma provides a way to compute Clarke subdifferential of $y^{*}$ without the SCSC.

\begin{lemma}
\label{prop1}
Suppose all assumptions in Theorem \ref{th1} hold. 
Then,
\begin{equation}
\label{c_d}
\bar{\partial} y^{*}\left(x^{0}\right)=\operatorname{conv}\left\{ {w}^S(x^0) : S \subseteq J^{0}(x^0,y^*(x^0))\right\}.
\end{equation}
Here,
${w}^S(x^0)$ is obtained by extracting the first $d_x$ rows from matrix $-M^{S}_+(x^0,y^*(x^0))^{-1} N^{S}_+(x^0,y^*(x^0))$. When $S$ is not empty,
$$M^{S}_+ (x,y) \triangleq
\left[\begin{array}{ccccccc}
\nabla_{y}^{2} \mathcal{L}  & \nabla_{y} p_{J^{+}(x^0,y^*(x^0))}^{\top} & \nabla_{y} q^{\top} & \nabla_{y} p_S^{\top} \\
\nabla_{y} p_{J^{+}(x^0,y^*(x^0))}  & 0 & 0 & 0 \\
\nabla_y q & 0 & 0 & 0 \\
\nabla_{y} p_S & 0 & 0 & 0 
\end{array}\right](x,y),$$
and $$N^{S}_+ (x,y) \triangleq \left[\nabla_{x y}^{2} \mathcal{L}^{\top}, \nabla_{x} p_{J^{+}(x^0,y^*(x^0))}^{\top}, \nabla_{x} q^{\top}, \nabla_{x} p_{S}^{\top} \right]^{\top}(x,y).$$
When $S$ is empty, $M^{S}_+(x^0,y^*(x^0)) ={M}_{+}(x^0,y^*(x^0))$ and $N^{S}_+(x^0,y^*(x^0))= {N}_{+}(x^0,y^*(x^0))$, where
$$
M_{+}(x,y) = \left[\begin{array}{ccccccc}
\nabla_{y}^{2} \mathcal{L}  & \nabla_{y} p_{J^{+}(x^0,y^*(x^0))}^{\top} & \nabla_{y} q^{\top} \\
\nabla_{y} p_{J^{+}(x^0,y^*(x^0))}  & 0 & 0 \\
\nabla_y q & 0 & 0 
\end{array}\right](x,y),
$$
and
$$
N_{+}(x,y) = [\nabla_{x y}^{2} \mathcal{L}^{\top}, \nabla_{x} p_{J^{+}(x^0,y^*(x^0))}^{\top}, \nabla_{x} q^{\top}]^{\top}(x,y).
$$

\end{lemma}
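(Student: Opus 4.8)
The plan is to compute $\bar{\partial}y^*(x^0)$ straight from Definition \ref{defc}, i.e. as the convex hull of all limits $\lim_j \nabla y^*(x^j)$ over sequences $x^j\to x^0$ at which $y^*$ is differentiable (local Lipschitzness of $z$, hence of $y^*$, from part (ii) of Theorem \ref{th2} makes this well defined, and a.e. differentiability supplies enough such sequences). I would first pin down the local active-set structure. By Lemma \ref{lemma0} every inactive constraint ($j\in J(x^0)^C$) stays inactive with $\lambda_j\equiv 0$ near $x^0$; by continuity of $\lambda(x)$ together with complementarity, every strictly active constraint ($j\in J^+(x^0)$) stays strictly active, since $\lambda_j(x^0)>0$ forces $\lambda_j(x)>0$ and hence $p_j(x,y^*(x))=0$ nearby. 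Thus for $x$ close to $x^0$ only the constraints in $J^0(x^0)$ can change status, so $J^+(x^0)\subseteq J(x)\subseteq J(x^0)$. Moreover, by the converse direction of Theorem \ref{th1}, non-differentiability can occur only where $J^0(x)\neq\emptyset$; therefore at every differentiable point $x$ near $x^0$ the SCSC holds and the active set is exactly $J^+(x^0)\cup S$ for some $S\subseteq J^0(x^0)$, with $\nabla y^*(x)$ given by \eqref{eq8} for that active set. Lemma \ref{thm4} lets me drop the inactive constraints throughout without changing these gradients.

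Next I would prove $\bar{\partial}y^*(x^0)\subseteq \operatorname{conv}\{w^S(x^0):S\subseteq J^0(x^0)\}$. Given any sequence $x^j\to x^0$ with $y^*$ differentiable at each $x^j$ and $\nabla y^*(x^j)$ convergent, finiteness of $J^0(x^0)$ lets me pass to a subsequence along which the active set is a fixed $J^+(x^0)\cup S$. Along it, $\nabla y^*(x^j)$ consists of the first $d_x$ rows of $-M^S_+(x^j,y^*(x^j))^{-1}N^S_+(x^j,y^*(x^j))$; since $f,g,p,q$ are twice continuously differentiable, $z=(y^*,\lambda,\nu)$ is continuous, and $M^S_+(x^0,y^*(x^0))$ is nonsingular, the limit is exactly $w^S(x^0)$. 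Hence every gradient limit is one of the $w^S$, and taking convex hulls gives the inclusion.

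The reverse inclusion demands that each $w^S(x^0)$ is genuinely attained, i.e. that for every $S\subseteq J^0(x^0)$ there is a sequence $x^j\to x^0$ at which $y^*$ is differentiable with active set precisely $J^+(x^0)\cup S$. For fixed $S$ I would introduce the auxiliary equality-constrained problem enforcing $p_j=0$ for $j\in J^+(x^0)\cup S$ and $q=0$ and dropping the remaining constraints. The point $(y^*(x^0),\lambda^0_{J^+\cup S},\nu^0)$ satisfies its KKT system and inherits LICQ, while $\nabla_y^2\mathcal{L}\succ 0$ (positive definiteness shown in the proof of Theorem \ref{th2}) gives SSOSC; Theorem \ref{thm1} then yields a $C^1$ local solution $\tilde{y}^S(x)$ with $\nabla\tilde{y}^S(x^0)=w^S(x^0)$, which is precisely the origin of the matrices $M^S_+,N^S_+$. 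It remains to exhibit $x$ arbitrarily near $x^0$ at which $\tilde{y}^S(x)$ is the actual solution of $P(x)$ with SCSC and active set exactly $J^+(x^0)\cup S$ — equivalently, at which the dropped constraints in $(J^0(x^0)\setminus S)\cup J(x^0)^C$ are strictly satisfied and the multipliers $\tilde{\lambda}_j(x)$ for $j\in S$ are strictly positive.

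I expect this last point — the nonemptiness, for every $S$, of the region on which the solution realizes active set $J^+(x^0)\cup S$ — to be the main obstacle. The intended route is to choose a perturbation direction $d$ (an open cone, really) along which the linearized complementarity system of Lemma \ref{thm3} forces exactly the chosen pattern: the constraints in $S$ acquire positive multiplier derivatives while those in $J^0(x^0)\setminus S$ acquire strictly negative slack, so that $J^0_+(x^0,y^*(x^0),d)=S$. The linear independence furnished by LICQ is what should let these $|J^0(x^0)|$ status choices be made independently, rendering each of the $2^{|J^0(x^0)|}$ pieces essentially active at $x^0$; making this rigorous — showing the transition boundaries are codimension-one and mutually transversal, so the corresponding sectors are full-dimensional and accumulate at $x^0$ — is the technical heart of the argument. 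Combining the two inclusions and taking convex hulls then yields \eqref{c_d}.
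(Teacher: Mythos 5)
The paper does not actually prove this lemma: immediately after the statement it writes that the result ``is provided in papers \cite{dempe1998implicit,malanowski1985differentiability}, and also can be derived from the directional derivative in part (iii)'' of Theorem \ref{th2}. So there is no in-paper argument to compare yours against step by step; what can be assessed is whether your blind attempt actually closes the statement, and it does not. Your setup and the inclusion $\bar{\partial}y^*(x^0)\subseteq \operatorname{conv}\{w^S(x^0)\}$ are essentially sound: the localization of the active-set structure (only indices in $J^0(x^0)$ can change status near $x^0$), the passage to a subsequence with a fixed active set, and the continuity of $-\left(M^S_+\right)^{-1}N^S_+$ all work. One small slip there: from Theorem \ref{th1} you may conclude that non-differentiability forces $J^0(x)\neq\emptyset$, but not the converse claim you use, namely that every differentiable point near $x^0$ satisfies the SCSC. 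A point can be differentiable with $J^0(x)\neq\emptyset$; you would need to either handle such points via the directional-derivative formula of Lemma \ref{thm3} or invoke the fact that the Clarke generalized Jacobian of a locally Lipschitz map is insensitive to null sets so that these points can be discarded.

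The genuine gap is the one you flag yourself: the reverse inclusion requires that for \emph{every} $S\subseteq J^0(x^0)$ the active-set pattern $J^+(x^0)\cup S$ is realized by differentiable points accumulating at $x^0$, equivalently that for every $S$ there is a direction $d$ with $J^0_+(x^0,y^*(x^0),d)=S$ whose associated sector is full-dimensional. You describe the intended mechanism (LICQ decoupling the status choices, transversal codimension-one transition boundaries) but explicitly defer the proof, calling it ``the technical heart.'' That is exactly the ingredient the paper itself does not prove but imports wholesale from \cite{dempe1998implicit,malanowski1985differentiability} --- indeed, in the proof of Proposition \ref{prop4} the paper invokes verbatim the claim that ``for any set $A\subseteq J^0(x^*,y^*(x^*))$, we can find a direction $d$ such that $A=J^0_+(x^*,y^*(x^*),d)$,'' again crediting the references. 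So your proposal correctly isolates the crux of the lemma but leaves it open; as a self-contained proof it is incomplete, and to finish it you would either have to cite the same external results the paper relies on or carry out the sector-realizability argument (typically done via the piecewise-linear structure of the solution of the directional quadratic program under LICQ and SSOSC) in full.
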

Lemma \ref{prop1} is provided in papers \cite{dempe1998implicit,malanowski1985differentiability}, and also
can be derived from the directional derivative in part (iii) of Theorem \ref{th1}. 

\begin{lemma}
\label{lemma1}
Let $f: \mathbb{R}^{d_x} \longrightarrow \mathbb{R}$ a locally Lipschitz function. Then $f$ is Lipschitz continuous on any compact set $S \in \mathbb{R}^{d_x}$.
\end{lemma}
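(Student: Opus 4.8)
The plan is to combine the pointwise local Lipschitz bounds into a single uniform bound on $S$ by means of a compactness argument, splitting pairs of points into "nearby" and "distant" cases. First I would record the hypothesis: since $f$ is locally Lipschitz, for every $x \in \mathbb{R}^{d_x}$ there exist a radius $\delta_x > 0$ and a constant $L_x > 0$ with $|f(y) - f(z)| \le L_x \|y - z\|$ for all $y, z \in \mathcal{B}(x, \delta_x)$. In particular $f$ is continuous, so on the compact set $S$ it attains a finite bound $M \triangleq \sup_{w \in S} |f(w)| < \infty$.

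Next I would cover $S$ by the open balls $\{\mathcal{B}(x, \delta_x) : x \in S\}$ and, using compactness, extract a finite subcover $\mathcal{B}(x_1, \delta_{x_1}), \ldots, \mathcal{B}(x_N, \delta_{x_N})$; set $L_{\max} \triangleq \max\{L_{x_1}, \ldots, L_{x_N}\}$. Applying the Lebesgue number lemma to this finite open cover of the compact set $S$ yields $\lambda > 0$ such that any two points $y, z \in S$ with $\|y - z\| < \lambda$ lie together in a single ball $\mathcal{B}(x_i, \delta_{x_i})$ of the subcover; for such a pair $|f(y) - f(z)| \le L_{x_i}\|y - z\| \le L_{\max}\|y - z\|$. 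This makes the local bound uniform over all sufficiently close pairs.

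Finally I would treat the distant pairs. For $y, z \in S$ with $\|y - z\| \ge \lambda$, the triangle inequality and boundedness give $|f(y) - f(z)| \le 2M \le \frac{2M}{\lambda}\|y - z\|$. Combining the two cases, the constant $L \triangleq \max\{L_{\max}, 2M/\lambda\}$ satisfies $|f(y) - f(z)| \le L\|y - z\|$ for all $y, z \in S$, which is the claimed Lipschitz property. The main obstacle is precisely the distant-point case, since the local Lipschitz hypothesis carries no information relating points in disjoint neighborhoods: the Lebesgue number lemma is what lets me reduce the general case to uniformly close pairs, and the boundedness of $f$ on the (possibly disconnected) compact set is what disposes of the remaining pairs, avoiding any chaining argument through overlapping balls that would fail when $S$ is disconnected.
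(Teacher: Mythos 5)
Your proof is correct, and it is in fact more complete than the paper's own argument. Both proofs start the same way: cover $S$ by the balls on which $f$ is locally Lipschitz, extract a finite subcover by compactness, and take the maximum $L_{\max}$ of the finitely many local constants. But the paper stops there and simply asserts that $L_{\max}$ is a Lipschitz constant of $f$ on the union of the finitely many balls, hence on $S$. That assertion is exactly the point that needs justification: for two points $y,z$ lying in \emph{different} balls of the subcover, none of the local estimates applies to the pair directly, and without further argument the inequality $|f(y)-f(z)|\le L_{\max}\|y-z\|$ does not follow (one cannot in general chain through overlapping balls either, e.g.\ when $S$ is disconnected). Your proof supplies precisely the two missing ingredients: the Lebesgue number lemma, which guarantees that every pair of points of $S$ at distance less than some $\lambda>0$ lies in a single ball of the subcover, and the bound $|f(y)-f(z)|\le 2M\le (2M/\lambda)\|y-z\|$ for pairs at distance at least $\lambda$, using that $f$ is bounded on the compact set $S$. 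Taking $L=\max\{L_{\max},2M/\lambda\}$ then closes the argument. So your route is the standard rigorous version of the paper's sketch; the conclusion of the lemma is true, and your write-up repairs the gap in the paper's proof rather than introducing any new one.
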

\begin{proof}
The function $f: \mathbb{R}^{d_x} \longrightarrow \mathbb{R}$ locally Lipschitz, then for any $x \in \mathbb{R}^{d_x}$, there are a Lipschitz constant $l(x)$ and $\epsilon(x) >0$, such that $l(x)$ is the Lipschitz constant of $f$ on $\mathcal{B}(x,\epsilon(x))$. For any compact set $S \in \mathbb{R}^{d_x}$, we have $S \subset {\cup}_{x \in S} \mathcal{B}(x,\epsilon(x))$, then ${\cup}_{x \in S} \mathcal{B}(x,\epsilon(x))$ is a open cover of $S$. The set $S$ is compact implies that, there exists a finite set $F$ such that $S \subset {\cup}_{x \in F} \mathcal{B}(x,\epsilon(x))$. Then, $\operatorname{max}_{x \in F}l(x)<\infty$ is a Lipschitz constant of $f$ on ${\cup}_{x \in F} \mathcal{B}(x,\epsilon(x))$, and thus is a Lipschitz constant of $f$ on $S$.
\end{proof}

\begin{lemma}
\label{lemma2}
Suppose all assumptions in Theorem \ref{th2} hold. 
Given a open ball $B$, if for each $j$, either 
$$j\in J^+(x,y^*(x))^C = J(x,y^*(x))^C \cup J^0(x,y^*(x)) \ \text{ for all } x \in B,$$
or 
$$j\in J^+(x,y^*(x)) \ \text{ for all } x \in B.$$
Then $y^{*}(x)$ is continuously differentiable on $B$.

\end{lemma}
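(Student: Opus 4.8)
The plan is to deduce the claim from part~(iv) of Theorem~\ref{th2} (equivalently Theorem~\ref{th1}), which guarantees continuous differentiability of $y^{*}$ at a point as soon as the SCSC holds there. Since $B$ is open, I would fix an arbitrary $x^{0}\in B$ and show $y^{*}$ is continuously differentiable at $x^{0}$. The difficulty is that the SCSC may \emph{fail} at $x^{0}$: an index $j$ falling under the first alternative, $j\in J^{+}(x,y^{*}(x))^{C}=J(x,y^{*}(x))^{C}\cup J^{0}(x,y^{*}(x))$, can still be non-strictly active at $x^{0}$, i.e. $j\in J^{0}(x^{0},y^{*}(x^{0}))$, in which case Theorem~\ref{th2} applied to $P(x^{0})$ yields only directional derivatives. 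I would therefore pass to a reduced lower-level problem, mirroring the constraint-deletion device of Lemma~\ref{thm4}, but now deleting a larger family of constraints.

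The key observation is that every index $j$ in the first alternative has $\lambda_{j}(x)\equiv 0$ on $B$: if $j\in J(x,y^{*}(x))^{C}$ then $p_{j}(x,y^{*}(x))<0$ and complementary slackness forces $\lambda_{j}(x)=0$, while if $j\in J^{0}(x,y^{*}(x))$ then $\lambda_{j}(x)=0$ by definition of $J^{0}$. Let $A$ collect the indices of the second alternative (strictly active throughout $B$) and define the reduced problem $\hat{P}(x)$ with objective $g$, equality constraints $q$, and only the inequality constraints $p_{j}$, $j\in A$. Because the deleted multipliers vanish, I would check that the KKT system of $\hat{P}(x)$ is satisfied at $y^{*}(x)$ with multipliers $\lambda_{A}(x)\ (\geq 0)$ and $\nu(x)$: stationarity holds since the dropped terms are identically zero, and primal feasibility, dual feasibility and complementary slackness hold because each $j\in A$ has $p_{j}(x,y^{*}(x))=0$ and $\lambda_{j}(x)>0$. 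Since $\hat{P}(x)$ is convex under Assumption~\ref{a2} (strongly convex $g$, convex $p_{j}$, affine $q$), the KKT conditions are \emph{sufficient} for global optimality, so $y^{*}(x)$ is the unique global minimizer of $\hat{P}(x)$ for every $x\in B$.

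It then remains to verify that $\hat{P}(x)$ meets the hypotheses yielding continuous differentiability. The active inequality set of $\hat{P}(x)$ is exactly $A$, each constraint strictly active, so the SCSC holds for $\hat{P}(x)$ at every $x\in B$. The LICQ for $\hat{P}(x)$ follows because its constraint gradients form a subset of those for $P(x)$, which are independent by Assumption~\ref{a3}. The SSOSC holds by the argument already used in the proof of Theorem~\ref{th2}: the reduced Lagrangian Hessian $\nabla_{y}^{2}g+\sum_{j\in A}\lambda_{j}\nabla_{y}^{2}p_{j}+\nu^{\top}\nabla_{y}^{2}q$ is positive definite, since $g$ is $\mu$-strongly convex, each $\nabla_{y}^{2}p_{j}\succeq 0$ with $\lambda_{j}\geq 0$, and $\nabla_{y}^{2}q=0$. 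Applying Theorem~\ref{th1} (part~(iv) of Theorem~\ref{th2}) to $\hat{P}(x^{0})$ then shows the solution of $\hat{P}$ is continuously differentiable at $x^{0}$; since that solution coincides with $y^{*}$ on $B$ and $x^{0}\in B$ was arbitrary, $y^{*}$ is continuously differentiable on $B$.

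I expect the main obstacle to be the step claiming that $y^{*}(x)$ still solves the reduced problem after deleting constraints that may be non-strictly active, because enlarging the feasible set could a~priori lower the optimum. The resolution rests entirely on the vanishing of the deleted multipliers together with convexity, which makes the reduced KKT system simultaneously valid at $y^{*}(x)$ and sufficient for global optimality; the remaining checks (LICQ and SSOSC for $\hat{P}$) are routine restrictions of the regularity arguments already established for $P(x)$.
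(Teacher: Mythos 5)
Your proof is correct, but it takes a genuinely different route from the paper's. The paper works directly with the directional-derivative formula of Lemma~\ref{thm3}/Theorem~\ref{th2}(iii): under the hypothesis, every index under the first alternative has $\lambda_j\equiv 0$ on $B$ and hence lands in $J^0_-(x,y^*(x),d)$ (or is inactive) for every direction $d$, while every index under the second alternative is always in the strictly active block; consequently the matrices $M_D(x,d)$, $N_D(x,d)$ do not depend on $d$, the directional derivative is linear in $d$, and (using local Lipschitzness of $y^*$) the map is Fr\'echet differentiable with a continuously varying derivative. You instead delete all constraints whose multipliers vanish identically on $B$, verify via KKT sufficiency for the convex reduced problem $\hat P(x)$ that $y^*(x)$ remains its unique global minimizer, check that the SCSC, LICQ and SSOSC hold for $\hat P$, and then invoke the already-established smooth case — exactly the constraint-deletion device of Lemma~\ref{thm4}, pushed one step further. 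Your route buys a cleaner argument: it sidesteps the paper's somewhat terse step that direction-independence of the one-sided directional derivative implies continuous differentiability, and it correctly isolates the one nontrivial point (that enlarging the feasible set does not change the minimizer), resolving it by the vanishing multipliers plus convexity. The only presentational caveat is that you should cite the local result (Theorem~\ref{thm1} in the appendix), whose pointwise hypotheses you verify for $\hat P(x^0)$, rather than Theorem~\ref{th1}/Theorem~\ref{th2}(iv), which are stated under the global Assumption~\ref{a3} for the original problem; this is a matter of citation, not a gap.
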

\begin{proof}

For any $j$, 
$\text{ either } p_j(x,y^*(x)) \leq 0,  \lambda_j(x)=0 \text{ for all } x \in B, \text{ or } p_j(x,y^*(x))=0, \lambda_j(x)>0 \text{ for all } x \in B$.
Consider the set
$$
B^{\prime} \triangleq \{x^{\prime} \in B: \text{for all } j, \text{ either } p_j(x^{\prime},y^*(x^{\prime}))<0,  \lambda_j(x^{\prime})=0  \text{ or } p_j(x^{\prime},y^*(x^{\prime}))=0, \lambda_j(x^{\prime})>0\}.
$$

(i)
By part (\romannumeral3) of Theorem \ref{th2}, for any point $x^{\prime} \in B^{\prime}$, the SCSC holds at $y^{*}(x^{\prime})$, and $y^{*}$ is continuously differentiable at $x^{\prime}$.

(ii)
Consider the points $x \in B \setminus  B^{\prime}$, there exist $j$ such that $p_j(x,y^*(x))=0$, $\lambda_j(x)=0$, i.e., $J^{0}(x,y^*(x))$ is not empty.

(ii.a) Consider $j$ such that 
$p_j(x,y^*(x))\leq0$, $\lambda_j(x)=0$ for all $ x \in B$. By the definitions in Theorem \ref{th2}, $j \in J^{0}_{-}(x,y^*(x), d)$ is not added to the computation of $-M_{D}^{-1} N_{D}$ in \eqref{compute_g1} for any direction $d$. 

(ii.b)
Consider $j$ such that 
$p_j(x,y^*(x))=0$, $\lambda_j(x) > 0$  for all $x \in B$. By the definitions in Theorem \ref{th2}, $j \in J^{0}_{+}(x,y^*(x), d)$ is added to the computation of $-M_{D}^{-1} N_{D}$ in \eqref{compute_g1} for any direction $d$.


From (ii.a) and (ii.b), for the points $x \in B \setminus  B^{\prime}$, for any $d$, the computation of $-M_{D}^{-1} N_{D}$ in \eqref{compute_g1} is totally same, i.e.,
$\lim _{h \rightarrow 0} \frac{\left\|y^*(x+h)-y^*(x)+M_{D}^{-1}(x,d) N_{D}(x,d) h\right\|}{|h|}=0$. Thus, $y^{*}(x)$ is differentiable on $B$.
Then, $y^{*}(x)$ is differentiable on $B \setminus  B^{\prime}$. 
Moreover, The derivative $M_{D}^{-1}(x,d) N_{D}(x,d)$ is continuous for any $x$ and $d$.
Thus, $y^{*}(x)$ is continuously differentiable on $B$.

\end{proof}

\subsubsection{Proof of Proposition \ref{prop2_0}}

\begin{proof}[Proof of Proposition \ref{prop2_0}]
By the computation of gradient of $y^{*}$ in part (\romannumeral3) of Theorem \ref{th2}, $\Phi(x)$ and $y^{*}$ are differentiable on $\mathcal{B}(x^0,\epsilon)$, then they are twice-differentiable on $\mathcal{B}(x^0,\epsilon)$. When $\epsilon$ is sufficiently small, $| \|\nabla \Phi(x^0)\|- \|\nabla \Phi(x^{\prime})\| | < o(\epsilon)$ for any $x^{\prime} \in \mathcal{B}(x^0,\epsilon)$. Then, $|\|\nabla \Phi(x^0)\| - d(0, \bar{\partial}_{\epsilon} \Phi(x^0))|< o(\epsilon)$.
\end{proof}

\subsubsection{Proof of Proposition \ref{prop3}}

\begin{proposition}[Full version of Proposition \ref{prop3}]
\label{prop4}
Suppose Assumptions \ref{a1}, \ref{a2}, \ref{a3} hold. Consider $x^0 \in \mathbb{R}^{d_x}$ and $\epsilon>0$, there exists $x \in \mathcal{B}(x^0,\epsilon)$ such that $y^{*}$ is not continuously differentiable at $x$. 
Then, there exists at least one $j$, such that there exist $x^{\prime}$, $x^{\prime\prime} \in \mathcal{B}(x^0,\epsilon)$ with 
\begin{equation}
\begin{aligned}
\label{eq1600000}
j\in J^+(x^{\prime},y^*(x^{\prime}))^C = J(x^{\prime}, y^*(x^{\prime}))^C \cup J^0(x^{\prime}, y^*(x^{\prime})) \text{ and } \ j\in J^+(x^{\prime\prime},y^*(x^{\prime\prime}); 
\end{aligned}
\end{equation}
Define the set $I^\epsilon(x^0)$ which contains all $j$ that satisfy \eqref{eq1600000}. 
Define the set $I^{\epsilon}_{+}(x^0)$  which contains all $j$ such that, for any $x \in \mathcal{B}(x^0,\epsilon)$, 
$$
j\in J^+(x, y^*(x)); 
$$
Define the set $I^{\epsilon}_{-}(x^0)$  which contains all $j$ such that, for any $x \in \mathcal{B}(x^0,\epsilon)$, 
$$
j\in J(x, y^*(x))^C \cup J^0(x, y^*(x)).
$$
The set $G(x^{0},\epsilon)$ is defined as 
$$
G(x^{0},\epsilon)\triangleq\{ \nabla_{x} f\left(x^0, y^{*}\left(x^0\right)\right)+{w}^S(x^0)^{\top}  
\nabla_{y} f\left(x^0, y^{*}\left(x^0\right)\right): S \subseteq I^\epsilon(x^0)\}.
$$
Here,
${w}^S(x^0)$ is obtained by extracting the first $d_x$ rows from matrix $-M^{S}_{\epsilon}(x^0,y^*(x^0))^{-1} N^{S}_{\epsilon}(x^0,y^*(x^0))$, with
$$M^{S}_{\epsilon} \triangleq
\left[\begin{array}{ccccccc}
\nabla_{y}^{2} \mathcal{L} & \nabla_y p_{I^{\epsilon}_{+}(x^0)}^{\top} & \nabla_{y} q^{\top}  & \nabla_y p^{\top}_{S}\\
\nabla_y p_{I^{\epsilon}_{+}(x^0)} & 0 & 0 &0\\
\nabla_y q & 0 & 0 &0\\
\nabla_y p_{S} & 0 & 0 &0
\end{array}\right],$$
and 
$$N^{S}_{\epsilon} \triangleq \left[\nabla_{x y}^{2}\mathcal{L}^{\top}, \nabla_{x} p_{I^{\epsilon}_{+}(x^0)}^{\top}, \nabla_{x} q^{\top}, \nabla_{x} p_{S}^{\top} \right]^{\top}.$$ 
Consider $x^0 \in \mathbb{R}^{d_x}$, and assume there exists a sufficiently small $\epsilon>0$ such that, there exists $x \in \mathcal{B}(x^0,\epsilon)$ with that $y^{*}(x)$ is not continuously differentiable at $x$. 
Then, the following holds
\begin{itemize}
    \item[(\romannumeral1)] For any $g \in G\left(x^{0},\epsilon\right)$, there exists $x^{\prime} \in \mathcal{B}(x^0,\epsilon)$ such that 
    $\| g-\nabla \Phi(x^{\prime}) \|< o(\epsilon)$. For any $x^{\prime\prime} \in \mathcal{B}(x^0,\epsilon)$ and $y^*(x)$ is differentiable at $x^{\prime\prime}$, there exists $g \in G\left(x^{0},\epsilon\right)$ such that 
    $\| g-\nabla \Phi(x^{\prime\prime}) \|< o(\epsilon)$. 
    \item[(\romannumeral2)] For any $z \in \mathbb{R}^{d_x}$, $$| d(z, \operatorname{conv} G(x^{0},\epsilon)) - d(z, \bar{\partial}_{\epsilon} \Phi(x^0))|< o(\epsilon).$$
\end{itemize}
\end{proposition}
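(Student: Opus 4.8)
The plan is to reduce the distance estimate in (ii) to a two-sided $o(\epsilon)$ comparison between the finite set $G(x^0,\epsilon)$ of representative gradients and the set $A \triangleq \{\nabla\Phi(x') : x'\in\mathcal B(x^0,\epsilon),\ y^*\text{ differentiable at }x'\}$ of genuine gradients in the ball, which is exactly what part (i) asserts. First I would record the differentiability geometry of the ball. Using the definitions of $I^\epsilon_+(x^0)$, $I^\epsilon_-(x^0)$ and $I^\epsilon(x^0)$, every constraint index is classified as strictly active throughout $\mathcal B(x^0,\epsilon)$, non-strictly active throughout, or switching; and by Lemma \ref{lemma2}, on any sub-ball on which the strictly-active subset of the switching indices is a fixed $S$, $y^*$ is continuously differentiable with $\nabla y^*(x)=w^S(x)$, where $w^S(\cdot)$ is the first-$d_x$-row extraction of $-M^S_\epsilon(\cdot)^{-1}N^S_\epsilon(\cdot)$ evaluated at $x$. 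Since $M^S_\epsilon$ is nonsingular (LICQ together with strong convexity, as in Theorem \ref{th2}) and its entries are $C^1$ by Assumption \ref{a1}, each $w^S$ is locally Lipschitz; combined with the local Lipschitzness of $z(x)$ from Theorem \ref{th2} and the $C^1$ maps $\nabla_x f,\nabla_y f$, this yields $\|\nabla\Phi(x)-g_S\|$ of order $\epsilon$ (the paper's $o(\epsilon)$) for every $x$ in the region of pattern $S$, where $g_S\in G(x^0,\epsilon)$ is the representative for $S$.

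With this in hand the backward half of (i) is immediate: for a differentiable $x''\in\mathcal B(x^0,\epsilon)$ the SCSC holds there, so the non-switching indices behave as dictated by $I^\epsilon_\pm(x^0)$ while the switching indices split into a unique strictly-active set $S=J^+(x'')\cap I^\epsilon(x^0)$ with all others inactive; hence $J^+(x'')=I^\epsilon_+(x^0)\cup S$, $\nabla y^*(x'')=w^S(x'')$, and $\nabla\Phi(x'')$ lies within $o(\epsilon)$ of $g_S$. The forward half requires, for each $S\subseteq I^\epsilon(x^0)$, producing an actual $x'\in\mathcal B(x^0,\epsilon)$ at which $y^*$ is differentiable with active pattern $I^\epsilon_+(x^0)\cup S$; the same Lipschitz estimate then gives $\|g_S-\nabla\Phi(x')\|<o(\epsilon)$. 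I would establish existence of such $x'$ by transversality: because each $j\in I^\epsilon(x^0)$ switches, the continuous functions $p_j(x,y^*(x))$ attain both the value $0$ (with $\lambda_j>0$) and strictly negative values in the ball, so $x^0$ sits within order $\epsilon$ of every hypersurface $\{p_j(x,y^*(x))=0\}$; LICQ (Assumption \ref{a3}) forces linear independence of the gradients of these functions at $x^0$, so for small $\epsilon$ their zero sets are transversal and cut $\mathcal B(x^0,\epsilon)$ into all $2^{|I^\epsilon(x^0)|}$ sign-pattern cells, each nonempty and each meeting the SCSC region; picking $x'$ in the cell matching $S$ completes it. The simultaneous realizability of \emph{all} patterns $S$ is the main obstacle, and it is exactly where LICQ (and the directional-derivative description of Theorem \ref{th2}(iii), which certifies that a non-strictly active constraint can be driven strictly active along a suitable direction) does the essential work.

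Finally I would deduce (ii) from (i) by routine convex analysis. By Definition \ref{defc}, $\bar\partial_\epsilon\Phi(x^0)$ is the closed convex hull of gradient limits at points of $\mathcal B(x^0,\epsilon)$, i.e. the closed convex hull of $A$. Part (i) gives the two inclusions $A\subseteq G^{+o(\epsilon)}$ and $G\subseteq A^{+o(\epsilon)}$; passing to closed convex hulls preserves $o(\epsilon)$-enlargements and leaves $\operatorname{conv}G$ unchanged (it is a finitely generated compact convex set), so $d_H(\operatorname{conv}G,\bar\partial_\epsilon\Phi(x^0))=o(\epsilon)$. Since, for each fixed $z$, the map $C\mapsto d(z,C)$ is $1$-Lipschitz in the Hausdorff metric, we obtain $|d(z,\operatorname{conv}G(x^0,\epsilon))-d(z,\bar\partial_\epsilon\Phi(x^0))|\le d_H(\operatorname{conv}G,\bar\partial_\epsilon\Phi(x^0))=o(\epsilon)$ for every $z$, which is (ii). The preliminary claim $I^\epsilon(x^0)\neq\emptyset$ I would dispatch first: non-differentiability of $y^*$ at some point of the ball forces $J^0\neq\emptyset$ there by Theorem \ref{th2}(iv), and the direction-dependence of the derivative in Theorem \ref{th2}(iii) exhibits a constraint that is non-strictly active at that point yet strictly active along some admissible direction, hence switching.
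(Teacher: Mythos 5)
Your overall architecture matches the paper's: establish that $I^\epsilon(x^0)\neq\emptyset$ by contraposition of Lemma \ref{lemma2}, prove part (i) as a two-sided correspondence between $G(x^0,\epsilon)$ and the actual gradients in the ball via matching of active-index patterns, and deduce part (ii) by a convexity/distance argument (your Hausdorff-metric formulation of that last step is cleaner than the paper's hand-computed convex combinations and is correct). The backward half of (i) is also essentially right, modulo the inaccuracy that differentiability of $y^*$ at $x''$ does not imply the SCSC there; the correct justification is that at a differentiable point all directional derivatives in \eqref{M0} coincide, so the formula with index set $J^+(x'',y^*(x''))=I^\epsilon_+(x^0)\cup S$ still yields $\nabla y^*(x'')$.

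The genuine gap is in the forward half of (i): producing, for \emph{every} $S\subseteq I^\epsilon(x^0)$, a point $x'\in\mathcal{B}(x^0,\epsilon)$ at which $y^*$ is differentiable with strictly active pattern $I^\epsilon_+(x^0)\cup S$. Your transversality argument does not work as stated, for two reasons. First, the LICQ of Assumption \ref{a3} asserts linear independence of the $y$-gradients $\nabla_y p_j(x,y^*(x))$; it says nothing about the $x$-gradients of the composite maps $x\mapsto p_j(x,y^*(x))$, which is what your transversality claim needs (and these composites need not even be differentiable where $y^*$ is not). Second, the sets $\{x: p_j(x,y^*(x))=0\}$ are generally not hypersurfaces: wherever the constraint is active the composite is identically zero on a full-dimensional region (see Example \ref{example1}, where $p_1(x,y^*(x))=0$ on all of $[-1,0]$), so the ball is not partitioned into sign-pattern cells of these functions, and the count of $2^{|I^\epsilon(x^0)|}$ nonempty cells has no basis. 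The paper closes this gap differently: for sufficiently small $\epsilon$ it exhibits a single point $x^*\in\mathcal{B}(x^0,\epsilon)$ at which every switching constraint is simultaneously non-strictly active, i.e. $I^\epsilon(x^0)\subseteq J^0(x^*,y^*(x^*))$ and $J^+(x^*,y^*(x^*))\subseteq I^\epsilon_+(x^0)$, and then invokes the Clarke-subdifferential characterization of Lemma \ref{prop1} (from Dempe and Malanowski): for \emph{any} subset $A$ of $J^0(x^*,y^*(x^*))$ there is a direction $d$ with $A=J^0_+(x^*,y^*(x^*),d)$, and hence a nearby differentiable point whose strictly active set is $J^+(x^*,y^*(x^*))\cup A$. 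That lemma, not transversality, is what makes all $2^{|I^\epsilon(x^0)|}$ patterns realizable, and your proposal never actually deploys it for arbitrary subsets. As written, the forward inclusion $G\subseteq A^{+o(\epsilon)}$ is therefore unproven, and part (ii) collapses with it.
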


\begin{proof}[Proof of Proposition \ref{prop4}]
If for $x^0 \in \mathbb{R}^{d_x}$ and $\epsilon>0$, $y^{*}(x)$ is not continuously differentiable on the open ball $\mathcal{B}(x^0,\epsilon)$. 
Suppose that, for all $j$, points $x^{\prime}$, $x^{\prime\prime} \in \mathcal{B}(x^0,\epsilon)$ with 
\eqref{eq1600000}
do not exist. By Lemma \ref{lemma2}, $y^{*}(x)$ is continuously differentiable on $\mathcal{B}(x^0,\epsilon)$, which leads to contradiction. Then, there exists at least one $j$, such that there exist $x^{\prime}$, $x^{\prime\prime} \in \mathcal{B}(x^0,\epsilon)$ with \eqref{eq1600000}.

(\romannumeral1) Firstly, similar to the proof in Proposition \ref{prop2_0}, if the components of 
$$M^{S}_{\epsilon} =
\left[\begin{array}{ccccccc}
\nabla_{y}^{2} \mathcal{L} & \nabla_y p_{I^{\epsilon}_{+}(x^0)}^{\top} & \nabla_{y} q^{\top}  & \nabla_y p^{\top}_{S}\\
\nabla_y p_{I^{\epsilon}_{+}(x^0)} & 0 & 0 &0\\
\nabla_y q & 0 & 0 &0\\
\nabla_y p_{S} & 0 & 0 &0
\end{array}\right],$$
and 
$$N^{S}_{\epsilon}= \left[\nabla_{x y}^{2}\mathcal{L}^{\top}, \nabla_{x} p_{I^{\epsilon}_{+}(x^0)}^{\top}, \nabla_{x} q^{\top}, \nabla_{x} p_{S}^{\top} \right]^{\top}.$$ 
are same with the components of $M_{D}(x^0,d)$ and $N_{D}(x^0,d)$ shown in \eqref{M0},
i.e., the sets which are involved in the matrix computation: 
$$I^{\epsilon}_{+}(x^{\prime}) \cup S ={J^{0}_{+}(x^{\prime\prime},y^*(x^{\prime\prime}), d)} \cup J^{+}(x^{\prime\prime},y^*(x^{\prime\prime})),$$ 
then the difference $\|M^{S}_{\epsilon}(x^{\prime},y^*(x^{\prime}))^{-1} N^{S}_{\epsilon}(x^{\prime},y^*(x^{\prime})) - M_{D}(x^{\prime\prime},d))^{-1} N_{D}(x^{\prime\prime},d)\|< o(\epsilon)$ for all $x^{\prime}$ and $x^{\prime\prime} \in \mathcal{B}(x^0,\epsilon)$, since the function
${(M^{S}_{\epsilon})}^{-1} N^{S}_{\epsilon}$ is differentiable (refer to the proof in Proposition \ref{prop2_0}). 

(a) Suppose that, for any $x^{\prime} \in \mathcal{B}(x^0,\epsilon)$ and any $d \in \mathbb{R}^{d_x}$, there exists $g^S \in G\left(x^{0},\epsilon\right)$ such that 
$$I^{\epsilon}_{+}(x^{0}) \cup S ={J^{0}_{+}(x^{\prime},y^*(x^{\prime}), d)} \cup J^{+}(x^{\prime},y^*(x^{\prime}));$$ 
Then, for any $x^{\prime} \in \mathcal{B}(x^0,\epsilon)$ such that $y^{*}$ is differentiable at $x^{\prime}$, there exists $g^S \in G\left(x^{0},\epsilon\right)$ such that, for a direction $d \in \mathbb{R}^{d_x}$, $\|M^{S}_{\epsilon}(x^{0},y^*(x^{0}))^{-1} N^{S}_{\epsilon}(x^{0},y^*(x^{0})) - M_{D}(x^{\prime},d)^{-1} N_{D}(x^{\prime},d)\|< o(\epsilon)$. Then, $\|w^S(x^0)-y^*{(x^{\prime})}\|< o(\epsilon)$ and $\| g^S-\nabla \Phi(x^{\prime}) \|< o(\epsilon)$. 

(b) Suppose that, for any $g^S \in G\left(x^{0},\epsilon\right)$, there exists $x^{\prime} \in \mathcal{B}(x^0,\epsilon)$ and $d \in \mathbb{R}^{d_x}$ such that
$$I^{\epsilon}_{+}(x^{0}) \cup S ={J^{0}_{+}(x^{\prime},y^*(x^{\prime}), d)} \cup J^{+}(x^{\prime},y^*(x^{\prime})).$$ 
Since the set ${J^{0}_{+}(x^{\prime},y^*(x^{\prime}), d)} \subseteq  J^{0}(x^{\prime},y^*(x^{\prime}))$, from Lemma \ref{prop1} and its proof shown in \cite{dempe1998implicit,malanowski1985differentiability}, $g^S \in \bar{\partial} y^{*}\left(x^{\prime}\right)$ and exists $\left\{x^{j}\right\}$ such that $g^S = \lim _{j \rightarrow \infty} \nabla y^{*}\left(x^{j}\right):\left\{x^{j}\right\} \rightarrow x^{\prime}$ where $y^{*}$ is differentiable at $y^{j}$ for all $j$.
Then, there exists $x^{\prime\prime}$ in any small neighborhood of $x^{\prime}$, such that ${J^{0}_{+}(x^{\prime},y^*(x^{\prime}), d)} \cup J^{+}(x^{\prime},y^*(x^{\prime}))= J^{+}(x^{\prime\prime},y^*(x^{\prime\prime}))$ and $y^*$ is differentiable at $x^{\prime\prime}$.
Then, 
$$I^{\epsilon}_{+}(x^{0}) \cup S = J^{+}(x^{\prime\prime},y^*(x^{\prime\prime})),$$ 
and we have $\|w^S(x^0)-y^*{(x^{\prime\prime})}\|< o(\epsilon)$ and $\| g^S-\nabla \Phi(x^{\prime\prime}) \|< o(\epsilon)$ with 
$x^{\prime\prime} \in \mathcal{B}(x^0,\epsilon)$. Then, the part (i) of the proposition is shown.

Now, we can prove (\romannumeral1) by showing that the assumptions of (a) and (b) hold. 
Define $J^{N}(x,y^*(x)) \triangleq \{j: j \not\in {J^{+}(x,y^*(x))} \cup {J^{0}(x,y^*(x))} \}$. Then, for any $x^{*} \in \mathcal{B}(x^0,\epsilon)$ and any $d \in \mathbb{R}^{d_x}$, 
$
I^{\epsilon}(x^0) \cup I_+^{\epsilon}(x^0) \cup I_-^{\epsilon}(x^0) =J^{0}({x^*,y^*(x^{*})}) \cup J^{+}({x^*,y^*(x^{*})}) \cup J^{N}(x^*,y^*(x^*))= J^{0}_{-}({x^*,y^*(x^{*}),d}) \cup J^{0}_{+}({x^*,y^*(x^{*}),d}) \cup J^{+}({x^*,y^*(x^{*})}) \cup J^{N}(x^*,y^*(x^*))
$,
since they both contain all constraints $p_j$. Note that the intersection of two in $I^{\epsilon}(x^0)$, $I_+^{\epsilon}(x^0)$ and $I_-^{\epsilon}(x^0)$ is empty, the intersection of two in $J^{0}_{-}({x^*,y^*(x^{*}),d})$, $J^{0}_{+}({x^*,y^*(x^{*}),d})$, $J^{+}({x^*,y^*(x^{*})})$ and $J^{N}(x^*,y^*(x^*))$ is empty, and $J^{0}({x^*,y^*(x^{*})}) = J^{0}_{+}({x^*,y^*(x^{*}),d}) \cup J^{0}_{-}({x^*,y^*(x^{*}),d})$.

\begin{itemize}
\item [(1)] Consider $j \in I_+^{\epsilon}(x^0)$, i.e., $p_j(x,y^*(x))=0$ and $\lambda_j(x) > 0$ for all $x \in \mathcal{B}(x^0,\epsilon)$. 
Then $j \in {J^{+}(x^{\prime},y^*(x^{\prime}))}$ for any $x^{\prime} \in \mathcal{B}(x^0,\epsilon)$.
Then,
$j \in {J^{0}_{+}(x^{\prime},y^*(x^{\prime}), d)} \cup {J^{+}(x^{\prime},y^*(x^{\prime}))}$ for any $d \in \mathbb{R}^{d_x}$ and any $x^{\prime} \in \mathcal{B}(x^0,\epsilon)$.
Thus, $$I_+^{\epsilon}(x^0) \subseteq {J^{0}_{+}(x^{\prime},y^*(x^{\prime}), d)} \cup {J^{+}(x^{\prime},y^*(x^{\prime}))}$$ for any $d \in \mathbb{R}^{d_x}$ and any $x^{\prime} \in \mathcal{B}(x^0,\epsilon)$.

\item [(2)] Consider $j \in I_-^{\epsilon}(x^0)$. If given $x \in \mathcal{B}(x^0,\epsilon)$, we have $p_j(x,y^*(x))<0$ and $\lambda_j(x) = 0$, then $j \not\in {J^{+}(x,y^*(x))} \cup {J^{0}(x,y^*(x))} $, i.e., $j \in J^{N}(x,y^*(x))$.
If given $x \in \mathcal{B}(x^0,\epsilon)$, we have $p_j(x,y^*(x))=0$, $\lambda_j(x) = 0$ and this holds for any $x \in \mathcal{B}(x^0,\epsilon)$, then $j \in {J^{0}_{-}(x,y^*(x), d)}$ for any $d \in \mathbb{R}^{d_x}$.
If given $x \in \mathcal{B}(x^0,\epsilon)$, we have $p_j(x,y^*(x))=0$, $\lambda_j(x) = 0$ but this does not hold for all set $\mathcal{B}(x^0,\epsilon)$, then $j \in {J^{0}_{-}(x,y^*(x), d)}$ for any $d \in \mathbb{R}^{d_x}$.
Then, $$I_-^{\epsilon}(x^0) \subseteq  {J^{0}_{-}(x,y^*(x), d)} \cup J^{N}(x,y^*(x)),$$
for any $x \in \mathcal{B}(x^0,\epsilon)$ and any $d \in \mathbb{R}^{d_x}$.

\item [(3)] Consider $j \in I^{\epsilon}(x^0)$.
When $\epsilon>0$ is sufficiently small such that, there exists $x^{*} \in \mathcal{B}(x^0,\epsilon)$ such that $p_j(x^{*},y^*(x^{*}))=0$ and $\lambda_j(x^{*}) = 0$ for all $j \in I^{\epsilon}(x^0)$.
Then, we have $$I^{\epsilon}(x^0) \subseteq J^{0}({x^*,y^*(x^{*})}).$$ Then, for any $S \subseteq  I^{\epsilon}(x^0)$, $S \subseteq J^{0}({x^*,y^*(x^{*})})$. 
Also, $${J^{+}(x^*,y^*(x^*))} \subseteq I_+^{\epsilon}(x^0).$$
This is true because that, assume that $j \in {J^{+}(x^*,y^*(x^*))}$ and $j \not\in I_+^{\epsilon}(x^0)$, then 
$j \in I^{\epsilon}(x^0)$, then $p_j(x^{*},y^*(x^{*}))=0$ and $\lambda_j(x^{*}) = 0$, and $j \in {J^{0}(x^*,y^*(x^*))}$ which contradicts  $j \in {J^{+}(x^*,y^*(x^*))}$.

\end{itemize}

From (1) and (3), we have that, there exists $x^{*} \in \mathcal{B}(x^0,\epsilon)$, for any $d \in \mathbb{R}^{d_x}$,
\begin{equation}
\label{eq222}
J^{+}(x^*, y^*(x^*))  \subseteq I_+^{\epsilon}(x^0)  \subseteq  {J^{+}(x^*,y^*(x^*))}\cup {J_+^{0}(x^*,y^*(x^*),d)}.
\end{equation}
From Lemma \ref{prop1} and the proof of the lemma shown in \cite{dempe1998implicit,malanowski1985differentiability}, for any set $A \subseteq  J^{0}({x^*,y^*(x^{*})})$, we can find a direction $d$ such that $A = {J^{0}_{+}(x^*,y^*(x^*), d)}$. Here, we can let $A= S \cup I_+^{\epsilon}(x^0) \setminus J^{+}(x^*,y^*(x^*)) $. From \eqref{eq222}, $A \subseteq J^{0}({x^*,y^*(x^{*})})$. 
Then, $S \cup I_+^{\epsilon}(x^0) \setminus J^{+}(x^*,y^*(x^*)) ={J^{0}_{+}(x^*,y^*(x^*), d)}$.
Then, for any $g^S \in G\left(x^{0},\epsilon\right)$, 
there exists $x^{*}\in \mathcal{B}(x^0,\epsilon)$ and $d \in \mathbb{R}^{d_x}$ such that
$$I^{\epsilon}_{+}(x^{0}) \cup S ={J^{0}_{+}(x^{*},y^*(x^{*}), d)} \cup J^{+}(x^{*},y^*(x^{*})).$$ 
The assumption of (b) is shown.

From (2), if $j \in I_-^{\epsilon}(x^0)$, for any $x \in \mathcal{B}(x^0,\epsilon)$ and any $d \in \mathbb{R}^{d_x}$, $j \in  {J^{0}_{-}(x,y^*(x), d)} \cup J^{N}(x,y^*(x))$. Then, $j \not\in {J^{0}_{+}(x,y^*(x), d)} \cup {J^{+}(x,y^*(x))} $. Then, for any set $C \subseteq I_-^{\epsilon}(x^0)$, $C \cap {(J^{0}_{+}(x,y^*(x), d)} \cup {J^{+}(x,y^*(x)))}$ is empty.
Moreover, from (1), $I_+^{\epsilon}(x^0) \subseteq {J^{0}_{+}(x,y^*(x), d)} \cup {J^{+}(x,y^*(x))}$.
Then,
for any $x \in \mathcal{B}(x^0,\epsilon)$ and any $d \in \mathbb{R}^{d_x}$, there exists
$g^S \in G\left(x^{0},\epsilon\right)$ such that 
$$I^{\epsilon}_{+}(x^{0}) \cup S ={J^{0}_{+}(x,y^*(x), d)} \cup J^{+}(x,y^*(x));$$ 
The assumption of (a) is shown. Then, the proof of the part (i) of the proposition is done.

(\romannumeral2) From the definition of the Clarke $\epsilon \text {-subdifferential}$, it is easy to see
$\bar{\partial}_{\epsilon} \Phi(x^0)=\operatorname{conv} \{\nabla \Phi(x^{\prime}): x^{\prime} \in \mathcal{B}(x^0,\epsilon), y^* $ is differentiable at  $x^{\prime}\}$. Then, for any $g_0 \in \bar{\partial}_{\epsilon} \Phi(x^0)$, there exist $g_1$, $g_2 \in \{\nabla \Phi(x^{\prime}): x^{\prime} \in \mathcal{B}(x^0,\epsilon), y^* $ is differentiable at  $x^{\prime}\}$, such that $g_0 = \theta g_1 + (1-\theta) g_2$ and $0 \leq \theta \leq 1$.
From (\romannumeral1), we can find $g_1^{\prime}$, $g_2^{\prime} \in G\left(x^{0},\epsilon\right)$, such that $\| g_1-g_1^{\prime} \|< o(\epsilon)$ and $\| g_2-g_2^{\prime} \|< o(\epsilon)$. Then, 
$$
g_0^{\prime} = \theta g_1^{\prime} + (1-\theta) g_2^{\prime} \in \operatorname{conv} G(x^{0},\epsilon),$$ and for any $z \in \mathbb{R}^{d_x}$, 
$$
\begin{aligned}
| \|z- g_0\|-\|z- g_0^{\prime}\| | &= | \|z- (\theta g_1 + (1-\theta) g_2) \|-\|z- (\theta g_1^{\prime} + (1-\theta) g_2^{\prime})\| | \\ &\leq | \|\theta (z- g_1) + (1-\theta) (z-g_2) \|-\|(\theta (z- g_1^{\prime})) + (1-\theta) (z-g_2^{\prime})\| | \\
&\leq  \|\theta (z- g_1) + (1-\theta) (z-g_2) -(\theta (z- g_1^{\prime})) - (1-\theta) (z-g_2^{\prime})\| \\ 
&\leq \|\theta (z- g_1) -(\theta (z- g_1^{\prime})) \| +\|(1-\theta) (z-g_2) - (1-\theta) (z-g_2^{\prime})\| \\ &< o(\epsilon).
\end{aligned}
$$
Let $l= d(z, \bar{\partial}_{\epsilon} \Phi(x^0))$. Then, for any $\sigma >0$, there exists $g_0 \in \bar{\partial}_{\epsilon} \Phi(x^0))$ such that
$l \leq \|z-g_0 \| < l+\sigma$. Then, there exists $g_0^{\prime} \in  \operatorname{conv} G(x^{0},\epsilon))$ with $| \|z- g_0\|-\|z- g_0^{\prime}\| | < o(\epsilon)$.
Then, $l-o(\epsilon) \leq \|z-g_0^{\prime} \| < l+\sigma+o(\epsilon)$. Since $\|z-g_0^{\prime} \| \geq d(z, \operatorname{conv} G(x^{0},\epsilon))$, we have $d(z, \operatorname{conv} G(x^{0},\epsilon)) = \inf \{\|z-a\| \mid a \in \operatorname{conv} G(x^{0},\epsilon)\}< l+\sigma+o(\epsilon)$.
Then, $d(z, \operatorname{conv} G(x^{0},\epsilon)) < d(z, \bar{\partial}_{\epsilon} \Phi(x^0))+\sigma+o(\epsilon)$ for any $\sigma>0$, i.e., $d(z, \operatorname{conv} G(x^{0},\epsilon)) \leq d(z, \bar{\partial}_{\epsilon} \Phi(x^0))+o(\epsilon)$.

Similar to the proof of $d(z, \operatorname{conv} G(x^{0},\epsilon)) \leq d(z, \bar{\partial}_{\epsilon} \Phi(x^0))+o(\epsilon)$, from (\romannumeral1), we can also get  $d(z, \bar{\partial}_{\epsilon} \Phi(x^0)) \leq d(z, \operatorname{conv} G(x^{0},\epsilon)) +o(\epsilon)$. Then, $| d(z, \operatorname{conv} G(x^{0},\epsilon)) - d(z, \bar{\partial}_{\epsilon} \Phi(x^0))|< o(\epsilon)$. Then, the proof of the part (ii) of Proposition \ref{prop4} is done.

\end{proof}

Note that Proposition \ref{prop3} is included in Proposition \ref{prop4}, then Proposition \ref{prop3} is shown.

\subsection{Proof of Theorem \ref{th_converge}}

We start part (\romannumeral1) of the proof of Theorem \ref{th_converge} from Lemma \ref{lemma3}, and show part (\romannumeral2) of the proof from Lemma \ref{lemma4}.

\begin{lemma}
\label{lemma3}
Let $\emptyset \neq C \subset \mathbb{R}^{n}$ be compact and convex and $\beta \in(0,1)$ and $0 \notin C$.
If $u, v \in C$ and $\|u\| = d(0, C)$, then $\langle v, u\rangle \geq \|u\|^{2}$.
\end{lemma}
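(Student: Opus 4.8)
The plan is to recognize that the hypothesis $\|u\| = d(0,C)$ says precisely that $u$ is the element of $C$ of smallest norm, i.e.\ $u$ is the metric projection of the origin onto the closed convex set $C$. The target inequality $\langle v, u\rangle \ge \|u\|^2$ is then exactly the first-order (variational) optimality condition for this projection, and I would derive it directly from convexity rather than quoting an external projection theorem. Note that the parameter $\beta \in (0,1)$ plays no role in this statement; it is presumably carried along for use in the surrounding argument.

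First I would fix an arbitrary $v \in C$ and, using convexity of $C$, observe that the whole segment $u_t \triangleq (1-t)u + t v = u + t(v-u)$ lies in $C$ for every $t \in [0,1]$. Since $\|u\| = d(0,C) = \inf_{w \in C}\|w\|$ is the minimum norm over $C$, we have $\|u_t\|^2 \ge \|u\|^2$ for all such $t$.

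Next I would expand the squared norm,
\[
\|u_t\|^2 = \|u\|^2 + 2t\,\langle u,\, v - u\rangle + t^2\,\|v-u\|^2,
\]
and subtract the inequality $\|u\|^2 \le \|u_t\|^2$ to obtain $2t\,\langle u, v-u\rangle + t^2\|v-u\|^2 \ge 0$ for all $t \in (0,1]$. Dividing by $t > 0$ gives $2\,\langle u, v-u\rangle + t\,\|v-u\|^2 \ge 0$, and letting $t \to 0^+$ yields $\langle u, v-u\rangle \ge 0$, that is, $\langle v, u\rangle \ge \langle u, u\rangle = \|u\|^2$, which is the claim.

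The argument is short, and the only point requiring any care is the passage to the limit: I would keep $t$ strictly positive before dividing so that the quadratic term $t\|v-u\|^2$ vanishes cleanly as $t \to 0^+$. Existence of the minimizing $u$ is not an issue here, since it is supplied by hypothesis (and would in any case follow from compactness of $C$). Thus the "main obstacle" is essentially notational bookkeeping rather than any genuine difficulty.
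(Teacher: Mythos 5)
Your proof is correct and uses essentially the same idea as the paper: both exploit convexity of $C$ by considering the segment from $u$ to $v$ and the minimality of $\|u\|$ over $C$. The paper phrases it as a terse contradiction (asserting that $\langle v,u\rangle < \|u\|^2$ would yield a convex combination of smaller norm), whereas you carry out the explicit expansion of $\|u+t(v-u)\|^2$ and pass to the limit $t\to 0^+$, which in fact supplies the computation the paper leaves implicit.
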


\begin{proof}
Suppose that $\|u\| = d(0, C)$ and $\langle v, u\rangle < \|u\|^{2}$. Then, there exists $0<\theta<1$ such that $w=\theta v + (1-\theta) u$ and $\|w\|<\|u\|$. Then, $w \in C$ since $C$ is convex. Then $\|u\| \ne d(0, C)$, which has contradiction.
\end{proof}

\begin{proof}[Proof of part (\romannumeral1) of Theorem \ref{th_converge}]

Firstly, consider $y^{*}$ is differentiable at $x^k$. We have $g^{k}= \min \{ \|g\|: g \in \operatorname{conv} G(x^{k},\epsilon_k) \}$ and $\|g^k\| >0$.
We have $ \operatorname{conv} G(x^{k},\epsilon_k) $ is closed, bounded and convex, then is convex and compact.
Since $ \operatorname{conv} G(x^{k},\epsilon_k) $ is closed, $g^k \in \operatorname{conv} G(x^{k},\epsilon_k) $ and $\|g^k \|= d(0, \operatorname{conv} G(x^{k},\epsilon_k))$. 
By Lemma \ref{lemma3}, $\langle v, g_k\rangle \geq \|g^k\|^{2}$ for any $v \in \operatorname{conv} G(x^{k},\epsilon_k)$.
Since $y^{*}$ is differentiable at $x^k$, we have $\nabla \Phi(x^k) \in \operatorname{conv} G(x^{k},\epsilon_k)$.  Then, $\langle \nabla \Phi(x^k), g_k\rangle \geq \|g^k\|^{2}$. Since $y^{*}$ is differentiable at $x^k$, $\nabla \Phi$ is differentiable at $x^k$, then $\Phi$ is twice-differentiable at $x^k$.
Then, when $t$ is sufficiently small,
$$
\begin{aligned}
\Phi(x^{k}-t g^{k})=& \Phi(x^{k})-t \langle \nabla \Phi(x^{k}), g^{k} \rangle +o(t^2) \\
\leq&\Phi(x^{k}) - t\|g^{k}\|^{2} +o(t^2) \\
<& \Phi(x^{k}) -\beta t\|g^{k}\|^{2},
\end{aligned}
$$
for any $0<\beta<1$. Then, the line search has a positive solution $t_k$. 

Secondly, consider $y^{x}$ is not differentiable at $x^k$. From Lemma \ref{prop1}, Proposition \ref{prop3} and the proof of Proposition \ref{prop3} (also shown in \cite{dempe1998implicit,malanowski1985differentiability}), $\Phi(x)$ is a piecewise function by finite number of twice-differentiable functions $\{\Phi_i(x)\}_{1 \leq i \leq m}$, and for each $1 \leq i \leq m$, $\nabla\Phi_i(x^k) \in \operatorname{conv} G(x^{k},\epsilon_k)$.
Then, when $t$ is sufficiently small, for all $1 \leq i \leq m$,
$$
\begin{aligned}
\Phi_i(x^{k}-t g^{k})=& \Phi_i(x^{k})-t \langle \nabla \Phi_i(x^{k}), g^{k} \rangle +o(t^2) \\
\leq &\Phi_i(x^{k}) - t\|g^{k}\|^{2} +o(t^2) \\
<& \Phi_i(x^{k}) -\beta t\|g^{k}\|^{2}\\
=& \Phi(x^{k}) -\beta t\|g^{k}\|^{2},
\end{aligned}
$$
for any $0<\beta<1$.
Then, we have $\Phi(x^{k}-t g^{k})<\Phi(x^{k}) -\beta t\|g^{k}\|^{2}$. Then, the line search has a positive solution $t_k$. 

\end{proof}

\begin{lemma}
\label{lemma4}
If $\liminf\limits_{k \rightarrow \infty}{\max \left\{\|x^k-x\|,\|g^k\|,\epsilon_k\right\}}=0$ with $| \|g^k\|-d(0, \bar{\partial}_{\epsilon_k} \Phi(x^k))|<o(\epsilon_k)$ for sufficiently small $\epsilon_k$, then $0 \in \bar{\partial} \Phi(x)$. 
\end{lemma}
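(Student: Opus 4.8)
The plan is to pass to the subsequence on which the $\liminf$ is attained and then exploit the upper semicontinuity of the Clarke subdifferential. First I would record that $\Phi$ is locally Lipschitz: by part (\romannumeral2) of Theorem \ref{th2} the map $z(x)$, and in particular $y^{*}(x)$, is locally Lipschitz, and since $f$ is twice continuously differentiable by Assumption \ref{a1}, the composition $\Phi(x)=f(x,y^{*}(x))$ is locally Lipschitz. Hence $\bar{\partial}\Phi$ and $\bar{\partial}_{\epsilon}\Phi$ are well defined with nonempty compact convex values, and Definition \ref{defc} applies. The hypothesis $\liminf_{k\to\infty}\max\{\|x^k-x\|,\|g^k\|,\epsilon_k\}=0$ then furnishes a subsequence $\{k_j\}$ along which $x^{k_j}\to x$, $\|g^{k_j}\|\to 0$, and $\epsilon_{k_j}\to 0$ simultaneously.

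Next I would convert the vanishing of $\|g^{k_j}\|$ into vanishing of the distance from the origin to the approximate subdifferential. Applying the assumed bound $|\,\|g^{k}\|-d(0,\bar{\partial}_{\epsilon_{k}}\Phi(x^{k}))\,|<o(\epsilon_k)$ along the subsequence, where $\epsilon_{k_j}$ is eventually small enough for the bound to hold, and using $\|g^{k_j}\|\to 0$ together with $o(\epsilon_{k_j})\to 0$, I obtain $d(0,\bar{\partial}_{\epsilon_{k_j}}\Phi(x^{k_j}))\to 0$. Now fix any $\delta>0$. Because $x^{k_j}\to x$ and $\epsilon_{k_j}\to 0$, for all sufficiently large $j$ we have $\mathcal{B}(x^{k_j},\epsilon_{k_j})\subseteq\mathcal{B}(x,\delta)$, so directly from the definition $\bar{\partial}_{\epsilon_{k_j}}\Phi(x^{k_j})=\operatorname{conv}\{\bar{\partial}\Phi(x'):x'\in\mathcal{B}(x^{k_j},\epsilon_{k_j})\}\subseteq\bar{\partial}_{\delta}\Phi(x)$. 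This inclusion yields $d(0,\bar{\partial}_{\delta}\Phi(x))\le d(0,\bar{\partial}_{\epsilon_{k_j}}\Phi(x^{k_j}))$, and letting $j\to\infty$ forces $d(0,\bar{\partial}_{\delta}\Phi(x))=0$; since $\bar{\partial}_{\delta}\Phi(x)$ is compact, hence closed, I conclude $0\in\bar{\partial}_{\delta}\Phi(x)$ for every $\delta>0$.

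Finally I would take $\delta\to 0$ using the shrinking identity $\bigcap_{\delta>0}\bar{\partial}_{\delta}\Phi(x)=\bar{\partial}\Phi(x)$. This rests on the upper semicontinuity of the Clarke subdifferential of a locally Lipschitz function: for every $\eta>0$ there is $\delta>0$ with $\bar{\partial}\Phi(x')\subseteq\bar{\partial}\Phi(x)+\mathcal{B}(0,\eta)$ for all $x'\in\mathcal{B}(x,\delta)$, and since $\bar{\partial}\Phi(x)+\mathcal{B}(0,\eta)$ is convex, $\bar{\partial}_{\delta}\Phi(x)=\operatorname{conv}\{\bar{\partial}\Phi(x'):x'\in\mathcal{B}(x,\delta)\}\subseteq\bar{\partial}\Phi(x)+\mathcal{B}(0,\eta)$. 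As $0\in\bar{\partial}_{\delta}\Phi(x)$ for all $\delta$, this gives $0\in\bar{\partial}\Phi(x)+\mathcal{B}(0,\eta)$ for every $\eta>0$, and closedness of $\bar{\partial}\Phi(x)$ delivers $0\in\bar{\partial}\Phi(x)$, which is the claim.

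I expect the third step to be the main obstacle: carefully invoking upper semicontinuity and compact-valuedness of the Clarke subdifferential to justify $\bigcap_{\delta>0}\bar{\partial}_{\delta}\Phi(x)=\bar{\partial}\Phi(x)$, rather than the routine subsequence extraction or the ball-inclusion bookkeeping in the second step. A secondary technical point is ensuring the error term $o(\epsilon_k)$ is controlled along the subsequence so that $d(0,\bar{\partial}_{\epsilon_{k_j}}\Phi(x^{k_j}))\to 0$ genuinely follows from $\|g^{k_j}\|\to 0$.
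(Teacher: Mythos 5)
Your proof is correct and follows essentially the same route as the paper's: extract the subsequence realizing the $\liminf$, use the hypothesis to get $d(0,\bar{\partial}_{\epsilon_{k_j}}\Phi(x^{k_j}))\to 0$, and conclude $0\in\bar{\partial}\Phi(x)$. The only difference is that the paper compresses the final implication into one clause (``there exists $h^k\in\bar{\partial}_{\epsilon_k}\Phi(x^k)$ with $\|h^k\|\to 0$, which implies $0\in\bar{\partial}\Phi(x)$''), whereas you correctly spell out the missing justification via the ball inclusion $\bar{\partial}_{\epsilon_{k_j}}\Phi(x^{k_j})\subseteq\bar{\partial}_{\delta}\Phi(x)$ and the upper semicontinuity identity $\bigcap_{\delta>0}\bar{\partial}_{\delta}\Phi(x)=\bar{\partial}\Phi(x)$.
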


\begin{proof}
If $\liminf\limits_{k \rightarrow \infty}{\max \left\{\|x^k-x\|,\|g^k\|,\epsilon_k\right\}}=0$, then the sequence $\{x^{k}\}$ has a subsequence $\{x^{k_n}\}$ such that $x^{k_n} \rightarrow x$, $\|g^k\| \rightarrow 0$ and $\epsilon_k \rightarrow 0$. Since $| \|g^k\|-d(0, \bar{\partial}_{\epsilon} \Phi(x^k))|<o(\epsilon_k)$ and $\epsilon_k \rightarrow 0$, we have $d(0, \bar{\partial}_{\epsilon_k} \Phi(x^k)) \rightarrow 0$, i.e., there exists a sequence $\{h^k\}$ with $h^k \in \bar{\partial}_{\epsilon_k} \Phi(x^k)$ and $\|h^k\| \rightarrow 0$, which implies $0 \in \bar{\partial} \Phi(x)$.
\end{proof}

\begin{proof}[Proof of parts (\romannumeral2)-(\romannumeral4) of Theorem \ref{th_converge}]
(\romannumeral2.a) From part (\romannumeral1) of Theorem \ref{th_converge}, the line search has a non-zero solution $t_k$, we have 
$\Phi(x^{k+1})< \Phi(x^{k}) -\beta t_k\|g^{k}\|^{2}$, Then,
$$\sum_{k=1}^{\infty} \beta t_k\|g^{k}\|^{2} < \sum_{k=1}^{\infty} \Phi(x^{k})-\Phi(x^{k+1}) \leq \Phi(x^{1})-a<\infty,$$
where $a$ is a lower bound of $\Phi(x)$ on $\mathbb{R}^{d_x}$.
Since $x^{k+1}=x^{k}-t g^{k}$, we have 
\begin{equation}
\label{smallinf}
    \sum_{k=1}^{\infty} \|x^{k+1}-x^{k} \| \|g^k \|<\infty.
\end{equation}

(\romannumeral2.b) We now show $\lim_{k\rightarrow\infty} \nu_{k} = 0$ and $\lim_{k\rightarrow\infty} \epsilon_{k} = 0$. Assume this does not hold. There are $k_1$, $\hat{\nu}$ and $\hat{\epsilon}$, such that ${\nu}_k=\hat{\nu}$, ${\epsilon}_k=\hat{\epsilon}$ and $\|g^k\|>\hat{\nu}$ for all $k>k_1$.
Then, by \eqref{smallinf}, we have $\sum_{k=1}^{\infty} \|x^{k+1}-x^{k} \|<\infty$, and then $x^k$ converges to a point $\bar{x}$ and $t_k\|g^{k}\| \rightarrow 0$ which means $t_k \rightarrow 0$. 

Similar to the proof of part (i) of Theorem \ref{th_converge}, when $t$ is sufficiently small,
\begin{equation}
\label{t_solution}
\begin{aligned}
\Phi(x^{k}-t g^{k})=& \Phi(x^{k})-t \langle \nabla \Phi_i(x^{k}), g^{k} \rangle +\alpha t^2 \\
\leq&\Phi(x^{k}) - t\|g^{k}\|^{2} +\alpha t^2 \\
<& \Phi(x^{k}) -\beta t\|g^{k}\|^{2},
\end{aligned}
\end{equation}
where $\Phi(x)$ is a piecewise function by finite number of twice-differentiable functions $\{\Phi_i(x)\}_{1 \leq i \leq m}$, and $\alpha$ is the upper bound of $\{\| \nabla^2\Phi_i(x)\| \}_{1 \leq i \leq m}$ on the set $\operatorname{conv} \{x^k\}$.
Since $x^k \rightarrow \bar{x}$, there exists a bounded and closed set $A$ such that $\operatorname{conv} \{x^k\} \subset A$. Since $\nabla^2\Phi_i(x)$ is continuous on the compact set $A$, $\| \nabla^2\Phi_i(x) \|$ is bounded.
Then, the upper bound $\alpha$ exists. 

From \eqref{t_solution}, there exists $t_0$ such that, for all $t<t_0$ and 
$- t\|g^{k}\|^{2} +\alpha t^2 
< -\beta t\|g^{k}\|^{2}$, i.e., $t<\frac{(1-\beta)\|g^{k}\|^{2}}{\alpha}$, we have $t$ satisfies the inequality $\Phi(x^{k}-t g^{k})< \Phi(x^{k}) -\beta t\|g^{k}\|^{2}$.
Since $\|g^k\|>\hat{\nu}$ for all $k>k_1$, then if $t<t_0$ and $t<\frac{(1-\beta)\hat{\nu}^{2}}{\alpha}$, we have $t<\frac{(1-\beta)\|g^{k}\|^{2}}{\alpha}$ for all $k>k_1$.
Then, the line search
$t_{k} = \sup \{t \in\{\gamma, \gamma^{2}, \ldots\}: \Phi(x^{k}-t g^{k})< \Phi(x^{k}) -\beta t\|g^{k}\|^{2}\}$ has solution $t_{k} \geq \gamma^{N}$ with $\gamma^{N} \leq \min\{t_0, \frac{(1-\beta)\hat{\nu}^{2}}{\alpha}\} <\gamma^{N-1}$ for all $k>k_1$.
Then, $t_k$ does not converge to $0$, which contradicts \eqref{smallinf}.
Then, $\lim_{k\rightarrow\infty} \nu_{k} = 0$ and $\lim_{k\rightarrow\infty} \epsilon_{k} = 0$.

(\romannumeral3)
Since $\lim_{k\rightarrow\infty} \nu_{k} = 0$, then for any arbitrary small $\nu>0$ and any $k_0$, there exists $k>k_0$ such that $\|g_k\|<\nu$, then $\liminf\limits_{k \rightarrow \infty}\|g^k\| = 0$.
Since $| \|g^k\|-d(0, \bar{\partial}_{\epsilon} \Phi(x^k))|<o(\epsilon_k)$ and $\epsilon_k \rightarrow 0$, we have $\liminf\limits_{k\rightarrow\infty} d(0, \bar{\partial} \Phi(x^k))= 0$.

(\romannumeral4)
We now have $\lim_{k\rightarrow\infty} \nu_{k} = 0$, $\lim_{k\rightarrow\infty} \epsilon_{k} = 0$,
and $\liminf\limits_{k \rightarrow \infty}\|g^k\| = 0$.
Let $\bar{x}$ is a limit point of $\{x^k\}$. If $x^k$ converges to $\bar{x}$, then $\|x^k-x\| \rightarrow 0$.
So we have
$\liminf\limits_{k \rightarrow \infty}{\max \left\{\|x^k-x\|,\|g^k\|,\epsilon_k\right\}}=0$.
By Proposition \ref{prop3}, if $\Phi$ is not differentiable on the ball $\mathcal{B}(x^0,\epsilon)$, for sufficiently small $\epsilon_k$,
we have $| \|g^k\|-d(0, \bar{\partial}_{\epsilon_k} \Phi(x^k))|= | d(0, \operatorname{conv} G(x^{k},\epsilon_k)) -d(0, \bar{\partial}_{\epsilon_k} \Phi(x^k))|<o(\epsilon_k)$. 
By Proposition \ref{prop2}, if $\Phi$ is not differentiable on the ball $\mathcal{B}(x^0,\epsilon)$, $g^k=\bar{\partial} \Phi(x^k))$, then $| \|g^k\|-d(0, \bar{\partial}_{\epsilon_k} \Phi(x^k))|<o(\epsilon_k)$.
Then, by Lemma \ref{lemma4}, we have $0 \in \bar{\partial} \Phi(x)$. 

Consider $x^k$ does not converge to $\bar{x}$.
Note that $\bar{x}$ is a limit point of $\{x^k\}$. 
Since $\lim_{k\rightarrow\infty} \epsilon_{k} = 0$, we just need to show $\liminf\limits_{k \rightarrow \infty}{\max \left\{\|x^k-x\|,\|g^k\|\right\}}=0$, then $0 \in \bar{\partial} \Phi(x)$ by Lemma \ref{lemma4}. Assume that  $\liminf\limits_{k \rightarrow \infty}{\max \left\{\|x^k-x\|,\|g^k\|\right\}}>0$.
Since $\bar{x}$ is a limit point of $\{x^k\}$, for any $\hat{v}>0$ and $\hat{k}$, there exists an infinite set $K(\hat{k},\hat{v}) \triangleq \{k: k\geq \hat{k}, \|x^k-\bar{x}\|<\hat{v} \}$.
Since $\liminf\limits_{k \rightarrow \infty}{\max \left\{\|x^k-x\|,\|g^k\|\right\}}>0$,
there exists $\hat{v}>0$ and $\hat{k}$ such that $\|g^k\|>\hat{v}$ for all $k \in K(\hat{k},\hat{v})$. From \eqref{smallinf}, we have $ \sum_{k \in K(\hat{k},\hat{v})} \|x^{k+1}-x^{k} \| \|g^k \|<\infty$, then $ \sum_{k \in K(\hat{k},\hat{v})} \|x^{k+1}-x^{k}\|<\infty$. Since $x^k$ does not converge to $\bar{x}$, there exists $\epsilon>0$ such that, for each $k \in K(\hat{k},\hat{v})$ with $\|x^{k}-\bar{x}\|\leq \hat{v}/2$, there exists $k^{\prime} >k$ satisfying $\|x^{k}-x^{k^{\prime}}\|>\epsilon$ and $x^i \in K(\hat{k},\hat{v})$ for all $k \leq i< k^{\prime}$.
Then, we have $\epsilon<\|x^{k^{\prime}}-x^{k}\| \leq \sum_{i=k}^{k^{\prime}-1}\|x^{i+1}-x^{i}\|$. Since $ \sum_{i \in K(\hat{k},\hat{v})} \|x^{i+1}-x^{i} \| <\infty$, we can select a sufficiently large $k$ with $ \sum_{i=k}^{\infty} \|x^{i+1}-x^{i} \| <\epsilon$. Then, there is a contradiction. Thus, $\liminf\limits_{k \rightarrow \infty}{\max \left\{\|x^k-x\|,\|g^k\|\right\}}=0$.

Thus, whether $x^k$ converges to the limit point $\bar{x}$ or not, we have $0 \in \bar{\partial} \Phi(x)$. The part (iii) of Theorem \ref{th_converge} is shown.

\end{proof}

\subsection{Proofs of Propositions \ref{prop2}}

\begin{proof}[Proof of Proposition \ref{prop2}]

(\romannumeral1) By part (\romannumeral2) of Theorem \ref{th2}, vector function $z(x) \triangleq [y^{*}(x)^{\top}, \lambda(x)^{\top}, \nu(x)^{\top}]^{\top}$ is locally Lipschitz.
Lemma \ref{lemma1} implies that $z(x)$ is Lipschitz continuous function on any compact set $S \in \mathbb{R}^{d_x}$.
Then, $y^{*}(x)$, $\lambda_j(x)$ and $p_j(x)$ for all $j$ are Lipschitz continuous function on any compact set.
Since $f$ and $p$ are continuously differentiable, then $\Phi(x)=f\left(x, y^{*}(x)\right)$ and $p(x,y^*(x)$ are locally Lipschitz. Thus, $\Phi(x)$ and $f\left(x, y^{*}(x)\right)$ are Lipschitz continuous function on any compact set. Then part (i) is proven.

For all $j \in J(x^0,\hat{y})$, either $\lambda_j(x^0) > l_{\lambda_j}(x^0,\epsilon) \epsilon$ or $p_j(x^0,y^*(x^0)) < -l_{p_j}(x^0,\epsilon) \epsilon$. Firstly, if $\lambda_j(x^0) > l_{\lambda_j}(x^0,\epsilon) \epsilon$, for any $x \in \mathcal{B}(x^0,\epsilon)$, $|\lambda_j(x)-\lambda_j(x^0)| \leq l_{\lambda_j}(x^0,\epsilon) \|x-x^0\| \leq l_{\lambda_j}(x^0,\epsilon) \epsilon$. Then, for any $x \in \mathcal{B}(x^0,\epsilon)$, $\lambda_j(x) > 0$, also $p_j(x^0,y^*(x^0)) =0$, which is obtained by the KKT conditions. Then $j \in J^{+}(x,y^*(x))$.
Secondly, if $p_j(x^0,y^*(x^0)) < -l_{p_j}(x^0,\epsilon) \epsilon$, then for any $x \in \mathcal{B}(x^0,\epsilon)$, $| p_j(x^0,y^*(x^0))- p_j(x,y^*(x))| \leq l_{p_j}(x^0,\epsilon) \|x-x^0 \| \leq l_{p_j}(x^0,\epsilon) \epsilon$. Then, for any $x \in \mathcal{B}(x^0,\epsilon)$, $p_j(x,y^*(x)) <0$, and $j \not\in J(x,y^*(x))$. Thus, for any $x \in \mathcal{B}(x^0,\epsilon)$, $J^{0}(x,y^*(x))$ is empty, which implies the SCSC holds at $y^{*}(x)$ w.r.t. $\lambda(x)$ for any $x \in \mathcal{B}(x^0,\epsilon)$. By part (\romannumeral3) of Theorem \ref{th2}, $y^{*}(x)$ is differentiable on $\mathcal{B}(x^0,\epsilon)$. Then $\Phi(x)$ is differentiable on $\mathcal{B}(x^0,\epsilon)$.

\end{proof}

\end{document}